\documentclass{article}





\usepackage[nonatbib, preprint]{neurips_2021}

\usepackage[utf8]{inputenc} 
\usepackage[T1]{fontenc}    

\usepackage{hyperref}       

\hypersetup{
    colorlinks=true,
    citecolor=yale,
    linkcolor=yale,
    filecolor=magenta,      
    urlcolor=Black,
}

\usepackage{url}            
\usepackage{booktabs}       
\usepackage{amsfonts}       
\usepackage{nicefrac}       
\usepackage{microtype}      
\usepackage[svgnames]{xcolor}
\definecolor{yale}{RGB}{14,77,146}

\usepackage{mathtools}
\usepackage{amsmath}
\usepackage{amssymb}
\usepackage{amsthm}
\usepackage{bbm}
\usepackage{dsfont}
\usepackage{bm}
\usepackage{graphicx}
\usepackage{subfigure}
\usepackage{theoremref}

\usepackage{tikz}  
\usetikzlibrary{automata, positioning}
\usepackage[ruled,linesnumbered]{algorithm2e}

\title{Navigating to the Best Policy\\ in Markov Decision Processes}

%

\author{%
  Aymen Al Marjani
  \\ UMPA, ENS Lyon\\ 
  Lyon, France \\ \texttt{aymen.al\_marjani@ens-lyon.fr} \\
   \AND
   Aurélien Garivier \\
   UMPA, CNRS, INRIA, ENS Lyon \\
   Lyon, France \\
   \texttt{aurelien.garivier@ens-lyon.fr} \\
   \And
  Alexandre Proutiere \\
   EECS and Digital Futures \\
   KTH Royal Institute of Technology, Sweden \\
   \texttt{alepro@kth.se} \\
}


\newtheorem{theorem}{Theorem}
\newtheorem{corollary}{Corollary}
\newtheorem{lemma}[theorem]{Lemma}
\newtheorem{remark}{Remark}

\newtheorem{proposition}[theorem]{Proposition}

\newtheorem{assumption}{Assumption}

\newcommand{\Alt}[1]{\operatorname{Alt}(#1)}
\newcommand{\spn}[1]{\operatorname{sp}(#1)}
\newcommand{\KL}[2]{\operatorname{KL}(#1,\,#2)}

\newcommand{\starQ}[1]{Q_{#1}^\star}
\newcommand{\starV}[1]{V_{#1}^\star}
\newcommand{\rvline}{\hspace*{-\arraycolsep}\vline\hspace*{-\arraycolsep}}

\DeclareMathOperator*{\argmax}{arg\,max}
\DeclareMathOperator*{\argmin}{arg\,min}

\DeclarePairedDelimiter\floor{\lfloor}{\rfloor}


\newcommand{\E}{\mathbb{E}}
\renewcommand{\P}{\mathbb{P}}

\newcommand{\acal}{\mathcal{A}}
\newcommand{\bcal}{\mathcal{B}}
\newcommand{\ccal}{\mathcal{C}}
\newcommand{\ecal}{\mathcal{E}}
\newcommand{\fcal}{\mathcal{F}}
\newcommand{\mcal}{\mathcal{M}}
\newcommand{\lcal}{\mathcal{L}}
\newcommand{\ocal}{\mathcal{O}}
\newcommand{\scal}{\mathcal{S}}
\newcommand{\zcal}{\mathcal{Z}}

\newcommand{\indicator}{\mathbbm{1}}

\newcommand{\Alg}{\mathbb{A}}


\newcommand{\bomega}{{\boldsymbol \omega}}
\newcommand{\bN}{{\boldsymbol N}}



\newcommand{\transpose}{^\mathsf{\scriptscriptstyle T}}
\newcommand{\norm}[1]{\left\lVert#1\right\rVert}
\newcommand{\kl}{\operatorname{kl}}

\renewcommand{\epsilon}{\varepsilon}

\begin{document}

\maketitle

\begin{abstract}
	We investigate the classical active pure exploration problem in Markov Decision Processes, where the agent sequentially selects actions and, from the resulting system trajectory, aims at identifying the best policy as fast as possible. We propose a problem-dependent lower bound on the average number of steps required before a correct answer can be given with probability at least $1-\delta$. We further provide the first algorithm with an instance-specific sample complexity in this setting. This algorithm addresses the general case of communicating MDPs; we also propose a variant with a reduced exploration rate (and hence faster convergence) under an additional ergodicity assumption. 
	This work extends previous results relative to the \emph{generative setting}~\cite{pmlr-v139-marjani21a}, where the agent could at each step query the random outcome of any (state, action) pair. In contrast, we show here how to deal with the \emph{navigation constraints}, induced by the \emph{online setting}. Our analysis relies on an ergodic theorem for non-homogeneous Markov chains which we consider of wide interest in the analysis of Markov Decision Processes.
\end{abstract}

\section{Introduction}\label{sec:intro}

Somewhat surprisingly, learning in a Markov Decision Process is most often considered under the performance criteria of \emph{consistency} or \emph{regret minimization} (see e.g.~\cite{Sutton1998,csaba10,torCsaba2020} and references therein). Regret minimization (see e.g.~\cite{auerJaksch10,klucrl10}) is particularly relevant when the rewards accumulated during the learning phase are important. This is however not always the case: for example, when learning a game (whether Go, chess, Atari, or whatever), winning or losing during the learning phase does not really matter. One may intuitively think that sometimes getting into difficulty on purpose so as to observe unheard situations can significantly accelerate the learning process. Another example is the training of robot prototypes in the factory: a reasonably good policy is first searched, regardless of the losses incurred, that can serve as an initialization for a second phase regret-minimization mode that starts when the robot is deployed. It is hence also of great practical importance to study the sample complexity of learning, and to work on strategies that might improve, in this perspective, on regret minimizing algorithms. 

In this work, we are interested in the best policy identification (BPI) problem for infinite-horizon discounted MDPs. This framework was introduced by \cite{Fiechter1994EfficientRL} under the name of \textit{PAC-RL}. In BPI the algorithm explores the MDP until it has gathered enough samples to return an $\varepsilon$-optimal policy with probability at least $1-\delta$. Crucially, the algorithm halts at a \textit{random time step}, determined by a \textit{stopping rule} which guarantees that the probability of returning a wrong answer is less that $\delta$. The optimality of BPI algorithms is measured through their \textit{sample complexity}, defined as their expected stopping time. Best policy identification in MDPs has been mostly investigated under the lens of minimax-optimality. The minimax framework may be overly pessimistic by accounting for the worst possible MDP, whereas an algorithm with instance-specific guarantees can adapt its exploration procedure to the hardness of the MDP instance that it faces. Recently, a few works made the first attempts towards understanding the instance-specific sample complexity of reinforcement learning. However, they typically either make simplifying assumptions such as access to a generative model \cite{BESPOKE2019,pmlr-v139-marjani21a}, or restrict their attention to episodic MDPs \cite{wagenmaker2021noregret}. In practice however, the samples gathered rather correspond to a single, possibly infinite, trajectory of the system that we wish to control. This motivates our study of the full online setting where observations are only obtained by navigating in the MDP, that is by sequential choices of actions and following the transitions of the MDP.

{\bf Our contributions.} \cite{pmlr-v139-marjani21a} recently proposed an information-theoretical complexity analysis for MDPs in the case of access to a generative model. Here we extend their results to the online setting. Our main goal is to understand how the online learning scheme affects the sample complexity compared to the easier case where we have a generative model. A natural first step consists in understanding how the first order term $T^\star\log(1/\delta)$ changes. Thus we only focus on the asymptotic regime $\delta\to 0$. Our key contributions can be summarized as follows: 
\begin{itemize}
\item First, we adapt the lower bound of \cite{pmlr-v139-marjani21a} to the online setting (Proposition \ref{prop:LB2}). The new bound also writes as the value of a zero-sum two-player game between nature and the algorithm, where the loss function remains the same, but where the set of possible strategies for the algorithm is restricted to a subset $\Omega(\mcal)$ of  the simplex of dimension $SA-1$. We refer to the constraints defining $\Omega(\mcal)$ as the \emph{navigation constraints}.
\item We propose MDP-NaS, the first algorithm for the online setting\footnote{Before publication of this work, but after a preprint was available online, \cite{wagenmaker2021noregret} proposed another algorithm with instance-dependent guarantees.} with instance-dependent bounds on its sample complexity in the asymptotic regime $\delta \to 0$ (Theorem \ref{theorem:expectation}). A major challenge lies in the design of a sampling rule that guarantees that the sampling frequency of state-action pairs $\big(N_{sa}(t)/t\big)_{s,a}$ converges to some target oracle allocation $\omega^\star \in \Omega(\mcal)$. Indeed, since we can no longer choose the next state of the agent, the tracking procedure which was developed by \cite{garivier16a} for multi-armed bandits and used in \cite{pmlr-v139-marjani21a} for MDPs with a generative model can no longer be applied in our setting. We propose a new sampling rule which performs exploration according to a mixture of the uniform policy and a plug-in estimate of the \textit{oracle policy} (the policy whose stationary state-action distribution is $\omega^\star$) and prove that it satisfies the requirement above. The analysis of our sampling rule relies on an ergodic theorem for non-homogeneous Markov chains of independent interest (Proposition~\ref{prop:ergodic}).
\item We investigate, depending on the communication properties of the ground-truth instance $\mcal$, what is the minimal forced-exploration rate in our sampling rule that guarantees the consistency of the plug-in estimator of the oracle policy. Our findings imply that when $\mcal$ is ergodic, an exploration rate as low as $1/\sqrt{t}$ is sufficient. However, when $\mcal$ is only assumed to be communicating, one is obliged to resort to a much more conservative exploration rate of $t^{-\frac{1}{m+1}}$, where $m$ is a parameter defined in Lemma \ref{lemma:visits} that may scales as large as $S-1$ in the worst case. 
\item Finally, our stopping rule represents the first implementation of the Generalized Likelihood Ratio test for MDPs. Notably, we circumvent the need to solve the max-min program of the lower bound exactly, and show how an upper bound of the best-response problem, such as the one derived in \cite{pmlr-v139-marjani21a}, can be used to perform a GLR test. This improves upon the sample complexity bound that one obtains using the KL-Ball stopping rule of \cite{pmlr-v139-marjani21a} by at least a factor of $2$\footnote{Note that the stopping rule is independent of the sampling rule and thus can be used in both the generative and the online settings. Furthermore, one may even obtain an improved factor of $4$ by using a deviation inequality for the full distribution of (reward, next-state) instead of a union bound of deviation inequalities for each marginal distribution.}.
\end{itemize}

\subsection{Related work}

{\bf Minimax Best-Policy Identification.} BPI in the online setting has been investigated recently by a number of works with minimax sample complexity bounds. In the case of episodic MDPs \cite{pmlr-v132-kaufmann21a}, \cite{Menard2020FastAL} proposed algorithms that identify an $\varepsilon$-optimal policy at the initial state w.h.p. In contrast, in the case of infinite-horizon MDPs one is rather interested in finding a good policy at {\it every} state. Recent works provide convergence analysis for Q-learning \cite{li2020asynchronous} or policy gradient algorithms \cite{PCPG_NEURIPS2020}, \cite{Feng2021ProvablyCO}, \cite{Zanette2021CautiouslyOP}. Their results typically state that if the algorithm is fed with the appropriate hyperparameters, it can return an $\varepsilon$-optimal policy w.h.p. after collecting a polynomial number of samples. In practice, a pure exploration agent needs a stopping rule to determine when to halt the learning process. In particular, the question of how to tune the number of iterations of these algorithms without prior knowledge of the ground truth instance remains open.\footnote{Here we chose not to mention works that investigate the sample complexity in the PAC-MDP framework \cite{kakade2003sample}. Indeed, in this framework the sample complexity is rather defined as the the number of episodes where the algorithm does not play an $\varepsilon$-optimal policy. As explained in \cite{Domingues2021a}, this objective is closer to regret minimization than to pure exploration.}

{\bf Generative Model.}
A large body of literature focuses on the so-called {\it generative model}, where in each step, the algorithm may query a sample (i.e., observe a reward and a next-state drawn from the rewards and transitions distributions respectively) from any given state-action pair \cite{kearnsSingh98},\cite{KearnsMansour99}, \cite{EMM06}, \cite{azar2013minimax},\cite{Dietterich13}, \cite{Wang17}, \cite{Sidford18a}, \cite{BESPOKE2019}, \cite{pmlr-v125-agarwal20b}, \cite{li2020breaking}, \cite{li2021qlearning}, \cite{pmlr-v139-marjani21a}.

{\bf Instance-specific bounds.} Instance-optimal algorithms for Best Arm Identification in multi armed bandits (MDPs with one state) have been obtained independently by \cite{garivier16a},\cite{Russo16}. Here, we extend their information-theoretical approach to the problem of Best Policy Identification in MDPs. More recently, \cite{wagenmaker2021noregret} provided an algorithm for BPI in episodic MDPs with instance-specific sample complexity. A more detailed comparison with \cite{wagenmaker2021noregret} can be found in Section~\ref{sec:MOCA}.

{\bf Outline.} The rest of the paper is organized as follows: After introducing the setting and giving some notation and definitions in Section~\ref{sec:prelim}, we derive in Section~\ref{sec:lb} a lower bound on the time required by any algorithm navigating the MDP until it is able to identify the best policy with probability at least $1-\delta$. The algorithm is presented in Section~\ref{sec:alg}. Section~\ref{sec:analysis} contains our main results along with a sketch of the analysis. Finally, in Section~\ref{sec:MOCA} we compare our results with MOCA, the only other algorithm (to the best of our knowledge) with problem-dependent guarantees in the online setting. Most technical results and proofs are given in the appendix. 


\section{Setting and notation}\label{sec:prelim}

{\bf Discounted MDPs.} We consider infinite horizon MDPs with discount and finite state and action spaces ${\cal S}$ and ${\cal A}$. Such an MDP ${\cal M}$ is defined through its transition and reward distributions $p_{\cal M}(\cdot |s,a)$ and $q_{\cal M}(\cdot|s,a)$ for all $(s,a)$. For simplicity, $q_{\cal M}(\cdot|s,a)$ will denote the density of the reward distribution w.r.t. a positive measure $\lambda$ with support included in $[0,1]$. Specifically, $p_{\cal M}(s' |s,a)$ denotes the probability of transitioning to state $s'$ after playing action $a$ at state $s$ while $R(s,a)$ is the random instantaneous reward that is collected. Finally, $\gamma \in [0,1)$ is a discounting factor. We look for an optimal control policy $\pi^\star: \scal \to \acal$ maximizing the long-term discounted reward: $V_{\cal M}^\pi(s) = \mathbb{E}_{{\cal M},\pi}[\sum_{t=0}^\infty \gamma^t R(s_t, \pi(s_t))]$, where $\mathbb{E}_{{\cal M},\pi}$ denotes the expectation w.r.t the randomness in the rewards and the trajectory when the policy $\pi$ is used. Classically, we denote by $V_{\cal M}^\star$ the optimal value function of ${\cal M}$ and by $Q^\star_{\mcal}$ the optimal $Q$-value function of ${\cal M}$. $\Pi^\star(\mcal) = \{ \pi: \scal \to \acal,\ V_{\cal M}^\pi = V_{\cal M}^\star\}$ denotes the set of optimal policies for $\mcal$.

{\bf Problem-dependent quantities.} The sub-optimality gap of action $a$ in state $s$ is defined as $\Delta_{sa} = \starV{\mcal}(s) - \starQ{\mcal}(s,a)$. Let $\Delta_{\min}=\min_{s,a \neq \pi^\star(s)}\Delta_{sa}$ be the minimum gap and $\spn{\starV{\mcal}}=\max_{s,s'}|\starV{\mcal}(s)-\starV{\mcal}(s')|$ be the span of 
$\starV{\mcal}$. Finally, we introduce the variance of the value function in $(s,a)$ as $\mathrm{Var}_{p(s,a)}[\starV{\mcal}] = \mathrm{Var}_{s'\sim p_{\mcal}(.|s,a)}[\starV{\mcal}(s')]$. 

{\bf Active pure exploration with fixed confidence.} When ${\cal M}$ is unknown, we wish to devise a learning algorithm $\Alg$ identifying, from a single trajectory, an optimal policy as quickly as possible with some given level of confidence. Formally, such an algorithm consists of (i) a sampling rule, selecting in each round $t$ in an adaptive manner the action $a_t$ to be played; $a_t$ depends on past observations, it is ${\cal F}_t^\Alg$-measurable where  ${\cal F}_t^\Alg=\sigma(s_0,a_0,R_0\ldots,s_{t-1},a_{t-1},R_{t-1},s_t)$ is the $\sigma$-algebra generated by observations up to time $t$; (ii) a stopping rule $\tau_\delta$, a stopping time w.r.t. the filtration $({\cal F}_t^\Alg)_{t\ge 0}$, deciding when to stop collecting observations; (iii) a decision rule returning $\hat{\pi}^\star_{\tau_\delta}$ an estimated optimal policy. 

An algorithm $\Alg$ is $\delta$-Probably Correct ($\delta$-PC) over some set $\mathbb{M}$ of MDPs, if for any ${\cal M}\in \mathbb{M}$, it returns (in finite time) an optimal policy with probability at least $1-\delta$. In this paper, we aim to devise a $\delta$-PC algorithm $\Alg$ with minimal sample complexity $\mathbb{E}_{{\cal M},\Alg}[\tau_\delta]$. We make the following assumption.

\begin{assumption} We consider the set $\mathbb{M}$ of {\it communicating} MDPs with a unique optimal policy.
\label{assumption:uniqueness}
\end{assumption}
We justify the above assumption as follows. (i) We restrict our attention to the case where ${\cal M}$ is communicating, for otherwise, if it is Multichain there would be a non-zero probability that the algorithm enters a subclass of states from which there is no possible comeback. In this case it becomes impossible to identify the \textit{global} optimal policy\footnote{Unless we modify the objective to finding the optimal policy in this subchain.}. (ii) About the uniqueness of the optimal policy, treating the case of MDPs with multiple optimal policies, or that of $\epsilon$-optimal policy identification, requires the use of more involved Overlapping Hypothesis tests, which is already challenging in multi-armed bandits (MDPs with a single state) \cite{Garivier2019NonAsymptoticST}. We will analyze how to remove this assumption in future work.

{\bf Notation.} $\zcal\triangleq\scal\times\acal$ is the state-action space. $\Sigma \triangleq \{\omega \in \mathbb{R}_{+}^{S\times A}:\ \sum_{s,a} \omega_{s a} = 1\}$, the simplex of dimension $SA -1$. $N_{sa}(t)$ denotes the number of times the state-action pair $(s,a)$ has been visited up to the end of round $t$. We also introduce $N_s(t)=\sum_{a \in \acal} N_{sa}(t)$. Similarly, for a vector $\omega \in \Sigma$ we will denote  $\omega_s \triangleq \sum_{a \in \acal} \omega_{s a}$. For a matrix $A \in \mathbb{R}^{n\times m}$, the infinity norm is defined as $\norm{A}_\infty \triangleq \max_{1\leq i\leq n} \sum_{j=1}^m |a_{i,j}|$. The KL divergence between two probability distributions $P$ and $Q$ on some discrete space $\scal$ is defined as: $KL(P\| Q) \triangleq \sum_{s \in \scal} P(s)\log(\frac{P(s)}{Q(s)})$. For Bernoulli distributions of respective means $p$ and $q$, the KL divergence is denoted by $\kl (p,q)$. For distributions over $[0,1]$ defined through their densities $p$ and $q$ w.r.t. some positive measure $\lambda$, the KL divergence is:
$KL(p\| q)\triangleq \int_{-\infty }^{\infty }p(x)\log \left(\frac{p(x)}{q(x)}\right)\,\lambda(dx)$. $\mathbb{M}$ denotes the set of communicating MDPs with a unique optimal policy. For two MDPs ${\cal M}, {\cal M}'\in \mathbb{M}$, we say that $\mcal \ll \mcal'$ if for all $(s,a)$, $p_\mcal(\cdot|s,a)\ll p_{\mcal'}(\cdot|s,a)$ and $q_\mcal(\cdot|s,a)\ll q_{\mcal'}(\cdot|s,a)$. In that case, for any state (action pair) $(s,a)$, we define the KL divergence of the distributions of the one-step observations under ${\cal M}$ and ${\cal M}'$ when starting at $(s,a)$ as $\textrm{KL}_{\mcal|\mcal'}(s,a) \triangleq KL(p_\mcal(\cdot| s,a)\| p_{\mcal'}(\cdot|s,a)) +KL(q_\mcal(\cdot|s,a)\| q_{\mcal'}(\cdot|s,a))$. 


\section{Sample complexity lower bound}\label{sec:lb}

In this section, we first derive a lower bound on the expected sample complexity satisfied by any $\delta$-PC algorithm. The lower bound is obtained as the solution of a non-convex optimization problem, as in the case where the learner has access to a generative model \cite{pmlr-v139-marjani21a}. The problem has however additional constraints, referred to as the {\it navigation} constraints, due the fact that the learner has access to a single system trajectory.\\
The expected sample complexity of an algorithm $\Alg$ is 
$
\mathbb{E}_{\mcal,\Alg}[\tau_\delta]=\sum_{s,a} \mathbb{E}_{\mcal,\Alg}[N_{sa}(\tau_\delta)].
$
Lower bounds on the sample complexity are derived by identifying constraints that the various $N_{sa}(\tau_\delta)$'s need to satisfy so as to get a $\delta$-PC algorithm. We distinguish here two kinds of constraints:   

{\bf Information constraints.} These are constraints on $\mathbb{E}_{\mcal,\Alg}[N_{sa}(\tau_\delta)]$, so that the algorithm can learn the optimal policy with probability at least $1-\delta$. They are the same as those derived \cite{pmlr-v139-marjani21a} when the learner has access to a generative model and are recalled in Lemma \ref{lem1} in Appendix \ref{sec:appendix_LB}.

{\bf Navigation constraints.} Observations come from a single (but controlled) system trajectory which imposes additional constraints on the $N_{sa}(\tau_\delta)$'s. These are derived by just writing the Chapman-Kolmogorov equations of the controlled Markov chain (refer to Appendix \ref{sec:appendix_LB} for a proof).

\begin{lemma}\label{lem2} For any algorithm $\Alg$, and for any state $s$, we have:
\begin{align}\label{eq:balance}
\Big| \mathbb{E}_{\mcal,\Alg}[N_s(\tau_\delta)]- \sum_{s',a'} p_{\mcal} (s |s',a')  \mathbb{E}_{\mcal,\Alg}[N_{s'a'}({\tau_\delta})] \Big| \le 1.
\end{align}
\end{lemma}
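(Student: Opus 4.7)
My plan is to derive the balance identity as a Chapman--Kolmogorov relation for the controlled chain, wrapped in an optional-stopping argument. First, I would fix a target state $s \in \scal$ and introduce the martingale increment
$$D_u \triangleq \indicator\{s_{u+1} = s\} - p_{\mcal}(s \mid s_u, a_u),\qquad u \ge 0.$$
Since $a_u$ is $\mathcal{F}_u^\Alg$-measurable and $s_{u+1}$ has conditional distribution $p_{\mcal}(\cdot \mid s_u, a_u)$, the identity $\mathbb{E}_{\mcal,\Alg}[D_u \mid \mathcal{F}_u^\Alg] = 0$ holds, so $M_t \triangleq \sum_{u=0}^{t-1} D_u$ is a martingale adapted to $(\mathcal{F}_t^\Alg)_t$ with increments bounded by $1$.

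Next, I would apply optional stopping at $\tau_\delta$ --- either assuming $\mathbb{E}_{\mcal,\Alg}[\tau_\delta]<\infty$ (the lemma being otherwise vacuous, as $N_s(\tau_\delta)\le \tau_\delta$), or more robustly by stopping at $\tau_\delta \wedge T$ and letting $T\to\infty$ via monotone convergence on the nonnegative sums obtained after moving $p_{\mcal}(s\mid s_u,a_u)$ to the other side. This yields $\mathbb{E}_{\mcal,\Alg}[M_{\tau_\delta}]=0$, which, after Fubini and collecting terms according to the value of $(s_u,a_u)$, rewrites as
$$\mathbb{E}_{\mcal,\Alg}\!\left[\sum_{u=0}^{\tau_\delta-1}\indicator\{s_{u+1}=s\}\right] \;=\; \sum_{s',a'} p_{\mcal}(s\mid s',a')\,\mathbb{E}_{\mcal,\Alg}[N_{s'a'}(\tau_\delta)].$$

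Finally, to match the left-hand side of \eqref{eq:balance}, I would perform an index shift
$$\sum_{u=0}^{\tau_\delta-1}\indicator\{s_{u+1}=s\} \;=\; \sum_{u=1}^{\tau_\delta}\indicator\{s_u=s\},$$
whose right-hand side differs from $N_s(\tau_\delta)$ only through boundary terms at $u=0$ and/or $u=\tau_\delta$, and so by at most $1$ in absolute value (regardless of the precise indexing convention adopted for $N_s$). Taking expectations and applying the triangle inequality then gives \eqref{eq:balance}. The only delicate point will be this boundary bookkeeping producing the $\pm 1$; the martingale step itself is routine thanks to the bounded increments.
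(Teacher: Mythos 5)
Your proof is correct, and it reaches the balance equation by a route that differs in its decomposition from the paper's. You sum chronologically: you form the single time-indexed martingale $M_t=\sum_{u=0}^{t-1}\big(\indicator\{s_{u+1}=s\}-p_{\mcal}(s\mid s_u,a_u)\big)$, stop it at $\tau_\delta$ (or at $\tau_\delta\wedge T$ followed by monotone convergence), and then absorb the off-by-one between $\sum_{u=1}^{\tau_\delta}\indicator\{s_u=s\}$ and $N_s(\tau_\delta)$ into the $\pm1$ slack. The paper instead groups the transitions into $s$ by their source pair: it writes $N_{\tau}(s)=\indicator\{S_1=s\}+\sum_{s',a'}\sum_{u=1}^{N_{\tau-1}(s',a')}\indicator\{W_u=s\}$, where $W_u$ is the state observed after the $u$-th visit to $(s',a')$, and applies a Wald-type identity to each inner sum using the independence of $\{W_u=s\}$ from $\{N_{\tau-1}(s',a')>u-1\}$; its $\pm1$ comes from the initial-state indicator together with $\E[N_{\tau-1}]\le\E[N_\tau]\le\E[N_{\tau-1}]+1$. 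The two arguments are morally the same (Wald's identity is itself an optional-stopping statement), but yours has two small advantages: it avoids having to justify the independence claim for the reindexed excursion variables $W_u$ under an adaptive sampling rule, and your truncation-plus-monotone-convergence variant handles the case $\E_{\mcal,\Alg}[\tau_\delta]=\infty$ explicitly, which the paper's proof passes over in silence. The paper's version, on the other hand, makes the interpretation of each term as an expected flow $p_\mcal(s\mid s',a')\,\E[N_{s'a'}]$ more immediately visible. Both correctly localize the entire error to boundary bookkeeping worth at most $1$.
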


Putting all constraints together, we obtain the following sample complexity lower bound.


\begin{proposition}\label{prop:LB2}
Define the set of navigation-constrained allocation vectors: 
\begin{equation*}
 \Omega(\mcal) = \big\{\omega\in \Sigma :\ \forall s \in \scal,\ \omega_s = \underset{s',a'}{\sum} p_{\mcal}(s|s',a') \omega_{s' a'} \big\}.
\end{equation*}
Further define $\Alt{\mcal} = \{\mcal' \textrm{ MDP}: \mcal \ll \mcal' ,\Pi^\star(\mcal)\cap \Pi^\star(\mcal')=\emptyset\}$ the set of alternative instances. Then the expected sample complexity $\mathbb{E}_{\mcal,\Alg}[\tau_\delta]$ of any $\delta$-PC algorithm $\Alg$ satisfies:
\begin{equation}\label{eq:LBasympt}
\underset{\delta \to 0}{\liminf}\ \frac{\mathbb{E}_{\mcal,\Alg}[\tau]}{\log(1/\delta)} \geq T_o(\mcal),\quad \textrm{where}\quad T_o(\mcal)^{-1} = \sup\limits_{\omega \in \Omega(\mcal)} T(\mcal,\omega)^{-1}
\end{equation}
and
\begin{equation}
 T(\mcal,\omega)^{-1} =  \inf\limits_{\mcal' \in \mathrm{Alt}(\mcal)} \sum_{s,a}\omega_{s a}\textrm{KL}_{\mcal|\mcal'}(s,a).
\label{eq:SC_lower_bound}
\end{equation}

\end{proposition}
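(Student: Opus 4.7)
The plan is to derive the lower bound by combining the \emph{information} constraint of Lemma~\ref{lem1}, which provides the usual transportation inequality, with the \emph{navigation} constraint of Lemma~\ref{lem2}, and then to pass to the limit $\delta\to 0$ through a compactness argument on the simplex $\Sigma$. Fix any $\delta$-PC algorithm $\Alg$. For each $\mcal'\in\Alt{\mcal}$, Lemma~\ref{lem1} yields $\sum_{s,a}\mathbb{E}_{\mcal,\Alg}[N_{sa}(\tau_\delta)]\,\textrm{KL}_{\mcal|\mcal'}(s,a)\geq \kl(\delta,1-\delta)$. Introducing the empirical allocation $\omega^{(\delta)}_{sa}=\mathbb{E}_{\mcal,\Alg}[N_{sa}(\tau_\delta)]/\mathbb{E}_{\mcal,\Alg}[\tau_\delta]\in\Sigma$, dividing both sides by $\mathbb{E}_{\mcal,\Alg}[\tau_\delta]$ and taking the infimum over $\mcal'\in\Alt{\mcal}$, I would obtain
\[
\frac{\mathbb{E}_{\mcal,\Alg}[\tau_\delta]}{\kl(\delta,1-\delta)}\;\geq\; T(\mcal,\omega^{(\delta)}).
\]

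Next I would exploit Lemma~\ref{lem2}, which tells me that $\omega^{(\delta)}$ satisfies the defining equalities of $\Omega(\mcal)$ up to an additive error of at most $1/\mathbb{E}_{\mcal,\Alg}[\tau_\delta]$ per state. Applying Lemma~\ref{lem1} to any single alternative with finite KL already forces $\mathbb{E}_{\mcal,\Alg}[\tau_\delta]\to\infty$ as $\delta\to 0$, so this slack vanishes. By compactness of $\Sigma$, for any sequence $\delta_k\to 0$ I can extract a sub-sequence along which $\omega^{(\delta_k)}\to\omega^\star$, with $\omega^\star\in\Omega(\mcal)$ thanks to the vanishing slack. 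Since $\omega\mapsto T(\mcal,\omega)^{-1}$ is an infimum of linear forms in $\omega$, it is concave and upper semi-continuous on $\Sigma$, so $\omega\mapsto T(\mcal,\omega)$ is lower semi-continuous. Therefore
\[
\liminf_{k\to\infty} T(\mcal,\omega^{(\delta_k)}) \;\geq\; T(\mcal,\omega^\star) \;\geq\; T_o(\mcal),
\]
where the last step uses $T_o(\mcal)=\inf_{\omega\in\Omega(\mcal)}T(\mcal,\omega)$ together with $\omega^\star\in\Omega(\mcal)$. Combined with the elementary asymptotic $\kl(\delta,1-\delta)=\log(1/\delta)+O(1)$, this yields the announced bound along every converging subsequence, and hence along the $\liminf$ itself.

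The main obstacle is the compactness/continuity step: $\omega^{(\delta)}$ is not genuinely in $\Omega(\mcal)$ and I must transfer the approximate equalities cleanly to the limit point. The slack being exactly $1/\mathbb{E}_{\mcal,\Alg}[\tau_\delta]$ makes this almost automatic, since the very bound I am proving forces $\mathbb{E}_{\mcal,\Alg}[\tau_\delta]$ to diverge. The other delicate point is selecting the correct direction of semi-continuity: because we are lower-bounding the sample complexity, we need lower semi-continuity of $T(\mcal,\cdot)$ at the limit, which the infimum-of-linear-forms representation delivers for free. Lemma~\ref{lem1} itself carries over without modification from the generative-model analysis of \cite{pmlr-v139-marjani21a}, as its derivation only relies on a data-processing inequality applied to the single trajectory generated by $\Alg$.
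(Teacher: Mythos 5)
Your proposal is correct and follows essentially the same route as the paper: both combine the change-of-measure information constraints of Lemma~\ref{lem1} with the Chapman--Kolmogorov navigation constraints of Lemma~\ref{lem2}, normalize by the expected sample complexity, and pass to the limit $\delta\to 0$ where the $O(1)$ slack in the navigation constraints vanishes. Your compactness-plus-lower-semicontinuity treatment of the limit (working directly with the normalized allocation $\omega^{(\delta)}$ rather than with the value of the perturbed optimization program) is simply a more explicit rendering of the step the paper dispatches with the phrase ``by just renormalizing $n$ with $n/\log(1/\delta)$ and letting $\delta\to 0$.''
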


\begin{remark}
A common interpretation of change-of-measure lower bounds like the one above is the following: the optimization problem in the definition of $T_o(\mcal)$ can be seen as the value of a two-player zero-sum game between an algorithm which samples each state-action $(s,a)$ proportionally to $\omega_{sa}$ and an adversary who chooses an alternative instance $\mcal'$ that is difficult to distinguish from $\mcal$ under the algorithm's sampling strategy. This suggests that an optimal algorithm should play the optimal allocation that solves the optimization problem (\ref{eq:LBasympt}) and, as a consequence, rules out all alternative instances as fast as possible.
\end{remark}
\begin{remark}
Note that compared to the lower bound of the generative setting in \cite{pmlr-v139-marjani21a}, the only change is in the set of sampling strategies that the algorithm can play, which is no longer equal to the entire simplex.
\end{remark}

\subsection{Proxy for the optimal allocation and the characteristic time}
As shown in \cite{pmlr-v139-marjani21a}, even without accounting for the navigation constraints, computing the characteristic time $T_o(\mcal)$ and in particular, the optimal allocation leading to it is not easy. Indeed, the sub-problem corresponding to computing $T(\mcal,\omega)^{-1}$ is non-convex. This makes it difficult to design an algorithm that targets the optimal weights $\argmax_{\omega \in \Omega(\mcal)} T(\mcal,\omega)^{-1}$. Instead, we use a tractable upper bound of $T(\mcal,\omega)$ from \cite{pmlr-v139-marjani21a}:

\begin{lemma}(Theorem 1, \cite{pmlr-v139-marjani21a})\label{lem:proxy}
For all vectors $\omega\in \Sigma$, $T(\mcal, \omega)\leq U(\mcal,\omega)$, where\footnote{The exact definition of $H^\star$ is given in Appendix \ref{sec:appendix_LB}.}
\begin{equation}
U(\mcal,\omega) = \underset{(s,a): a\neq \pi^\star(s)}{\max}\ \frac{
H_{sa}}{\omega_{s a}} + \frac{ H^\star}{S\underset{s}{\min}\ \omega_{s,\pi^\star(s)}},
\label{eq:upper_bound}
\end{equation}
\begin{equation}
\hbox{and }
\begin{cases}
H_{sa} = \displaystyle{\frac{2}{ \Delta_{s a}^2}} + \displaystyle{\max\bigg(\frac{16\mathrm{Var}_{ p(s,a)}[\starV{\mcal}] }{\Delta_{s a}^2},\frac{6\spn{\starV{\mcal}}^{4/3}}{\Delta_{s a} ^{4/3}}\bigg)},\\
H^\star = \displaystyle{\frac{2S}{[\Delta_{\min} (1-\gamma) ]^2} + \ocal\bigg(\frac{S}{\Delta_{\min}^2(1-\gamma)^3} \bigg)}.
\end{cases}
\end{equation}
\label{lemma:upper_bound}
\end{lemma}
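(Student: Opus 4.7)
The plan is to establish the equivalent inequality $T(\mcal,\omega)^{-1} \geq 1/U(\mcal,\omega)$, that is, to show that for every alternative $\mcal' \in \Alt{\mcal}$ the sum $\sum_{s,a} \omega_{sa}\,\textrm{KL}_{\mcal|\mcal'}(s,a)$ is at least $1/U(\mcal,\omega)$. I would proceed by fixing $\mcal' \in \Alt{\mcal}$ and splitting cases based on whether the KL divergence is ``spent'' locally or globally. Since $\Pi^\star(\mcal) = \{\pi^\star\}$ and $\mcal' \in \Alt{\mcal}$, there exist a state $s_0$ and an action $a_0 \neq \pi^\star(s_0)$ with $\starQ{\mcal'}(s_0,a_0) \geq \starQ{\mcal'}(s_0,\pi^\star(s_0))$; any such ``disagreement pair'' must absorb a $\Delta_{s_0 a_0}$-worth of gap flipping, and the question is how this flipping is distributed between a change at $(s_0,a_0)$ and a change in the value function $\starV{\mcal'}$ itself.

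If the perturbation is local --- most of the flipping comes from the pair $(s_0,a_0)$ itself --- I would combine the identity $\starQ{\mcal'}(s_0,a_0) - \starQ{\mcal}(s_0,a_0) = [r_{\mcal'}-r_{\mcal}](s_0,a_0) + \gamma \langle p_{\mcal'}(\cdot|s_0,a_0)-p_{\mcal}(\cdot|s_0,a_0),\, \starV{\mcal'}\rangle$ with a Bernstein-type lower bound on $\textrm{KL}$ involving $\mathrm{Var}_{p(s,a)}[\starV{\mcal}]$ and $\spn{\starV{\mcal}}$, producing $\textrm{KL}_{\mcal|\mcal'}(s_0,a_0) \geq 1/H_{s_0 a_0}$; the two branches of the $\max$ defining $H_{sa}$ correspond to the variance-dominated and span-dominated regimes of this inequality. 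Otherwise, the perturbation has to induce a large global shift of $\starV{\mcal'}-\starV{\mcal}$, which can only come from perturbations supported on the optimal pairs $(s,\pi^\star(s))$; a simulation-lemma / Bellman-fixed-point perturbation argument then shows that at least one $\textrm{KL}_{\mcal|\mcal'}(s,\pi^\star(s))$ must exceed $1/H^\star$, and aggregating over the $S$ optimal pairs weighted by $\omega_{s,\pi^\star(s)}$ yields the second term $H^\star/(S\min_s \omega_{s,\pi^\star(s)})$.

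Combining the two regimes, $\sum_{s,a}\omega_{sa}\,\textrm{KL}_{\mcal|\mcal'}(s,a)$ is in each case bounded below by one of the two reciprocals, and hence by $1/U(\mcal,\omega)$ since $U$ is their sum. The main obstacle is the global regime: one must quantify how a KL budget spread across all optimal state-action pairs propagates to a shift of the discounted Bellman fixed point $\starV{\mcal'}-\starV{\mcal}$, while simultaneously keeping every non-optimal action suboptimal in $\mcal'$ (so that we truly are in the ``non-local'' regime and not back in the local one). This sensitivity analysis of the discounted fixed point under simultaneous transition perturbations is what produces the $1/(1-\gamma)^3$ scaling inside $H^\star$ and is the delicate part of the argument; the local regime, by contrast, reduces essentially to a sharpened Pinsker inequality applied at a single state-action pair.
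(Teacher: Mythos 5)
This lemma is not proved in the paper at all: it is imported verbatim as Theorem~1 of \cite{pmlr-v139-marjani21a}, and the only related material here is the definition $H^\star = S(T_3(\mcal)+T_4(\mcal))$ in Appendix~\ref{sec:appendix_LB}. So the comparison has to be against the proof in that reference, and your sketch does follow its strategy: decompose $\Alt{\mcal}$ according to the disagreement pair $(s_0,a_0)$ with $a_0\neq\pi^\star(s_0)$, observe that flipping the gap $\Delta_{s_0a_0}$ forces either a local change at $(s_0,a_0)$ or a global shift of the value function, use a Bernstein-type (variance/span) lower bound on the KL to convert the local change into $\textrm{KL}_{\mcal|\mcal'}(s_0,a_0)\gtrsim 1/H_{s_0a_0}$, handle the global shift through a sensitivity analysis of the discounted fixed point restricted to the optimal pairs, and conclude via $1/U \le \min$ of the two reciprocals since $U$ is their sum. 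That last logical step and the identification of the two branches of the $\max$ in $H_{sa}$ with the variance- and span-dominated regimes are both correct.

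Two points deserve attention. First, a quantitative one: as phrased, your global regime only establishes that \emph{some} optimal pair satisfies $\textrm{KL}_{\mcal|\mcal'}(s,\pi^\star(s))\ge 1/H^\star$, which yields $\sum_s \omega_{s,\pi^\star(s)}\textrm{KL}_{\mcal|\mcal'}(s,\pi^\star(s)) \ge \min_s\omega_{s,\pi^\star(s)}/H^\star$ --- a factor $S$ short of the stated term $S\min_s\omega_{s,\pi^\star(s)}/H^\star$. What is actually needed (and what $H^\star=S(T_3+T_4)$ is built for) is the aggregate bound $\sum_s \textrm{KL}_{\mcal|\mcal'}(s,\pi^\star(s)) \ge S/H^\star = 1/(T_3(\mcal)+T_4(\mcal))$, after which one factors out $\min_s\omega_{s,\pi^\star(s)}$. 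Second, the claim that the global shift ``can only come from perturbations supported on the optimal pairs'' is not true of $\starV{\mcal'}$ itself (which can move because a suboptimal action becomes optimal in $\mcal'$); the standard fix is to compare against $V^{\pi^\star}_{\mcal'}$, the value of the \emph{old} optimal policy under $\mcal'$, whose Bellman fixed point depends only on the distributions at the pairs $(s,\pi^\star(s))$. These are repairable, and the overall architecture of your argument is the right one.
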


Using $U(\mcal,\omega)$, we obtain the following upper bound on the characteristic time (\ref{eq:LBasympt}):
\begin{equation}\label{eq:LBasympt2}
T_o(\mcal) \leq U_o(\mcal) \triangleq  \underset{{\omega \in \Omega(\mcal)}}{\inf} U(\mcal,\omega).
\end{equation}

The advantages of the above upper bound $U_o(\mcal)$ are that: (i) it is a problem-specific quantity as it depends on the gaps and variances of the value function in $\mcal$; (ii) the corresponding allocation (that solves (\ref{eq:LBasympt2})) can be easily computed, and hence targeted. Indeed, the optimization problem in (\ref{eq:LBasympt2}) has convex objective and constraints. Therefore, we can use the projected subgradient-descent algorithm to compute 
\begin{equation}
\bomega^\star(\mcal) \triangleq \underset{{\omega \in \Omega(\mcal)}}{\argmin}\ U(\mcal,\omega)=  \argmax_{\omega \in \Omega(\mcal)}\ U(\mcal,\omega)^{-1},
\label{eq:def_oracle_allocation}    
\end{equation}

which will be used in our algorithm as a proxy\footnote{Note that if we have access to an optimization oracle that, given a MDP $\mcal$, returns the optimal allocation solution to (\ref{eq:LBasympt}), then we can replace $\bomega^\star(\mcal)$ by this optimal allocation. Our algorithm will then be asymptotically optimal up to a factor of 2.} for $\argmax_{\omega \in \Omega(\mcal)}\ T(\mcal,\omega)^{-1}$.

\section{Algorithm}\label{sec:alg}
We propose MDP-NaS (MDP Navigate-and-Stop), a model-based algorithm that is inspired by the lower bound. The lower bound suggests that to identify the best policy in a sample-efficient manner, the algorithm must collect samples from state-action pair $(s,a)$ proportionally to $\bomega_{sa}^\star(\mcal)$. We propose two sampling rules which ensure that the former statement holds in the long term (see Section \ref{sec:sampling_rule} and Theorem \ref{theorem:ergodic} for a rigorous formulation). Our sampling rules are combined with a Generalized Likelihood Ratio (GLR) test (or rather a proxy of the GLR test, see Section \ref{sec:stopping_rule} for details), that stops as soon as we are confident that $\widehat{\pi}_t^\star = \pi^\star$ with probability at least $1-\delta$. The pseudo-code for MDP-NaS is given in Algorithm \ref{alg:main}.

\begin{algorithm}[h]
\label{alg:main}
\caption{MDP Navigate and Stop (MDP-NaS)}

\KwIn{Confidence level $\delta$, $\textrm{ERGODIC}$ boolean variable, communication parameter $m$ or an upper bound.}
   \eIf{\textrm{ERGODIC}}{
    Set $(\epsilon_t)_{t\geq 1} = (1/\sqrt{t})_{t\geq 1}$. \\
  }{
    Set $(\epsilon_t)_{t\geq 1} = (t^{-\frac{1}{m+1}})_{t\geq 1}$.\\
  }
   Set $t \leftarrow 0$ and $N_{sa}(t) \leftarrow 0$, for all (s,a).\\
   Initialize empirical estimate $\widehat{\mcal}_0$ by drawing an arbitrary MDP from $\mathbb{M}$.\\

  Compute $\widehat{\pi}_t^{\star}\leftarrow \textrm{POLICY-ITERATION}(\widehat{\mcal}_t)$.\\
\While{Stopping condition (\ref{eq:def_stopping_tule}) is not satisfied}
{
 Compute $\bomega^\star(\widehat{\mcal}_t)$ according to (\ref{eq:def_oracle_allocation}) and $\pi^o(\widehat{\mcal}_t)$ according to (\ref{eq:def_oracle_policy}).\\
 \eIf{$t = 0$}{
 Play $a_0 \sim \mathrm{Unif}([|1,A|])$.
 }{
 Play $a_{t} \sim \pi_t(.|s_t)$, where $\pi_t$ is determined by either (\ref{eq:def_D_navigation}) or (\ref{eq:def_C_navigation}).
 }
 Observe $(R_t,s_{t+1}) \sim q(.|s_t,a_t) \otimes p(.|s_t,a_t)$.\\
 $t \leftarrow t+1$.\\ 
 Update $\widehat{\mcal}_{t}$ and $(N_{sa}(t))_{s,a}$.\\
 Compute $\widehat{\pi}_t^{\star}\leftarrow \textrm{POLICY-ITERATION}(\widehat{\mcal}_t)$.
}
\KwOut{Empirical optimal policy $\widehat{\pi}_{\tau}^{\star}$.}   
\label{main_algorithm}
\end{algorithm}

\subsection{Sampling rule}\label{sec:sampling_rule}
We introduce a few definitions to simplify the presentation. Any stationary policy $\pi$ induces a Markov chain on $\zcal$ whose transition kernel is defined by $P_{\pi}((s,a), (s',a')) \triangleq P(s'|s,a) \pi(a'|s')$. With some abuse of notation, we will use $P_\pi$ to refer to both the Markov chain and its kernel. We denote by $\pi_u$ the uniform random policy, i.e., $\pi_u(a|s) = 1/A$ for all pairs $(s,a)$. Finally, we define the vector of visit-frequencies $\bN(t)/t \triangleq \big(N_{sa}(t)/t\big)_{(s,a)\in \zcal}$.

In contrast with pure exploration in Multi-Armed Bandits and MDPs with a generative model where any allocation vector in the simplex is achievable, here the agent can only choose a \textit{sequence of actions} and follow the resulting trajectory. Therefore, one might ask if the oracle allocation can be achieved by following a simple policy. A natural candidate is the \textit{oracle policy} defined by
\begin{align}
  \forall (s,a) \in \zcal,\quad  \pi^o(a|s) \triangleq \frac{\omega_{s a}^\star}{\omega_s^\star} = \frac{\omega_{s a}^\star}{\sum_{b\in \acal}\omega_{sb}^\star}.
\label{eq:def_oracle_policy}
\end{align}
It is immediate to check that $\bomega^\star$ is the stationary distribution of $P_{\pi^o}$. Denote by $\pi^o(\mcal)$ the above oracle policy defined through $\omega^\star(\mcal)$. Policy $\pi^o(\mcal)$ is the target that we would like to play, but since the rewards and dynamics of $\mcal$ are unknown, the actions must be chosen so as to estimate consistently the oracle policy while at the same time ensuring that $\bN(t)/t$ converges to $\bomega^\star(\mcal)$. The following two sampling rules satisfy these requirements.

{\bf D-Navigation rule:} At time step $t$, the learner plays the policy 
\begin{equation}
    \pi_t = \varepsilon_t \pi_u + (1-\varepsilon_t)\pi^o(\widehat{\mcal}_t),
\label{eq:def_D_navigation}
\end{equation} 
{\bf C-Navigation rule:} At time step $t$, the learner plays 
\begin{equation}
\pi_t = \varepsilon_t \pi_u + (1-\varepsilon_t)\sum_{j=1}^{t} \pi^o(\widehat{\mcal}_j)/t,
\label{eq:def_C_navigation}
\end{equation} 
where $\varepsilon_t$ is a decreasing exploration rate to be tuned later. In the second case, the agent navigates using a Cesàro-mean of oracle policies instead of the current estimate of the oracle policy, which makes the navigation more stable.

\subsubsection{Tuning the forced exploration parameter}
The mixture with the uniform policy\footnote{Our results still hold if $\pi_u$ is replaced by any other policy $\pi$ which has an ergodic kernel. In practice, this can be helpful especially when we have prior knowledge of a fast-mixing policy $\pi$.} helps the agent to explore all state-action pairs often enough in the initial phase. This is particularly necessary in the case where the empirical weight $\omega_{s a}^\star(\widehat{\mcal}_t)$ of some state-action pair $(s,a)$ is under-estimated, which may cause that this pair is left aside and hence the estimation never corrected. 
The next result in this section gives a tuning of the rate $\varepsilon_t$ that ensures a sufficient forced exploration:
\begin{lemma}
Let $m$ be the maximum length of the shortest paths (in terms of number of transitions) between pairs of states in $\mcal$: $
m \triangleq\max_{(s,s')\in \scal^2}\ \min \{n\geq 1:\ \exists \pi: \scal \rightarrow \acal, P_{\pi}^n(s,s') > 0  \}$. Then C-Navigation or D-Navigation with any decreasing sequence $(\varepsilon_t)_{t\geq 1}$ such that $\forall t \geq 1,\ \varepsilon_t \geq t^{-\frac{1}{m+1}}$ satisfies: $\P_{\mcal, \Alg}\big(\forall (s,a) \in \zcal,\ \lim_{t \to \infty} N_{s a}(t) = \infty \big) = 1 $.
\label{lemma:visits}
\end{lemma}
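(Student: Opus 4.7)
The key observation is that, regardless of which of the two navigation rules is used, the mixture structure $\pi_t = \varepsilon_t \pi_u + (1-\varepsilon_t)\widetilde{\pi}_t$ (with $\widetilde{\pi}_t$ either $\pi^o(\widehat{\mcal}_t)$ or the Cesàro mean) guarantees the uniform lower bound $\pi_t(a|s) \geq \varepsilon_t/A$ for every $(s,a) \in \zcal$. This, combined with the definition of $m$ and the communicating assumption, lets us control the conditional probability of visiting any fixed pair $(s,a)$ within a window of $m+1$ steps. I would then conclude by a second Borel--Cantelli argument applied to disjoint windows of length $m+1$.

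\textbf{Step 1: Reachability within $m+1$ steps.} Fix a target pair $(s^{\star},a^{\star}) \in \zcal$. By definition of $m$, for every current state $s_t$ there exists an integer $n = n(s_t, s^{\star}) \leq m$, a sequence of states $(s_t = \sigma_0, \sigma_1, \dots, \sigma_n = s^{\star})$, and a sequence of actions $(\alpha_0, \dots, \alpha_{n-1})$ such that $\prod_{k=0}^{n-1} p_{\mcal}(\sigma_{k+1}|\sigma_k, \alpha_k) > 0$. Let
\[
p_{\min} \triangleq \min\bigl\{ p_{\mcal}(s'|s,a) : (s,a,s') \in \scal\times\acal\times\scal,\ p_{\mcal}(s'|s,a) > 0 \bigr\} > 0.
\]
Then the probability that the trajectory from time $t$ follows exactly the path $(\sigma_0,\alpha_0,\sigma_1,\dots,\sigma_n)$ and that action $a^{\star}$ is played at time $t+n$ is at least
\[
\Bigl(\prod_{k=0}^{n}\frac{\varepsilon_{t+k}}{A}\Bigr)\, p_{\min}^{n} \;\geq\; \Bigl(\frac{\varepsilon_{t+m}}{A}\Bigr)^{m+1} p_{\min}^{m},
\]
using that $(\varepsilon_t)$ is decreasing and that $n \leq m$. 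Padding with at most $m-n$ arbitrary transitions does not decrease this bound. Hence there is a universal constant $c = (A^{-1})^{m+1} p_{\min}^{m} > 0$ such that, writing $\fcal_t = \fcal_t^{\Alg}$,
\[
\P_{\mcal,\Alg}\bigl( (s^{\star},a^{\star}) \text{ is visited during } [t, t+m+1) \,\big|\, \fcal_t \bigr) \;\geq\; c\,\varepsilon_{t+m}^{\,m+1}.
\]

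\textbf{Step 2: Borel--Cantelli on disjoint windows.} Partition time into blocks $I_k = [k(m+1), (k+1)(m+1))$ and let $E_k$ be the event that $(s^{\star},a^{\star})$ is visited during $I_k$. Since $E_k$ is $\fcal_{(k+1)(m+1)}$-measurable, Step 1 applied at $t = k(m+1)$ yields
\[
\P_{\mcal,\Alg}(E_k \mid \fcal_{k(m+1)}) \;\geq\; c\,\varepsilon_{(k+1)(m+1)-1}^{\,m+1}.
\]
Plugging in $\varepsilon_t \geq t^{-1/(m+1)}$, the right-hand side dominates a constant times $1/((k+1)(m+1))$, so that
\[
\sum_{k\geq 0} \P_{\mcal,\Alg}(E_k \mid \fcal_{k(m+1)}) = +\infty \quad \text{a.s.}
\]
By the conditional (Lévy) Borel--Cantelli lemma applied to the adapted sequence of events $(E_k)$, infinitely many $E_k$ occur almost surely, which implies $N_{s^{\star}a^{\star}}(t) \to \infty$ almost surely. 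Taking a union bound over the finitely many pairs $(s^{\star},a^{\star}) \in \zcal$ concludes the proof.

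\textbf{Main obstacle.} The only delicate point is making sure the bound in Step 1 does not depend on the (random) current state $s_t$ nor on the specific estimator $\widetilde{\pi}_t$: this is what dictates the choice of exponent $m+1$ (one factor per step of the reachability path, plus one for playing $a^{\star}$). The tuning $\varepsilon_t = t^{-1/(m+1)}$ is then exactly the borderline rate that makes the conditional probabilities of the window events $E_k$ sum to infinity, which is what the conditional Borel--Cantelli lemma requires. The argument is identical for C-Navigation and D-Navigation because only the per-step lower bound $\pi_t(\cdot|\cdot) \geq \varepsilon_t/A$ is used.
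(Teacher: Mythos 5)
Your proof is correct, and it takes a genuinely different route from the paper's. You fix the target pair $(s^\star,a^\star)$, cut time into deterministic disjoint windows of length $m+1$, and observe that from \emph{whatever} state the chain occupies at the start of a window, the mixture lower bound $\pi_t(a|s)\ge \varepsilon_t/A$ together with the definition of $m$ yields a conditional hitting probability of order $\varepsilon_{t+m}^{m+1}\gtrsim 1/t$; Lévy's conditional Borel--Cantelli lemma then gives infinitely many hits almost surely. The paper argues in the opposite direction: it bounds the probability of the bad event ``$z'$ is visited only finitely often'' by decomposing the trajectory into excursions from a source pair $z$ at its successive visit times $\tau_k(z)$ and controlling products of substochastic matrices; since it is not known a priori which sources are visited infinitely often, it must multiply these bounds over all $z\in\zcal$ and invoke a pigeonhole argument ($\tau_k(z_k)\le SAk$ for some $z_k$) to extract a divergent series. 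Your windowed argument sidesteps that complication entirely, because your reachability estimate is uniform over the current state and needs no reference pair; it is shorter and more elementary. What the paper's heavier excursion machinery buys is reuse: essentially the same substochastic-matrix estimates later yield the quantitative high-probability bound $\tau_k(z)\le\lambda_\alpha k^4$ of Lemma \ref{lemma:HP_forced_exploration}, which drives the bound on $\E[\tau_\delta]$, whereas your purely qualitative Borel--Cantelli step would have to be replaced for that purpose. Both arguments isolate the same borderline rate $t^{-1/(m+1)}$ for the same reason: one factor of $\varepsilon$ per action choice along a shortest path, plus one for the final action at the target state.
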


\begin{remark}
When the parameter $m$ is unknown to the learner, one can replace it by its worst case value $m_{\max} = S-1$. However, when prior knowledge is available, using a faster-decreasing sequence $\varepsilon_t = t^{-\frac{1}{m+1}}$ instead of $t^{-\frac{1}{S}}$ can be useful to accelerate convergence, especially when the states of $\mcal$ are densely connected ($m \ll S-1$).
\label{remark:dense_MDP_case}
\end{remark}

{\bf Minimal exploration rate: Communicating MDPs.} The forced exploration rate $t^{-\frac{1}{S}}$ vanishes quite slowly. One may wonder if this rate is necessary to guarantee sufficient exploration in communicating MDPs: the answer is yes in the worst case, as the following example shows. Consider a variant of the classical RiverSwim MDP with state (resp. action) space $\scal = [|1,S|]$, (resp. $\acal = \{ \small{\textsc{LEFT1, LEFT2, RIGHT}} \}$). $\small{\textsc{LEFT1}}$ and $\small{\textsc{LEFT2}}$ are equivalent actions inducing the same rewards and transitions. After playing $\small{\textsc{RIGHT}}$ the agent makes a transition of one step to the right, while playing $\small{\textsc{LEFT1}}$ (or $\small{\textsc{LEFT2}}$) moves the agent all the way back to state $1$. The rewards are null everywhere but in states $\{1,S\}$ where $R(1,\small{\textsc{LEFT1}}), R(1,\small{\textsc{LEFT2}})$ are equal to $0.01$ almost surely and $R(S,\small{\textsc{RIGHT}})$ is Bernoulli with mean 0.02. 

When $\gamma$ is close to $1$, the optimal policy consists in always playing $\small{\textsc{RIGHT}}$ so as to reach state $S$ and stay there indefinitely. Now suppose that the agent starts at $s=1$ and due to the small probability of observing the large reward in state $S$, she underestimates the value of this state in the first rounds and focuses on distinguishing the best action to play in state $1$ among $\{\small{\textsc{LEFT1, LEFT2}}\}$. Under this scenario she ends up playing a sequence of policies that scales like $\pi_t = 2\varepsilon_t \pi_u + (1-2\varepsilon_t)(\small{\textsc{LEFT1}}+\small{\textsc{LEFT2}})/2$\footnote{Modulo a re-scaling of $\epsilon_t$ by a constant factor of 1/2.}. This induces the non-homogeneous Markov Chain depicted in Figure \ref{fig:minimal_epsilon}. For the exploration rate $\displaystyle{\varepsilon_t = t^{-\alpha}}$, we show that if the agent uses any $\alpha > \frac{1}{S-1}$, with non-zero probability she will visit state $S$ only a finite number of times. Therefore she will fail to identify the optimal policy for small values of the confidence level $\delta$. The proof is deferred to Appendix \ref{sec:appendix_sampling}.
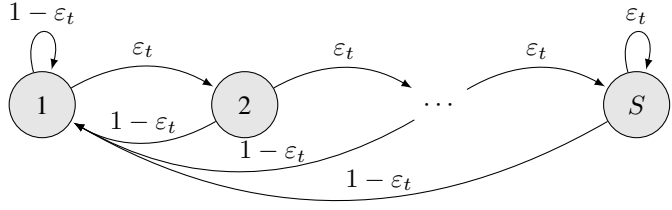
\begin{figure}[ht]
    \centering
    \begin{tikzpicture}[node distance={18mm}]
        \tikzset{node style/.style={state, fill=gray!20!white}}

        \node[node style]               (1)   {1};
        \node[node style, right=of 1]   (2)  {2};
        
        \node[draw=none,  right=of 2]   (2-S) {$\cdots$};
        \node[node style, right=of 2-S] (S)   {$S$};

    \draw[>=latex,
          auto=left,
          every loop]

         (1)   edge[bend left, auto=left] node {$\varepsilon_t$}   (2)
         (2)   edge[bend left, auto=right] node {$1-\varepsilon_t$}   (1)
         (2)  edge[bend left, auto=left] node {$\varepsilon_t$} (2-S)
         (2-S)  edge[bend left, auto=right] node {$ \quad \quad \ 1-\varepsilon_t$} (1)
         (2-S) edge[bend left, auto=left] node {$\varepsilon_t$}  (S)
         (S) edge[bend left, auto=right] node {$\quad \quad \quad 1-\varepsilon_t$}  (1)
         (1) edge[loop above] node {$1-\varepsilon_t$} (1)
         (S) edge[loop above] node {$\varepsilon_t$} (S);

\end{tikzpicture}
    \caption{Non-homogeneous Markov Chain. An exploration rate of at least $t^{-\frac{1}{S-1}}$ is needed.}
    \label{fig:minimal_epsilon}
\end{figure}

\paragraph{Minimal exploration rate: Ergodic MDPs.} For such MDPs, we can select $\varepsilon_t=1/t^\alpha$ where $\alpha<1$ without compromising the conclusion of Lemma \ref{lemma:visits}. The proof is deferred to Appendix \ref{sec:appendix_sampling}.



\subsection{Stopping rule}\label{sec:stopping_rule}
To implement a GLR test, we define $\ell_{\mcal'} (t)$, the likelihood of the observations under some MDP $\mcal' \ll \mcal$: $\ell_{\mcal'} (t) = 
\prod_{k = 0}^{t-1} 
p_{\mcal'} (s_{k+1}|s_k, a_k)
q_{\mcal'} (R_{k}|s_k, a_k)$, where at step $k$ the algorithm is in state $s_k$, plays action $a_k$ and observes the reward $R_k$ and $s_{k+1}$ (the next state). Performing a GLR test in step $t$ consists in computing the optimal policy $\widehat{\pi}_t^\star$ for the estimated MDP $\widehat{\mcal}_t$ and in comparing the likelihood of observations under the most likely model where $\widehat{\pi}_t^\star$ is optimal to the likelihood under the most likely model where $\widehat{\pi}_t^\star$ is sub-optimal. Following the standardized form of the GLR for multi-armed bandits in \cite{Degenne2019NonAsymptoticPE} we write:

\begin{align}
G_{\widehat{\pi}_t^\star} (t) &= \log\ \frac{\underset{\mcal': \widehat{\pi}_t^\star \in \Pi^\star(\mcal')}{\sup} 
\ell_{\mcal'} (t)}{\underset{\mcal': \widehat{\pi}_t^\star \notin \Pi^\star(\mcal')}{\sup} \ell_{\mcal'} (t)} \nonumber=  \log\ \frac{\ell_{\widehat{\mcal}_t}(t)}
{\underset{\mcal' \in  \Alt{\widehat{\mcal}_t}}{\sup}  \ell_{\mcal'} (t)} \nonumber
= \underset{\mcal' \in  \Alt{\widehat{\mcal}_t}}{\inf} \log \frac{\ell_{\widehat{\mcal}_t} (t)}{\ell_{\mcal'} (t)} \nonumber \\
&= \underset{\mcal' \in \Alt{\widehat{\mcal}_t}}{\inf} \underset{(s,a) \in \zcal}{\sum}\ N_{s a}(t) \big[\KL{\widehat{q}_{s,a}(t)}{q_{\mcal'}(s,a)} + \KL{\widehat{p}_{s,a}(t)}{p_{\mcal'}(s,a)}\big] \label{eq:GLR_first_exp} \\
& = t\; T\bigg(\widehat{\mcal}_t, \bN(t)/t\bigg)^{-1} \;.\label{eq:GLR_second_exp}
\end{align}

The hypothesis $(\widehat{\pi}_t^\star \neq \pi^\star)$ is then rejected as soon as the ratio of likelihoods becomes greater than the threshold $\beta(t,\delta)$, properly tuned to ensure that the algorithm is $\delta$-PC.
Note that the likelihood ratio $G_{\widehat{\pi}_t^\star} (t)$ itself can be difficult to compute for MDPs, since it is equivalent to solving (\ref{eq:SC_lower_bound}), we lower bound it using (\ref{eq:GLR_second_exp}) and Lemma \ref{lemma:upper_bound}. This leads to the following proposition:

\begin{proposition}
Define the random thresholds for the transitions and rewards respectively:
\begin{align*}
\beta_p(t,\delta) &\triangleq \log(1/\delta) + (S-1)\underset{(s,a)}{\sum}\log\bigg(e\big[1+N_{sa}(t)/(S-1)\big]\bigg)\\
\beta_r(t,\delta) &\triangleq SA\; \varphi\big(\log(1/\delta)/SA\big) + 3\sum_{s,a} \log\big[1+\log(N_{s a}(t))\big]
\end{align*}
where $\varphi(x) \underset{\infty}{\sim} x + \log(x)$ is defined in the appendix.
Then the stopping rule:
\begin{equation}
  \tau_{\delta} \triangleq \inf \bigg\{t \geq 1 :\ t\; U\big(\widehat{\mcal}_t,\bN(t)/t\big)^{-1} \geq \beta_r(t,\delta/2) + \beta_p(t,\delta/2) \bigg\}  
 \label{eq:def_stopping_tule}
\end{equation}
is $\delta$-PC, i.e., $\P(\tau_{\delta} < \infty,\ \widehat{\pi}_{\tau_\delta}^\star \neq \pi^\star) \leq \delta.$
\end{proposition}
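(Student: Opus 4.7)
The plan is to show that whenever the algorithm stops with a wrong answer, the observations must have produced an atypically large log-likelihood ratio under the true MDP $\mcal$, and then to control this event via two concentration bounds (one for transitions, one for rewards) that are calibrated to consume $\delta/2$ each.

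First, I would unpack the error event. Suppose $\tau_\delta < \infty$ and $\widehat{\pi}^\star_{\tau_\delta} \neq \pi^\star$. Then $\mcal$ and $\widehat{\mcal}_{\tau_\delta}$ do not share the same optimal policy, i.e., $\mcal \in \Alt{\widehat{\mcal}_{\tau_\delta}}$. Plugging $\mcal$ as a feasible point in the infimum defining the GLR statistic (\ref{eq:GLR_first_exp})--(\ref{eq:GLR_second_exp}) yields
\begin{equation*}
t\,T\bigl(\widehat{\mcal}_t,\bN(t)/t\bigr)^{-1} \;=\; G_{\widehat{\pi}^\star_t}(t) \;\leq\; \sum_{s,a} N_{sa}(t)\bigl[\KL{\widehat{q}_{sa}(t)}{q_\mcal(s,a)} + \KL{\widehat{p}_{sa}(t)}{p_\mcal(s,a)}\bigr].
\end{equation*}
Combined with Lemma \ref{lemma:upper_bound}, which gives $U \ge T$ and hence $t\,U^{-1} \le t\,T^{-1}$, the stopping condition (\ref{eq:def_stopping_tule}) implies that at time $\tau_\delta$ the above right-hand side exceeds $\beta_r(\tau_\delta,\delta/2)+\beta_p(\tau_\delta,\delta/2)$. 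A union bound over the two terms then gives
\begin{equation*}
\P\bigl(\tau_\delta < \infty,\ \widehat{\pi}^\star_{\tau_\delta}\neq\pi^\star\bigr) \le \P\!\left(\exists t:\;\sum_{s,a}N_{sa}(t)\KL{\widehat{p}_{sa}(t)}{p_\mcal(s,a)}\ge \beta_p(t,\delta/2)\right) + \P\!\left(\exists t:\;\sum_{s,a}N_{sa}(t)\KL{\widehat{q}_{sa}(t)}{q_\mcal(s,a)}\ge \beta_r(t,\delta/2)\right).
\end{equation*}

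Next, I would dispatch the two terms by invoking tailored time-uniform deviation inequalities. For the transition term, the aggregated categorical deviation inequality of Jonsson--Kaufmann--M\'enard--Domingues--Leurent--Valko (a method-of-mixtures/Laplace bound for multinomial likelihoods adapted to MDPs) yields that the first probability is bounded by $\delta/2$ precisely because $\beta_p(t,\delta/2)=\log(2/\delta)+(S-1)\sum_{s,a}\log(e[1+N_{sa}(t)/(S-1)])$ has the exact calibration required by that inequality. For the reward term, since rewards are supported in $[0,1]$, the Kaufmann--Koolen time-uniform peeling bound (the one that produces the function $\varphi$ in $\beta_r$), applied jointly across the $SA$ arms with a sum-of-KL formulation, gives probability at most $\delta/2$; here the additive $3\sum_{s,a}\log[1+\log N_{sa}(t)]$ term is exactly what a peeling argument on each of the $SA$ counts produces, and $SA\,\varphi(\log(2/\delta)/SA)$ is the standard Laplace/mixture threshold used on the sum. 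Stitching the two bounds gives the claimed $\delta$.

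The main conceptual obstacle is the \emph{joint} time-uniform deviation bound for rewards across all $(s,a)$ with the counts $N_{sa}(t)$ themselves random and adaptively chosen: one cannot apply an arm-by-arm bound and union bound naively without paying an extra $\log(SA)$ factor or worse. The trick, which I would cite from the multi-armed bandit literature, is to apply the mixture martingale to the \emph{summed} log-likelihood ratio process $\sum_{s,a}\sum_{k\le t}\mathbbm{1}\{(s_k,a_k)=(s,a)\}\log(q_{\widehat{\mcal}_t}/q_\mcal)(R_k)$, so that randomness in $N_{sa}(t)$ is absorbed into the stopping time, yielding the per-arm $\log[1+\log N_{sa}(t)]$ corrections through a peeling geometrically spaced over each count independently. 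A secondary, purely bookkeeping obstacle is to verify that $\mcal \in \Alt{\widehat{\mcal}_t}$ under Assumption \ref{assumption:uniqueness} (uniqueness of the optimal policy): this is exactly what ensures that disagreement of optimal policies is equivalent to $\mcal$ being an alternative instance, and that the infimum defining the GLR is not degenerate.
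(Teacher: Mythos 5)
Your proposal is correct and follows essentially the same route as the paper's proof: bound the GLR statistic by plugging $\mcal \in \Alt{\widehat{\mcal}_{\tau_\delta}}$ into the infimum, use $U \ge T$ to relate the stopping statistic to the GLR, split via a union bound, and invoke the two time-uniform deviation inequalities (the categorical mixture bound of \cite{jonsson2020planning} for transitions and the Kaufmann--Koolen bound for rewards), each calibrated at $\delta/2$. Nothing essential is missing.
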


\section{Main results and sample complexity analysis}\label{sec:analysis}
First we state our main results, which take the form of asymptotic upper bounds on the sample complexity of MDP-NaS, under a slightly more conservative exploration rate than in Lemma \ref{lemma:visits}. Then we present the most important ingredients in their proof. The complete proof is provided in Appendix \ref{sec:appendix_SC}.

\begin{theorem}
Using the C-Navigation sampling rule with $\varepsilon_t = t^{-\frac{1}{2(m+1)}}$ and the stopping rule (\ref{eq:def_stopping_tule}):
(i) MDP-NaS stops almost surely and its stopping time satisfies\\ 
$\P\big( \limsup_{\delta \to 0} \frac{\tau_\delta}{\log(1/\delta)} \leq 2 U_{o}(\mcal) \big) = 1$, where $U_{o}(\mcal)$ was defined in (\ref{eq:LBasympt2});\\
(ii) the stopping time of MDP-NaS has a finite expectation for all $\delta \in (0,1)$, and $\limsup_{\delta \to 0} \frac{\E[\tau_\delta]}{\log(1/\delta)} \leq 2 U_{o}(\mcal)$.
\label{theorem:expectation}
\end{theorem}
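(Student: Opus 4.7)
The plan is to derive both parts of the theorem from a single almost-sure convergence, $\bN(t)/t \to \bomega^\star(\mcal)$ together with $\widehat{\mcal}_t \to \mcal$, and then read off the asymptotic rate directly from the stopping criterion. First I would show consistency of the empirical model: Lemma~\ref{lemma:visits} (applied with the slightly faster rate $\varepsilon_t=t^{-1/(2(m+1))}\ge t^{-1/(m+1)}$) guarantees that every pair $(s,a)$ is visited infinitely often, so the strong law of large numbers yields $\widehat p_{s,a}(t)\to p_\mcal(\cdot|s,a)$ and $\widehat q_{s,a}(t)\to q_\mcal(\cdot|s,a)$. Continuity of the solution map $\mcal'\mapsto \bomega^\star(\mcal')$ at the true instance---which follows from convexity of the proxy $U(\mcal,\cdot)$ together with Assumption~\ref{assumption:uniqueness} (and the resulting strict positivity of $\Delta_{\min}$)---then gives $\pi^o(\widehat{\mcal}_t)\to\pi^o(\mcal)$, and the Cesàro average $\bar\pi_t=\frac{1}{t}\sum_{j=1}^t\pi^o(\widehat{\mcal}_j)$ used in the C-Navigation rule inherits the same limit.

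The main technical step is to deduce $\bN(t)/t\to\bomega^\star(\mcal)$ from this convergence of the playing policies $\pi_t=\varepsilon_t\pi_u+(1-\varepsilon_t)\bar\pi_t$. The transition kernels $P_{\pi_t}$ on $\zcal$ form a non-homogeneous Markov chain drifting slowly toward $P_{\pi^o(\mcal)}$; the forced exploration ensures that each $P_{\pi_t}$ is uniformly irreducible with a mixing coefficient controlled by $\varepsilon_t$. I would then invoke Proposition~\ref{prop:ergodic}, the ergodic theorem for non-homogeneous chains advertised in the introduction, to conclude that the empirical occupation measure $\bN(t)/t$ converges almost surely to the stationary distribution of the limit kernel, which by construction is exactly $\bomega^\star(\mcal)$. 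The choice of exponent $1/(2(m+1))$ rather than $1/(m+1)$ is precisely what is needed to absorb the $O(\varepsilon_t)$ bias of $\pi_t$ while keeping the hypotheses of Proposition~\ref{prop:ergodic} in force.

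For part~(i), continuity of $U$ combined with the two convergences above gives $U(\widehat{\mcal}_t,\bN(t)/t)\to U_o(\mcal)$ almost surely. At time $\tau_\delta$, rearranging the stopping condition (\ref{eq:def_stopping_tule}) yields $\tau_\delta\ge U(\widehat{\mcal}_{\tau_\delta},\bN(\tau_\delta)/\tau_\delta)\cdot[\beta_r(\tau_\delta,\delta/2)+\beta_p(\tau_\delta,\delta/2)]$. Since $\beta_r(t,\delta/2)+\beta_p(t,\delta/2)\sim 2\log(1/\delta)$ as $\delta\to 0$ (the polylog-in-$t$ corrections are absorbed once $\tau_\delta$ is known to grow at most logarithmically in $1/\delta$), solving for $\tau_\delta$ gives $\limsup_{\delta\to 0}\tau_\delta/\log(1/\delta)\le 2U_o(\mcal)$ almost surely. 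The factor~$2$ is exactly the cost of the $\delta/2$ union bound splitting the threshold between transition and reward deviations.

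The hard part is upgrading this to a bound on $\E[\tau_\delta]$, which requires quantitative tails for $\tau_\delta$ uniform in $\delta$. I would prove concentration versions of the first two steps: together with the deviation inequalities underlying Proposition~\ref{prop:ergodic}, this shows that for any $\eta>0$ the bad event $\{\|\bN(t)/t-\bomega^\star(\mcal)\|+\|\widehat{\mcal}_t-\mcal\|_\infty>\eta\}$ has probability decaying at a summable-in-$t$ rate. On its complement, the stopping criterion must trigger by some deterministic time $T(\mcal,\eta,\delta)=(2U_o(\mcal)+o_\eta(1))\log(1/\delta)$, and writing $\E[\tau_\delta]=\sum_{t\ge 1}\P(\tau_\delta>t)$ and splitting at $T(\mcal,\eta,\delta)$ gives the claimed limsup after letting $\eta\to 0$. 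The main obstacle is exactly this last step: propagating the quantitative rate from Proposition~\ref{prop:ergodic} through the (non-linear) oracle map $\mcal'\mapsto\bomega^\star(\mcal')$ to uniform-in-$\delta$ tail bounds on $\tau_\delta$ is delicate, and it is the reason the exploration rate must be taken strictly smaller than the one sufficient for Lemma~\ref{lemma:visits}.
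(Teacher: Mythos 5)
Your proposal follows essentially the same route as the paper's proof: part (i) is obtained from the almost-sure convergences $\widehat{\mcal}_t\to\mcal$ and $\bN(t)/t\to\bomega^\star(\mcal)$ (Lemma~\ref{lemma:visits} plus the non-homogeneous ergodic theorem, Proposition~\ref{prop:ergodic}/Theorem~\ref{theorem:ergodic}) combined with $\beta_r(t,\delta/2)+\beta_p(t,\delta/2)\sim 2\log(1/\delta)$, and part (ii) by writing $\E[\tau_\delta]=\sum_{T\ge 1}\P(\tau_\delta>T)$ and controlling the complements of high-probability concentration events for $\widehat{\mcal}_t$ and $\bN(t)/t$, exactly as in Appendix~\ref{sec:appendix_SC}, with the more conservative exponent $\tfrac{1}{2(m+1)}$ indeed serving only the quantitative (high-probability) forced-exploration bound needed for the expectation. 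One small slip: at the stopping time the condition $t\,U(\widehat{\mcal}_t,\bN(t)/t)^{-1}\ge\beta_r+\beta_p$ rearranges to a \emph{lower} bound on $\tau_\delta$, so the upper bound must instead be obtained by exhibiting a deterministic time $T^*\approx 2U_o(\mcal)\log(1/\delta)$ at which the condition is guaranteed to hold on the good event --- which is what the paper does and what your argument clearly intends.
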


\subsection{Proof sketch}

{\bf Concentration of empirical MDPs:} The starting point of our proof is a concentration event of the empirical estimates $\widehat{\mcal}_t$ around $\mcal$. For $\xi > 0$ and $T\geq 1$, we define $\ccal_T^1(\xi) = \bigcap_{t= T^{1/4}}^{T}\bigg(\norm{\widehat{\mcal}_t -\mcal} \leq \xi, \norm{\pi^o(\widehat{\mcal}_t) - \pi^o(\mcal)}_\infty \leq \xi \bigg)$, where $\norm{.}$ is a semi-norm on MDPs.
Then we prove in Lemma \ref{lemma:concentration_empirical_mdps} that $\ccal_T^1(\xi)$ holds with high probability in the sense that for all $T\geq 1$:
\begin{equation}
\P\left(\ccal_T^1(\xi) \right) \geq 1 - \ocal\big(1/T^2\big).
\label{eq:HP_first_event}
\end{equation} 
For this purpose we derive, for all pairs $(s,a)$, a lower bound on $N_{sa}(t)$ stating that\footnote{Using the method in our proof one can derive a lower bound of the type $t^\zeta$ for any $\zeta <1/2$ and any exploration rate $\varepsilon_t = t^{-\theta}$ with $\theta< 1/(m+1)$. We chose $\theta = \frac{1}{2(m+1)}$ and $\zeta = 1/4$ because they enable us to have an explicit formula for $\lambda_\alpha$. See Lemma \ref{lemma:HP_forced_exploration} in the appendix.}: $\P\big(\forall (s,a) \in \zcal,\ \forall t \geq 1,\ N_{sa}(t) \geq (t/\lambda_\alpha)^{1/4} - 1\big) \geq 1 - \alpha$ where $\lambda_\alpha \propto \log^2(1+SA/\alpha)$ is a parameter that depends on the mixing properties of $\mcal$ under the uniform policy. These lower bounds on $N_{sa}(t)$ w.h.p. contrast with their deterministic equivalent obtained for C-tracking in \cite{garivier16a}. \\ \\
{\bf Concentration of visit-frequencies:} Before we proceed, we make the following assumption.
\begin{assumption}
 $P_{\pi_u}$ is aperiodic\footnote{This assumption is mild as it is enough to have only one state $\Tilde{s}$ and one action $\Tilde{a}$ such that $P_\mcal(\Tilde{s}|\Tilde{s},\Tilde{a}) > 0$ for it to be satisfied. Furthermore, Assumptions \ref{assumption:uniqueness} and \ref{assumption:aperiodic} combined are still less restrictive than the usual \textit{ergodicity} assumption, which requires that the Markov chains of \textit{all} policies are ergodic.}.
 \label{assumption:aperiodic}
\end{assumption} 
Under Assumptions \ref{assumption:uniqueness} and \ref{assumption:aperiodic} the kernel $P_{\pi_u}$, and consequently also $P_{\pi^o}$, becomes ergodic. Hence the iterates of $P_{\pi^o}$ converge to $\bomega^\star(\mcal)$ at a geometric speed. Also note that the Markov chains induced by playing either of our sampling rules are non-homogeneous, with a sequence of kernels $(P_{\pi_t})_{t\geq 1}$ that is history-dependent. To tackle this difficulty, we adapt a powerful ergodic theorem (see Proposition \ref{proposition:ergodic_thm} in the appendix) from \cite{Fort2011} originally derived for adaptive Markov Chain Monte-Carlo (MCMC) algorithms to get the following result.
\begin{theorem}
Using the C-Navigation or D-Navigation we have: $\underset{t \to \infty}{\lim} \bN(t)/t  = \omega^\star(\mcal)$ almost surely.
\label{theorem:ergodic}
\end{theorem}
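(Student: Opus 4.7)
I would frame the conclusion as an ergodic-averaging statement for the non-homogeneous Markov chain $\{(s_t,a_t)\}_{t\ge 0}$ whose one-step kernels $P_{\pi_t}$ are history-dependent but converge to the ergodic kernel $P_{\pi^o(\mcal)}$, whose stationary distribution is exactly $\bomega^\star(\mcal)$. Applied to the indicator test functions $f_{s a}(s',a') = \mathbbm{1}\{(s',a')=(s,a)\}$, the adaptive-MCMC ergodic theorem (Proposition \ref{proposition:ergodic_thm}, specializing Fort et al.) would directly give $N_{sa}(t)/t \to \bomega^\star_{sa}(\mcal)$ a.s., which is the claim.

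\textbf{Convergence of the sampling policies.} By Lemma \ref{lemma:visits} every pair $(s,a)$ is visited infinitely often a.s., and the strong law of large numbers then gives $\widehat{\mcal}_t \to \mcal$ a.s. coordinate-wise. Continuity of policy iteration on the open subset of MDPs with a unique optimal policy (Assumption \ref{assumption:uniqueness}), together with continuity of the convex program (\ref{eq:def_oracle_allocation}), yields $\bomega^\star(\widehat{\mcal}_t) \to \bomega^\star(\mcal)$ a.s.; ergodicity of $P_{\pi^o(\mcal)}$ under Assumptions \ref{assumption:uniqueness}--\ref{assumption:aperiodic} implies $\omega^\star_s(\mcal)>0$ for every $s$, so the renormalization (\ref{eq:def_oracle_policy}) is continuous at $\bomega^\star(\mcal)$ and $\pi^o(\widehat{\mcal}_t)\to\pi^o(\mcal)$ a.s. For D-Navigation this already gives $\pi_t\to\pi^o(\mcal)$ a.s. since $\varepsilon_t\to 0$; for C-Navigation the Cesàro lemma handles the additional averaging step and yields the same conclusion.

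\textbf{Verifying the hypotheses of Proposition \ref{proposition:ergodic_thm}.} Two ingredients remain to be checked: (a) a simultaneous drift/minorization property for the family $\{P_{\pi_t}\}_{t\ge 1}$ making the candidate limit kernel $P_{\pi^o(\mcal)}$ the unique attractor of their invariant measures, and (b) a diminishing-adaptation condition $\|P_{\pi_{t+1}}-P_{\pi_t}\|_\infty\to 0$ a.s. For (a), the forced-exploration component $\varepsilon_t\pi_u$ guarantees $P_{\pi_t}\ge \varepsilon_t P_{\pi_u}$ entrywise, and Assumption \ref{assumption:aperiodic} plus the communicating structure of $\mcal$ make $P_{\pi_u}$ irreducible and aperiodic, which together with the almost-sure convergence $P_{\pi_t}\to P_{\pi^o(\mcal)}$ provides the small-set / Lyapunov pair required by the Fort et al. framework. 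For (b), the C-Navigation case is essentially free because $\pi_{t+1}-\pi_t$ is a Cesàro increment of order $O(1/t)$ almost deterministically; this is the main reason we use C-Navigation in Theorem \ref{theorem:expectation}.

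\textbf{Main obstacle.} Verifying condition (b) for D-Navigation is the hard step: $\pi_t$ depends directly on $\widehat{\mcal}_t$, so one must control $\widehat{\mcal}_{t+1}-\widehat{\mcal}_t$. This is where the high-probability polynomial lower bound $N_{sa}(t)\gtrsim t^{1/4}$ supplied by Lemma \ref{lemma:HP_forced_exploration} becomes essential: it turns a standard concentration inequality for the empirical transition and reward kernels into a uniform estimate $\|\widehat{\mcal}_{t+1}-\widehat{\mcal}_t\|_\infty\to 0$ a.s., which by the local Lipschitz behaviour of $\mcal\mapsto \pi^o(\mcal)$ at $\mcal$ (itself justified in Step 1) propagates to $\|P_{\pi_{t+1}}-P_{\pi_t}\|_\infty\to 0$ a.s. Once (a) and (b) are established, Proposition \ref{proposition:ergodic_thm} gives $t^{-1}\sum_{k<t} f_{sa}(s_k,a_k)\to \bomega^\star_{sa}(\mcal)$ a.s. for every $(s,a)$, which is precisely $\bN(t)/t\to\bomega^\star(\mcal)$ a.s.
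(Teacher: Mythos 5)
Your proposal follows the same route as the paper: verify the hypotheses of the adaptive-MCMC ergodic theorem (Proposition \ref{proposition:ergodic_thm}) for the non-homogeneous chain $(s_t,a_t)_{t\ge 0}$ and apply it to the indicator functions $\indicator\{(s',a')=(s,a)\}$. Your Steps 1--2 and the minorization argument in (a) correspond to the paper's verification of (B1)--(B4) via Lemma \ref{lemma:Geometric_C_Navigation}, and the C-Navigation case of (b) is handled identically.

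The one place where you go astray is the ``Main obstacle'' paragraph. Diminishing adaptation for D-Navigation does not require Lemma \ref{lemma:HP_forced_exploration}: once you know (as you establish in Step 2) that $\widehat{\mcal}_t\to\mcal$ and hence $\pi^o(\widehat{\mcal}_t)\to\pi^o(\mcal)$ almost surely, the bound
$\norm{P_{\pi_{t+1}}-P_{\pi_t}}_\infty\le(\varepsilon_t-\varepsilon_{t+1})+(1-\varepsilon_t)\norm{\pi^o(\widehat{\mcal}_{t+1})-\pi^o(\widehat{\mcal}_t)}_\infty$
tends to zero for free, because consecutive terms of an almost surely convergent sequence have vanishing difference; no quantitative control of $\widehat{\mcal}_{t+1}-\widehat{\mcal}_t$ is needed. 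This is exactly how the paper disposes of (B5). Moreover, Lemma \ref{lemma:HP_forced_exploration} is stated and proved for C-Navigation with the specific rate $\varepsilon_t=t^{-\frac{1}{2(m+1)}}$, so invoking it for D-Navigation under the weaker rates of Lemma \ref{lemma:visits} is not justified as written; in the paper that lemma serves only the finite-time and in-expectation analysis (Lemma \ref{lemma:concentration_empirical_mdps} and Theorem \ref{theorem:expectation}~(ii)), not the almost-sure statement. With that detour replaced by the simpler observation above, your argument is correct and coincides with the paper's proof.
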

Lemma \ref{lemma:visits} and Theorem \ref{theorem:ergodic} combined prove Theorem \ref{theorem:expectation} {\it (i)} in a straightforward fashion. The proof of Theorem \ref{theorem:expectation} {\it (ii)} is more involved. Again, we adapt the proof method from \cite{Fort2011} to derive a finite-time version of Proposition \ref{proposition:ergodic_thm} which results into the following proposition.
\begin{proposition}
Under C-Navigation, for all $\xi>0$, there exists a time $T_\xi$ such that for all $T \geq T_\xi$, all $t\geq T^{3/4}$ and all functions $f: \zcal \xrightarrow{} \mathbbm{R}^{+}$, we have:
\begin{align*}
  \P\bigg( \bigg|\frac{\sum_{k=1}^{t} f(s_k,a_k)}{t} - \E_{(s,a)\sim\omega^\star}[f(s,a)] \bigg| \geq K_\xi \norm{f}_\infty \xi \bigg| \ccal_T^1(\xi) \bigg) \leq 2\exp\big(-t \xi^2\big).
\end{align*}
where $\xi \mapsto K_\xi$ is a mapping with values in $(1,\infty)$ such that $\limsup_{\xi \to 0} K_\xi < \infty$.
\label{proposition:concentration_visits_informal}
\end{proposition}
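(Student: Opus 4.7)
The strategy is to mimic the Poisson-equation martingale decomposition classically used for homogeneous Markov chains, absorbing the non-homogeneity of the chain $(s_t,a_t)$ and the conditioning on $\ccal_T^1(\xi)$ into a deterministic error term that the separation of exponents $1/4$ vs $3/4$ renders negligible.

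\textbf{Geometric ergodicity and Poisson solution.} Under Assumptions~\ref{assumption:uniqueness} and \ref{assumption:aperiodic}, the fact that $\omega^\star(\mcal)$ has full support (required for $U(\mcal,\omega^\star)$ to be finite) together with the aperiodicity of $P_{\pi_u}$ imply that $P_{\pi^o(\mcal)}$ is irreducible and aperiodic on $\zcal$, hence uniformly geometrically ergodic. Consequently, for any bounded $f$ the Poisson equation $(I - P_{\pi^o(\mcal)})g = f - \E_{\omega^\star}[f]$ admits a solution $g$ with $\norm{g}_\infty \leq C_g \norm{f}_\infty$ for a constant $C_g$ depending only on $\mcal$.

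\textbf{Policy stability on $\ccal_T^1(\xi)$.} On this event, $\norm{\pi^o(\widehat\mcal_j) - \pi^o(\mcal)}_\infty \leq \xi$ for every $j \in [T^{1/4}, T]$. Letting $\bar\pi_t \triangleq t^{-1}\sum_{j=1}^t \pi^o(\widehat\mcal_j)$ denote the Cesàro average in (\ref{eq:def_C_navigation}), splitting the sum at $T^{1/4}$ and using the trivial bound $2$ on the warm-up indices gives
\begin{equation*}
\norm{\bar\pi_t - \pi^o(\mcal)}_\infty \leq \xi + \frac{2 T^{1/4}}{t}, \qquad \norm{\pi_t - \pi^o(\mcal)}_\infty \leq \varepsilon_t + \xi + \frac{2T^{1/4}}{t}.
\end{equation*}
For $t \geq T^{3/4}$ and $T$ large (depending only on $\mcal$), each term is $O(\xi)$, and the corresponding kernels satisfy $\norm{P_{\pi_t}-P_{\pi^o(\mcal)}}_\infty = O(\xi)$ uniformly in $t \in [T^{3/4}, T]$.

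\textbf{Martingale decomposition and conclusion.} Writing $\hat f \triangleq f - \E_{\omega^\star}[f] = g - P_{\pi^o(\mcal)}g$ and telescoping yields $\sum_{k=1}^t \hat f(s_k,a_k) = M_t + R_t + E_t$, where $M_t$ is a martingale with respect to $(\fcal_k^{\Alg})$ with increments bounded by $2\norm{g}_\infty$, $R_t$ is a boundary/telescoping term of size $O(\norm{g}_\infty)$, and $E_t = \sum_{k=1}^t[(P_{\pi_k}-P_{\pi^o(\mcal)})g](s_k,a_k)$. On $\ccal_T^1(\xi)$, the previous step gives $|E_t| \leq C_E t\xi\norm{g}_\infty$ (the warm-up contribution $O(T^{1/4}\norm{g}_\infty)$ being absorbed because $t \geq T^{3/4}$), and Azuma--Hoeffding applied to $M_t$ yields the unconditional bound $\P(|M_t| \geq t\xi\norm{f}_\infty) \leq 2\exp(-c_1 t\xi^2)$ for some $c_1 = c_1(C_g) > 0$. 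Since $\P(\ccal_T^1(\xi)) \geq 1/2$ for $T$ large (cf.\ (\ref{eq:HP_first_event})), conditioning on $\ccal_T^1(\xi)$ costs only a factor of $2$, and gathering all constants into $K_\xi$ gives the announced inequality, with $\limsup_{\xi \to 0} K_\xi < \infty$.

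\textbf{Main obstacle.} The delicate point is the policy-stability step: on $\ccal_T^1(\xi)$ the early estimates $\pi^o(\widehat\mcal_k)$ for $k<T^{1/4}$ are uncontrolled, and only the averaging performed by C-Navigation washes them out. The gap between $T^{1/4}$ and $T^{3/4}$ in the statement is precisely tuned so that their residual contribution to both $\bar\pi_t$ and $E_t$ is $O(\xi)$ at time $t \geq T^{3/4}$; this is also why the finite-time bound is stated for C-Navigation rather than for D-Navigation, whose plug-in estimator may fluctuate at every step and would require a quite different argument based on the mixing properties of $P_{\pi^o(\mcal)}$.
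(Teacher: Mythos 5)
Your proof is correct, but it takes a genuinely different route from the paper's. The paper (adapting \cite{Fort2011}) uses a \emph{time-varying} Poisson equation: at each step $k$ it solves $\widehat f_k - P_k\widehat f_k = f - \omega_k(f)$ with respect to the current random kernel $P_k$ and its stationary distribution $\omega_k$, and decomposes the error into a martingale, a drift term $C_t = t^{-1}\sum_k\big[P_k\widehat f_k - P_{k-1}\widehat f_{k-1}\big]$, a boundary term, and a term $t^{-1}\sum_k\big[\omega_{k-1}(f)-\omega^\star(f)\big]$; controlling these requires the perturbation estimates of Lemmas \ref{lemma:poisson} and \ref{lemma:Schweitzer} (the condition number $\kappa_\mcal$) together with the uniform-ergodicity constants $L_k\leq L_\xi$ valid on $\ccal_T^1(\xi)$. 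You instead fix the single Poisson solution $g$ for the limiting kernel $P_{\pi^o(\mcal)}$ and push all the non-homogeneity into the perturbation term $E_t=\sum_k\big[(P_{\pi_k}-P_{\pi^o})g\big](z_k)$, bounded directly via $\sum_k\norm{\pi_k-\pi^o}_\infty\,\norm{g}_\infty$. This is more elementary --- you never need the stationary distributions of the intermediate kernels nor the $P_{\pi'}\widehat f_{\pi'}-P_\pi\widehat f_\pi$ estimate --- and it has a real technical advantage: your martingale increments are bounded by $2\norm{g}_\infty\leq 2C_g\norm{f}_\infty$ \emph{unconditionally}, with $C_g$ depending only on $\mcal$, so Azuma--Hoeffding applies cleanly, whereas the paper's increments are bounded by the random quantity $2L_{k-1}\norm{f}_\infty$ controlled only on the (non-adapted) event $\ccal_T^1(\xi)$. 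What the heavier machinery buys the paper is generality: the time-varying decomposition is what also yields the almost-sure ergodic theorem (Proposition \ref{prop:ergodic}) under mere diminishing adaptation, without a known limit kernel. Two minor points. First, your bound comes out as $4\exp(-c_1 t\xi^2)$ with $c_1=c_1(C_g)$ after the factor-of-two conditioning step; as you note, this is absorbed into $K_\xi$ and $T_\xi$ by enlarging the deviation threshold before applying Azuma, recovering the stated form. Second, your closing claim that D-Navigation would need a quite different argument overstates the difficulty: on $\ccal_T^1(\xi)$ the plug-in policy $\pi^o(\widehat\mcal_k)$ is itself within $\xi$ of $\pi^o(\mcal)$ for every $k\geq T^{1/4}$, so your bound on $E_t$ goes through verbatim (even without the Ces\`aro warm-up term), consistent with the paper proving the asymptotic Theorem \ref{theorem:ergodic} for both rules.
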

Now define $\ccal_T^2(\xi) = \bigcap_{t= T^{3/4}}^{T}\big( \big|\bN(t)/t - \omega^\star(\mcal) \big| \leq K_\xi\; \xi \big)$. Then Proposition \ref{proposition:concentration_visits_informal} and Eq. (\ref{eq:HP_first_event}) combined imply that for $T$ large enough, the event $\ccal_T^1(\xi) \cap \ccal_T^2(\xi)$ holds w.h.p. so that the expected stopping time is finite on the complementary event: $\E[\tau_\delta \indicator\{\overline{\ccal_T^1(\xi)} \cup \overline{\ccal_T^2(\xi)}\}] <\infty$. Now given the asymptotic shape of the thresholds: $\beta_r(t,\delta/2) + \beta_p(t,\delta/2) \underset{\delta\to 0}{\sim} 2\log(1/\delta)$, we may informally write:
$$\E\big[\tau_\delta \indicator\{ \ccal_T^1(\xi) \cap \ccal_T^2(\xi)\}\big] \underset{\delta \to 0}{\preceq} 2\log(1/\delta) \sup_{(\mcal',\omega')\in B_\xi}  U_o\big(\mcal', \omega'\big),$$ where $B_\xi=\{(\mcal',\omega'): \norm{\mcal' -\mcal} \leq \xi, \norm{\omega' -\omega^\star(\mcal)}_\infty \leq K_\xi \; \xi\}$. Taking the limits when $\delta$ and $\xi$ go to zero respectively concludes the proof. 



\section{Comparison with MOCA}\label{sec:MOCA}
Recall from Lemma \ref{lem:proxy} and Theorem \ref{theorem:expectation} that our sample complexity bound writes as:
\begin{equation*}
    \ocal\bigg( \underset{\omega\in \Omega(\mcal)}{\inf}\ \underset{(s,a): a\neq \pi^\star(s)}{\max}\ 
\frac{1+\mathrm{Var}_{ p(s,a)}[\starV{\mcal}]}{\omega_{sa}\Delta_{s a}^2} 
+ \frac{1}{\underset{s}{\min}\omega_{s,\pi^\star(s)}\Delta_{\min}^2(1-\gamma)^3} \bigg)\log(1/\delta).
\end{equation*}
Hence, in the asymptotic regime $\delta \to 0$, MDP-NaS finds the optimal way to balance exploration between state-action pairs proportionally to their \textit{hardness}: $(1+\mathrm{Var}_{ p(s,a)}[\starV{\mcal}])/\Delta_{sa}^2$ for sub-optimal pairs (resp. $1/\Delta_{\min}^2(1-\gamma)^3$ for optimal pairs). 

After a preprint of this work was published, \cite{wagenmaker2021noregret} proposed MOCA, an algorithm for BPI in the episodic setting. MOCA has the advantage of treating the more general case of $\epsilon$-optimal policy identification with finite-time guarantees on its sample complexity. The two papers have different and complementary objectives but one can compare with their bound for exact policy identification, i.e when $\epsilon < \epsilon^\star$\footnote{$\epsilon^\star$ is defined in their work as the threshold value for $\epsilon$ such that the only $\epsilon$-optimal policy is the best policy. However, when the objective is to find the optimal policy, it is not clear from their paper how one can determine such a value of $\varepsilon$ without prior knowledge of the ground truth instance.}, in the asymptotic regime $\delta \to 0$. In this case, by carefully inspecting the proofs of \cite{wagenmaker2021noregret}, we see that MOCA's sample complexity writes as\footnote{The $\log^3(1/\delta)$ term comes from the sample complexity of their sub-routine \textsc{FindExplorableSets}.}:
\begin{equation*}
  \ocal\bigg(\sum_{h=1}^H\ \underset{\omega \in \Omega(\mcal)}{\inf}\ \underset{s,a\neq \pi^\star(s)}{\max} \frac{H^2}{\omega_{sa}(h) \Delta_{sa}(h)^2} \bigg) \log(1/\delta) + \frac{\textrm{polylog}(S,A,H, \log(\epsilon^*))\log^3(1/\delta)}{\epsilon^*}  
\end{equation*}
where $H$ is the horizon and $\Delta_{sa}(h)$ is the sub-optimality gap of $(s,a)$ at time step $h$. We make the following remarks about the bounds above:
\begin{enumerate}
    \item MOCA only pays the cost of worst-case visitation probability multiplied by the gap of the corresponding state $\underset{s,a\neq \pi^\star(s)}{\min}\omega_{sa}\Delta_{sa}^2$. Instead, MDP-NaS pays a double worst-case cost of the smallest visitation probability multiplied by the minimum gap $\min_s \omega_{s,\pi^\star(s)}\Delta_{\min}^2$. As pointed out by \cite{wagenmaker2021noregret} the former scaling is better, especially when the state where the minimum gap is achieved is different from the one that is hardest to reach. This is however an artefact of the upper bound in Lemma \ref{lem:proxy} that MDP-NaS uses as a proxy for the characteristic time. Using a more refined bound, or an optimization oracle that solves the best-response problem, one can remove this double worst-case dependency.
    \item The sample complexity of MOCA \textit{divided by $\log(1/\delta)$ still explodes} when $\delta$ goes to zero, contrary to MDP-NaS's bound.
    \item The sample complexity of MDP-NaS is variance-sensitive for sub-optimal state-action pairs, while MOCA's bound depends on a worst-case factor of $H^2$. Indeed as the rewards are in $[0,1]$, we always have $\starV{\mcal} \leq H$ and $\mathrm{Var}_{ p(s,a)}[\starV{\mcal}] \leq H^2$ in the episodic setting.
\end{enumerate}  
We conclude this section by noting that MDP-NaS has a simple design and can easily be  implemented.


\section{Conclusion}\label{sec:conclusion}
To the best of our knowledge, this paper is the first to propose an algorithm with \textit{instance-dependent sample complexity} for Best Policy identification (BPI) in the \textit{full online} setting. Our results are encouraging as they show: 1) How the navigation constraints of online RL impact the difficulty of learning a good policy, compared to the more relaxed sampling schemes of Multi-Armed Bandits and MDPs with a generative model. 2) That, provided access to an optimization oracle that solves the information-theoretical lower bound (resp. some convex relaxation of the lower bound), asymptotic optimal (resp. near-optimal) sample complexity is still possible through adaptive control of the trajectory. This opens up exciting new research questions. First, it is intriguing to understand how the mixing times -and not just the stationary distributions- of Markov chains induced by policies impact the sample complexity of BPI in the moderate confidence regime. A second direction would be to extend our contributions to the problem of finding an $\epsilon$-optimal policy, which is of more practical interest than identifying the best policy.



\bibliographystyle{alpha}
\bibliography{references}

\newpage
\appendix
\section{Symbols}
\begin{table}[ht]
  \caption{Additional notations used in the appendix}
  \label{Notation}
  \centering
  \begin{tabular}{ c | c }
    Symbol    & Definition     \\
    \toprule
    $m$ & Maximum length of shortest paths: $\underset{(s,s')\in \scal^2}{\max}\ \min \{n\geq 1:\ \exists \pi: \scal \rightarrow \acal, P_{\pi}^n(s,s') > 0  \}$ \\ \\
    $P_{\pi_u}$ & Transition kernel of the uniform policy \\ \\
    $\omega_u$    & Stationary distribution of  $P_{\pi_u}$ \\  \\
     $r$     & $\min\{ \ell \geq 1:\ \forall (z,z') \in \zcal^2,\ P_{\pi_u}^\ell (z,z') > 0 \}$  \\ \\
     $\sigma_u$ & $\min\limits_{z,z' \in \zcal }\ \frac{ P_{\pi_u}^r(z,z')}{\omega_u(z')}$\\ \\ 
     $\eta_1$ & $\min\big\{P_{\pi_u}(z,z')\ \big| (z,z')\in \zcal^2, P_{\pi_u}(z,z') > 0  \big\}$ \\ \\
     $\eta_2$ & $\min\big\{P^{n}_{\pi_u}(z,z')\ \big| (z,z')\in \zcal^2, n \in [|1,m+1|], P^{n}_{\pi_u}(z,z') > 0  \big\}$ \\ \\
     $\eta$ & Communication parameter $\eta_1\eta_2$ \\ \\
     $\omega^\star$ & oracle weights: $\omega^\star \triangleq \argmax\limits_{\omega \in \Omega(\mcal)}\ U(\mcal, \omega)$.\\ \\ 
     $\pi^o$ & oracle policy: $\displaystyle{\pi(a|s) \triangleq \frac{\omega_{sa}^\star}{\sum\limits_{a'\in \acal} \omega_{sa'}^\star}}$. \\ \\
     $\pi_{t}^o$ & $\pi^o(\widehat{\mcal}_t)$\\ \\
    $\overline{\pi_{t}^o}$ & $\sum_{j=1}^t \pi^o(\widehat{\mcal}_j)/t$\\ \\
     $Z_{\pi^o}$ & $(I -P_{\pi^o} + \mathbbm{1}{\omega^\star} \transpose)^{-1}$ \\ \\
     $\kappa_\mcal$ & Condition number $\norm{Z_{\pi^o}}_\infty$ \\ \\
     $P_t$ & Kernel of the policy $\pi_t$ \\ \\
     $\omega_t$ & Stationary distribution of $P_t$ \\ \\
     $C_t, \rho_t$ & Constants such that $\forall n\geq 1,\ \norm{P_t^n(z_0,.) - \omega_t\transpose}_\infty \leq C_t \rho_t^n$ \\ \\
     $L_t$ & $C_t (1-\rho_t)^{-1}$ \\
     
    \bottomrule
  \end{tabular}
\end{table}

\section{Experiments}

In this section, we test our algorithm on two small examples. The first instance is an ergodic MDP, with $5$ states, $5$ actions per state and a discount factor $\gamma=0.7$. The rewards of each state-action pair come from independent Bernoulli distributions with means sampled from the uniform distribution $\mathcal{U}([0,1])$. The transitions kernels were generated following a Dirichlet distribution $\mathcal{D}(1,\ldots,1)$. The second instance is the classical RiverSwim from \cite{strehl2008analysis}, which is communicating but not ergodic. The instance we used has $5$ states and $2$ actions: $\{ \textrm{LEFT, RIGHT}\}$, with deterministic transitions and a discount factor $\gamma=0.95$. Rewards are null everywhere but in states $\{1,5\}$ where they are Bernoulli with respective means $r(1,\textrm{LEFT}) = 0.05$ and $r(1,\textrm{RIGHT}) = 1$. We fix a confidence level $\delta=0.1$, and for each of these MDPs, we run $30$ Monte-Carlo simulations of MDP-NaS with either C-Navigation or D-Navigation. Towards computational efficiency, we note that the empirical oracle policy does not change significantly after collecting one sample, therefore we only update it every $10^4$ time steps\footnote{The period was chosen so as to save computation time, and knowing that for MDPs algorithms usually require $\geq 10^6$ samples to return a reasonably good policy.}.\\ First, we seek to check whether the frequencies of state-action pair visits converge to their oracle weights, as stated in Theorem \ref{theorem:ergodic}. Figure \ref{fig:omega} shows the relative distance, in log scale, between the vector of empirical frequencies $\bN(t)/t$ and the oracle allocation $\bomega^\star$. The shaded area represents the $10\%$ and $90\%$ quantiles. We see that the relative distance steadily decreases with time, indicating that the visit-frequencies of both D-Navigation and C-Navigation converge to the oracle allocation. We also note that the D-Navigation rule  exhibits a faster convergence than the C-Navigation rule.
\begin{figure}
    \centering
    \includegraphics[width=0.45\linewidth]{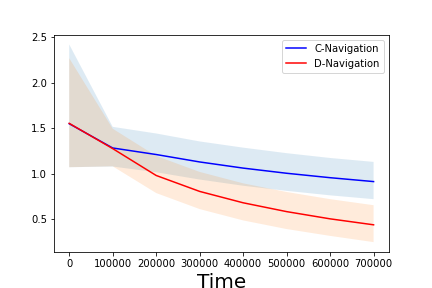}
    \includegraphics[width=0.45\linewidth]{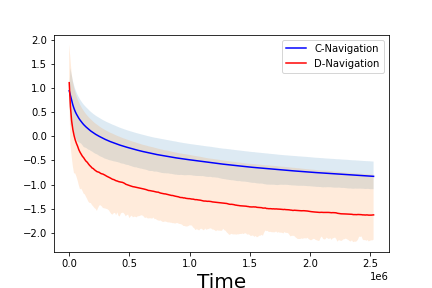}
    \caption{Relative distance in log scale: $\log_{10}\big(\max_{s,a} \frac{|N_{sa}(t)/t - \bomega^\star_{sa}|}{\bomega^\star_{sa}}\big)$. Left: Ergodic MDP with $S = A = 5$. Right: River Swim with $S=5, A=2$.}
    \label{fig:omega}
\end{figure}
Next we compare our algorithm with Variance-reduced-Q-learning (VRQL) \cite{li2020asynchronous}, a variant of the classical Q-learning with faster convergence rates. VRQL finds an $\varepsilon$-estimate $\widehat{Q}$ of the Q function $Q^\star$, by following a fixed sampling rule, referred to as the \textit{behavior policy} $\pi_b$, and updating its estimate of the Q function via Temporal Difference learning. VRQL does not have a stopping rule, but is guaranteed to yield an estimate such that $\norm{\widehat{Q} - Q}_\infty \leq \varepsilon$ with probability $1-\delta$ after using $M(N+t_{\textrm{epoch}})$ samples where
\begin{align*}
  M &= c_3 \log(1/\varepsilon^2 (1-\gamma)^2),\\
  t_{\textrm{epoch}} &= \frac{c_2}{\mu_{\min}} \bigg(\frac{1}{(1-\gamma)^3}+\frac{t_{mix}}{1-\gamma}\bigg)\log\big(1/(1-\gamma)^2 \varepsilon\big)\log\big(SA/\delta\big),\\
  N &= \frac{c_1}{\mu_{\min}} \bigg(\frac{1}{(1-\gamma)^3 \min(1,\varepsilon^2)}+ t_{mix}\bigg) \log\big(SAt_{\textrm{epoch}}/\delta\big),
\end{align*}
where $\mu_{\min}$ (resp. $t_{mix}$) is the minimum state-action occupancy (resp. the mixing time) of $\pi_b$ and $c_1, c_2, c_3$ are some large enough universal constants. We use VRQL with $c_1 = c_2 =  c_3 = 10$, $\varepsilon \leq \Delta_{\min}$ (since the goal is to identify the best policy) and use the uniform policy as a sampling rule\footnote{In the absence of prior knowledge about the MDP, the uniform policy is a reasonable choice to maximize $\mu_{\min}$.}. We plug this value into the equations above and the compute the sample complexity of VRQL. Table \ref{tab:tau} shows a comparison of the sample complexities of MDP-NaS and  VRQL. MDP-NaS has much better performance than VRQL. 
\begin{table}[h]
    \centering
    \begin{tabular}{c|c|c}
         & MDP-NaS & VRQL \\
         \toprule
        Small Ergodic MDP & $8 \times 10^5$ & $2.5\times 10^8$\\
        \midrule
         RIVER-SWIM & $2.6 \times 10^6$ & $3.3 \times 10^9$\\
         \bottomrule
    \end{tabular} \\
    \vspace{0.2cm}
    \caption{Average sample complexity of MDP-NaS (D-Navigation) vs deterministic sample complexity of VRQL. $\delta = 0.1$.}
    \label{tab:tau}
\end{table}

\section{Sample complexity lower bound}\label{sec:appendix_LB}

Let $\mathrm{Alt}(\mcal)$ be the set of MDPs such that $\mcal \ll \mcal'$ and $\Pi^\star(\mcal)\cap \Pi^\star(\mcal')=\emptyset$. The information constraints are obtained by change-of-measure arguments as in the bandit literature \cite{lai1985, kaufmann2016complexity}:  

\begin{lemma}\label{lem1}(\cite{pmlr-v139-marjani21a}) For any $\delta$-PC algorithm $\Alg$, and for any $\mcal'\in \mathrm{Alt}(\mcal)$, we have: 
\begin{equation} \label{eq:information-bdd}
\sum_{s,a}\mathbb{E}_{\mcal,\Alg}[N_{sa}(\tau_\delta)]\textrm{KL}_{\mcal|\mcal'}(s,a) \ge \kl (\delta,1-\delta).
\end{equation}
\end{lemma}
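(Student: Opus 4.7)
The plan is to prove this via a standard change-of-measure argument, adapted to the MDP setting where observations come from a controlled trajectory rather than independent samples. The key tool is the transportation/data-processing inequality from Kaufmann et al. (2016), which bounds the KL between distributions of binary events by the KL of the full observation process.

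First, I would write out the log-likelihood ratio of the observed trajectory $(s_0,a_0,R_0,\dots,s_{\tau_\delta},a_{\tau_\delta},R_{\tau_\delta})$ under $\mcal$ versus $\mcal'$. Since the sampling rule is $\mathcal{F}_t^\Alg$-measurable, the conditional distribution of $a_t$ given the past is the same under both measures; the only contributions come from the reward and next-state distributions, which differ between $\mcal$ and $\mcal'$. By the chain rule of relative entropy applied telescopically, the expected log-likelihood ratio at time $\tau_\delta$ equals
\[
\mathbb{E}_{\mcal,\Alg}\Big[\sum_{t=0}^{\tau_\delta-1}\textrm{KL}_{\mcal|\mcal'}(s_t,a_t)\Big],
\]
and Wald's lemma for stopped sums (justified because $\tau_\delta$ is a.s. finite for a $\delta$-PC algorithm and the per-step KL is non-negative and bounded on the support of the sampling rule) lets me rewrite this as $\sum_{s,a}\mathbb{E}_{\mcal,\Alg}[N_{sa}(\tau_\delta)]\,\textrm{KL}_{\mcal|\mcal'}(s,a)$. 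The absolute continuity assumption $\mcal\ll\mcal'$ in the definition of $\Alt{\mcal}$ is exactly what ensures this log-likelihood ratio is well-defined.

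Next, I would introduce the binary event $E=\{\widehat{\pi}^\star_{\tau_\delta}=\pi^\star(\mcal)\}$. Since $\Alg$ is $\delta$-PC, under $\mcal$ we have $\mathbb{P}_{\mcal,\Alg}(E)\geq 1-\delta$. For $\mcal'\in\Alt{\mcal}$, the condition $\Pi^\star(\mcal)\cap\Pi^\star(\mcal')=\emptyset$ means $\pi^\star(\mcal)$ is suboptimal in $\mcal'$, so the $\delta$-PC property applied to $\mcal'$ gives $\mathbb{P}_{\mcal',\Alg}(E)\leq \delta$. The data-processing inequality applied to the $\sigma(E)$-measurable coarsening of the trajectory then yields
\[
\kl\bigl(\mathbb{P}_{\mcal,\Alg}(E),\,\mathbb{P}_{\mcal',\Alg}(E)\bigr)\;\leq\;\mathbb{E}_{\mcal,\Alg}\Big[\sum_{t=0}^{\tau_\delta-1}\textrm{KL}_{\mcal|\mcal'}(s_t,a_t)\Big].
\]
Finally, monotonicity of $\kl(x,y)$ in its arguments (increasing in $x$ for $x\geq y$, decreasing in $y$) gives $\kl(\mathbb{P}_{\mcal,\Alg}(E),\mathbb{P}_{\mcal',\Alg}(E))\geq \kl(1-\delta,\delta)=\kl(\delta,1-\delta)$, completing the proof.

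The main subtlety is the stopping-time argument: one must verify that Wald's identity applies, which requires either showing $\mathbb{E}_{\mcal,\Alg}[\tau_\delta]<\infty$ or reducing to a bounded stopping time $\tau_\delta\wedge n$ and taking $n\to\infty$ via monotone convergence (using non-negativity of the per-step KL). The case where $\mathbb{E}_{\mcal,\Alg}[\tau_\delta]=\infty$ makes the bound vacuous on that side, so the inequality still holds trivially. The rest is conceptually straightforward once the decomposition of the log-likelihood ratio over the trajectory is in place.
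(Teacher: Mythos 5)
Your proof is correct and follows exactly the route the paper intends: the paper does not reprove this lemma but cites \cite{pmlr-v139-marjani21a} and the standard change-of-measure argument of the bandit literature (Kaufmann et al.), which is precisely your combination of the log-likelihood-ratio decomposition, Wald's identity for the stopped sum, the data-processing inequality applied to the correctness event, and monotonicity of $\kl$. The only point worth flagging is the implicit assumption that the $\delta$-PC guarantee also applies to the alternative instance $\mcal'$ (i.e.\ that $\mcal'\in\mathbb{M}$), which the paper glosses over in the same way.
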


\subsection{Navigation constraints: proof of Lemma \ref{lem2}}
For all states $s$,
\begin{align*}
N_{\tau}(s) & = \indicator_{ \{ S_1=s\} } + \sum_{s',a'}\sum_{u=1}^{N_{\tau-1}(s',a')}\indicator_{ \{ W_u=s\} },
\end{align*}
where $W_u$ denotes the state observed after the $u$-th times $(s',a')$ has been visited. Fix $s',a'$. Introduce $G_t^{s',a'}= \sum_{u=1}^{N_{t-1}(s',a')}\indicator_{ \{ W_u=s\} }$. Observe that ${ \{ W_u=s\} }$ and $\{ N_{t-1}(s',a')>u-1\}$ are independent. Furthermore, $\mathbb{E}_{\mcal,\Alg}[\indicator_{ \{ W_s=s\} }]=p_{\mcal}(s|s',a')$. Hence:
$$
\mathbb{E}_{\mcal,\Alg} [G_{\tau}^{s',a'}] = p_\mcal(s|s',a')\mathbb{E}_{\mcal,\Alg}[N_{\tau-1}(s',a')].
$$
Finally,
\begin{align}
\mathbb{E}_{\mcal,\Alg} [N_\tau(s)]= \mathbb{P}_{\mcal}[\indicator_{ \{ S_1=s\} }] +\sum_{s',a'} p_\mcal(s|s',a')\mathbb{E}_{\mcal,\Alg}[N_{\tau-1}(s',a')].
\end{align}
From the above equality, the lemma is proved by just observing that $\mathbb{P}_{\mcal}[\indicator_{ \{ S_1=s\} }]\le 1$, $\mathbb{E}_{\mcal,\Alg}[N_{\tau-1}(s',a')]\le \mathbb{E}_{\mcal,\Alg}[N_{\tau}(s',a')]$ for any $(s',a')$, and $\mathbb{E}_{\mcal,\Alg}[N_{\tau}(s)]\le \mathbb{E}_{\mcal,\Alg}[N_{\tau-1}(s)]+1$ for any $s$.

\subsection{Lower bound: proof of Proposition \ref{prop:LB2}}
By combining Lemma \ref{lem1} and Lemma \ref{lem2} we get the following proposition.
\begin{proposition}\label{prop:LB1} The expected sample complexity $\mathbb{E}_{\mcal,\Alg}[\tau_\delta]$ of any $\delta$-PC algorithm $\Alg$ is larger than the value of the following optimization problem:
\begin{align}
&\inf_{n\ge 0} \ \sum_{s,a}n_{sa}\label{eq:LB1}\\
&\textrm{s.t.}\ \ \forall s, \ \Big| \sum_a n_{sa} - \sum_{s',a'} p_{\mcal} (s |s',a') n_{s'a'} \Big| \le 1,\nonumber\\
& \ \ \ \ \ \ \forall \mcal'\in \mathrm{Alt}(\mcal), \ \sum_{s,a}n_{sa}\textrm{KL}_{\mcal|\mcal'}(s,a) \ge \kl (\delta,1-\delta).\nonumber
\end{align}
\end{proposition}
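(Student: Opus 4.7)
The plan is to verify that the vector $n^\star \in \mathbb{R}_+^{S \times A}$ defined by $n^\star_{sa} = \mathbb{E}_{\mcal,\Alg}[N_{sa}(\tau_\delta)]$ is a feasible point for the optimization problem \eqref{eq:LB1}, whose objective value is exactly $\mathbb{E}_{\mcal,\Alg}[\tau_\delta]$. This would immediately imply that the expected sample complexity upper bounds the infimum. The proof is thus essentially a bookkeeping exercise that stitches together the two already-established lemmas.

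First I would observe that, since in each round exactly one state--action pair is visited, one has the deterministic identity $\tau_\delta = \sum_{s,a} N_{sa}(\tau_\delta)$ almost surely. Taking expectations and using Tonelli/linearity yields $\mathbb{E}_{\mcal,\Alg}[\tau_\delta] = \sum_{s,a} n^\star_{sa}$, which identifies the sample complexity with the objective of \eqref{eq:LB1} evaluated at $n^\star$. Note also that $n^\star_{sa} \ge 0$, so the nonnegativity constraint is automatic.

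Next I would check the two families of constraints. For the navigation constraints, note that $N_s(\tau_\delta) = \sum_a N_{sa}(\tau_\delta)$ deterministically, so $\mathbb{E}_{\mcal,\Alg}[N_s(\tau_\delta)] = \sum_a n^\star_{sa}$. Plugging this into the inequality supplied by Lemma~\ref{lem2} gives precisely
\[
\Big| \sum_a n^\star_{sa} - \sum_{s',a'} p_{\mcal}(s \mid s',a')\, n^\star_{s'a'} \Big| \le 1 \qquad \text{for all } s \in \scal.
\]
For the information constraints, Lemma~\ref{lem1} (whose statement is cited verbatim for any $\mcal' \in \mathrm{Alt}(\mcal)$) yields directly
\[
\sum_{s,a} n^\star_{sa}\, \textrm{KL}_{\mcal|\mcal'}(s,a) \ge \kl(\delta, 1-\delta) \qquad \text{for all } \mcal' \in \mathrm{Alt}(\mcal).
\]

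Combining these observations, $n^\star$ is feasible for \eqref{eq:LB1}, hence the infimum of the program (which is taken over all feasible $n \ge 0$) is at most $\sum_{s,a} n^\star_{sa} = \mathbb{E}_{\mcal,\Alg}[\tau_\delta]$. This is exactly the claim of the proposition. There is no genuine obstacle here: the content has already been packed into Lemmas~\ref{lem1} and~\ref{lem2}, and the only thing to be careful about is the purely combinatorial identity $\tau_\delta = \sum_{s,a} N_{sa}(\tau_\delta)$ (valid because $\tau_\delta$ is a.s.\ finite by the $\delta$-PC assumption, so expectations are well defined and the sum is unambiguous).
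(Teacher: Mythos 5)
Your proposal is correct and follows exactly the paper's (essentially one-line) argument: the vector of expected visit counts $n^\star_{sa}=\mathbb{E}_{\mcal,\Alg}[N_{sa}(\tau_\delta)]$ is feasible for the program by Lemmas~\ref{lem1} and~\ref{lem2}, and its objective value equals $\mathbb{E}_{\mcal,\Alg}[\tau_\delta]$ since $\sum_{s,a}N_{sa}(\tau_\delta)=\tau_\delta$. Nothing is missing.
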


In (\ref{eq:LB1}), $n_{sa}$ is interpreted as the expected number of times $(s,a)$ is visited before the stopping time. Note that the above proposition provides  a lower bound for any value of $\delta$. We can further simplify this bound when restricting our attention to asymptotic regimes where $\delta$ goes to 0. In that case, the navigation constraints (\ref{eq:balance}) can be replaced by $\sum_a n_{sa} = \sum_{s',a'} p_{\mcal} (s |s',a') n_{s'a'}$ (by just renormalizing $n$ with $n/\log(1/\delta)$, and letting $\delta\rightarrow 0$). For small $\delta$ regimes, we can hence rewrite the lower bound as follows:
\begin{align*}
\underset{\delta \to 0}{\liminf}\ \frac{\mathbb{E}_{\mcal,\Alg}[\tau]}{\log(1/\delta)} \geq\  &\inf_{n\ge 0} \ \sum_{s,a}n_{sa}\label{eq:LB1}\\
&\textrm{s.t.}\ \ \forall s, \  \sum_a n_{sa} = \sum_{s',a'} p_{\mcal} (s |s',a') n_{s'a'},\nonumber\\
& \ \ \ \ \ \ \forall \mcal'\in \mathrm{Alt}(\mcal), \ \sum_{s,a}n_{sa}\textrm{KL}_{\mcal|\mcal'}(s,a) \ge 1.\nonumber
\end{align*}
One can easily conclude by showing that the value of the optimization program above is equal to the one in Eq~\ref{eq:LBasympt}.

\subsection{Full definition of the terms in the upper bound}
Let $\mathrm{Var}_{\max}^{\star}[\starV{\mcal}] = \max\limits_{s \in \scal}\ \mathrm{Var}_{s'\sim p_{\mcal}(.|s,\pi^\star(s))}[\starV{\mcal}(s')]$ denote the maximum variance of the value function on the trajectory of the optimal policy. In \cite{pmlr-v139-marjani21a} are defined the following functionals of $\mcal$: 
\begin{align*}
T_3(\mcal) &\triangleq \frac{2}{\Delta_{\min}^2 (1-\gamma)^2},\\
T_4(\mcal) &\triangleq  \min\Bigg(\frac{27}{\Delta_{\min}^2(1-\gamma)^3},\ \max\bigg(\frac{16\mathrm{Var}_{\max}^{\star}[\starV{\mcal}]}{\Delta_{\min}^2 (1-\gamma)^2}, \frac{6\spn{\starV{\mcal}}^{4/3}}{\Delta_{\min}^{4/3} (1-\gamma)^{4/3}} \bigg)\Bigg).
\end{align*}
Then $H^\star$ is simply defined as $H^\star = S(T_3(\mcal)+T_4(\mcal))$. Note that $T_4(\mcal) = \ocal\big(\frac{1}{\Delta_{\min}^2(1-\gamma)^3} \big)$.

\section{Sampling rule}\label{sec:appendix_sampling}
Recall that $\zcal \triangleq \scal \times \acal$ denotes the set of state-action pairs. Any policy $\pi$ induces a Markov Chain on $\zcal$ whose kernel is defined by:
\begin{equation*}
    P_{\pi}((s,a), (s',a')) = P(s'|s,a) \pi(a'|s').
\end{equation*}
\paragraph{\underline{Fact 1}:} Note that if it takes $m$ steps to move between any pair of states $(s,s')$ with non-zero probability, then we can move between any pair of state-actions $((s,a), (s',a'))$ in at most $m+1$ steps by playing the policy:
\begin{align*}
    \Tilde{\pi}(x) = \begin{cases}
    a'  \quad \textrm{if $x=s'$,}\\
    \pi_{\hookrightarrow s'}(x) \quad \textrm{otherwise}
    \end{cases}
\end{align*}
where $\pi_{\hookrightarrow s'}$ is the policy corresponding to shortest path to $s'$. Finally, for the sake of simplicity, we will note $P_t \triangleq P_{\pi_t}$.

\subsection{Almost sure forced exploration: proof of Lemma \ref{lemma:visits}}
Consider the event $\ecal \triangleq \big(\exists z\in \zcal,\ \exists M > 0,\ \forall t \geq 1,\ N_{z}(t) < M \big)$. Observe that $\ecal = \bigcup\limits_{z \in \zcal} \ecal_{z}$, where for $z \in \zcal$, $\ecal_z \triangleq \big(\exists M > 0,\ \forall t \geq 1,\ N_{z}(t) < M \big)$. We will prove that $\P(\ecal_{z'}) = 0$ for all $z'$, which implies the desired result. From Fact 1 above, we have:
\begin{align}
    \forall (z,z') \in \zcal^2,\ \exists r \in [|1,m+1|],\ \exists \pi \in \Pi,\ P_{\pi}^r(z,z') > 0, 
\label{eq:communicating}
\end{align}
where $P_{\pi}^r$ is the $r$-th power of the transition matrix induced by policy $\pi$. Therefore: 
$$\eta = \underset{z,z'}{\min} \max_{\substack{1 \leq r \leq  m+1\\ \pi \in \Pi}} P_{\pi}^r(z,z') > 0$$
is well defined. Fix $z \in \zcal$ and let $\pi, r_z$ be a policy and an integer satisfying the property (\ref{eq:communicating}) above for the pair $(z,z')$. Observe that: 
\begin{align*}
P_{t} \geq \varepsilon_t P_{\pi_u} \geq \frac{\varepsilon_t}{A} P_{\pi}
\label{eq:min_coverage}
\end{align*}
where the matrix inequality is entry-wise. Now define the stopping times $(\tau_k(z))_{k \geq 1}$ where the agent reaches state-action $z$ for the $k$-th time\footnote{We restrict our attention to departure state-action pairs $z$ that are visited infinitely often. Such pairs always exist, therefore $\tau_k(z)$ is well defined.}. Then:
\begin{align*}
    \P\big(\ecal_{z'} |\ (\pi_t)_{t \geq 1},\ (\tau_k(z))_{k \geq 1} \big) &\leq \P\big(\exists N \geq 1, \forall k \geq N, X_{\tau_k(z)+r_z} \neq z'\ |\ (\pi_t)_{t \geq 1},\ (\tau_k(z))_{k \geq 1} \big)\\
    &\leq \sum_{N=1}^{\infty} \prod_{k=N}^{\infty} \P\big(X_{\tau_k(z)+r_z} \neq z'\ |\ \tau_k(z),\ (\pi_t)_{t \in [|\tau_k(z)+1,\tau_k(z)+r_z|]}  \big)\\
    &= \sum_{N=1}^{\infty} \prod_{k=N}^{\infty}\bigg[1- \bigg(\prod_{t =\tau_k(z)+1}^{\tau_k(z)+r_z}P_{\pi_t}\bigg)(z,z') \bigg]\\
    &\leq \sum_{N=1}^{\infty} \prod_{k=N}^{\infty}\bigg[1- \bigg(\prod_{t =\tau_k(z)+1}^{\tau_k(z)+r_z}\frac{\varepsilon_t}{A} P_{\pi}\bigg)(z,z')\bigg]\\
    &\leq \sum_{N=1}^{\infty} \prod_{k=N}^{\infty}\bigg[1- \frac{\eta}{A^{r_z}}\prod_{t =\tau_k(z)+1}^{\tau_k(z)+r_z}\varepsilon_t \bigg]\\
    &\leq \sum_{N=1}^{\infty} \prod_{k=N}^{\infty}\bigg[1- \frac{\eta}{A^{m+1}}\prod_{t =\tau_k(z)+1}^{\tau_k(z)+m+1}\varepsilon_t \bigg]\;.
\end{align*}
The second inequality comes from a union bound and the strong Markov property\footnote{This property is sometimes referred to as: Markov Chains start afresh after stopping times.}. The last inequality comes from the fact that $r_z \leq m+1$ and $\varepsilon_t \leq 1$. Now observe that the inequality above holds for all realizations of the sequences $(\pi_t)_{t \geq 1}$. Therefore, integrating that inequality over all possible sequences of policies yields:
\begin{align*}
 \forall z \in \zcal,\ \P\big(\ecal_{z'} |\ (\tau_k(z))_{k \geq 1} \big) \leq \sum_{N=1}^{\infty} \prod_{k=N}^{\infty}\bigg[1- \frac{\eta}{A^{m+1}}\prod_{t =\tau_k(z)+1}^{\tau_k(z)+m+1}\varepsilon_t \bigg]  \;.
\end{align*}
We can already see that if state-action $z$ is visited "frequently enough" ($\tau_k(z) \sim c. k$ for some constant $c$) then the right-hand side above will be zero. Since we know that a least one state-action $z$ is visited frequently enough, we consider the product over all state-action pairs $z$ of the probabilities above:
\begin{align}
\prod_{z \in \zcal} \P\bigg(\ecal_{z'} |\ (\tau_k(z))_{k \geq 1} \bigg) &\leq \sum_{(N_1,\ldots,N_{SA}) \in (\mathbb{N}^\star)^{SA}}\ \prod_{z \in \zcal} \prod_{k=N_z}^{\infty}\bigg[1- \frac{\eta}{A^{m+1}}\prod_{t =\tau_k(z)+1}^{\tau_k(z)+m+1}\varepsilon_t \bigg] \\
& \triangleq \sum_{(N_1,\ldots,N_{SA})} a_{(N_1,\ldots,N_{SA})} \;.\nonumber
\label{eq:ineq1}
\end{align}
We will now show that $a_{(N_1,\ldots,N_{SA})} = 0$ for all tuples $(N_1,\ldots,N_{SA})$:
\begin{align*}
 a_{(N_1,\ldots,N_{SA})} &\leq  \prod_{z \in \zcal} \prod_{k= \max\limits_{z} N_z}^{\infty}\bigg[1- \frac{\eta}{A^{m+1}}\prod_{t =\tau_k(z)+1}^{\tau_k(z)+m+1}\varepsilon_t \bigg]  \\
 & = \prod_{k= \max\limits_{z} N_z }^{\infty} \prod_{z \in \zcal} \bigg[1- \frac{\eta}{A^{m+1}}\prod_{t =\tau_k(z)+1}^{\tau_k(z)+m+1}\varepsilon_t \bigg] \;.
\end{align*}
Now observe that for all $k \geq 1$ there exists $z_k \in \zcal$ such that $\tau_k(z_k) \leq S A k$, i.e., at least one state-action has been visited $k$ times before time step $SA k$. For that particular choice of $z_k$ and since $(\varepsilon_t)_{t \geq 1}$ is decreasing, we get:
\begin{align*}
 a_{(N_1,\ldots,N_{SA})} &\leq \prod_{k= \max\limits_{z}N_{z} }^{\infty} \bigg[1- \frac{\eta}{A^{m+1}}\prod_{t =\tau_k(z_k)+1}^{\tau_k(z_k)+m+1}\varepsilon_t \bigg] \\
 & \leq \prod_{k= \max\limits_{z}N_{z} }^{\infty} \bigg[1- \frac{\eta}{A^{m+1}}\prod_{t = SA.k+1}^{SA.k+m+1}\varepsilon_t \bigg]. \\
\end{align*}
For the choice of $\varepsilon_t = t^{-\frac{1}{m+1}}$ the right-hand side above is zero. To sum up, for all realizations of $(\tau_k(z))_{z \in \zcal, k \geq 1}$:
\begin{align*}
 \prod_{z \in \zcal} \P\bigg(\ecal_{z'} |\ (\tau_k(z))_{k \geq 1} \bigg) = 0 \;.   
\end{align*}
Therefore, for all $z'$: $\P\big(\ecal_{z'}\big) = 0$ and consequently: $\P(\ecal) = 0$.

\subsection{Minimal exploration rate for communicating MDPs}
Indeed if the agent visits state $S$ at time $k$, then the last $S-1$ transitions before $k$ must have been to the right, ie $\P(s_k = S) \leq \prod_{j=k-S+1}^{k-1} \varepsilon_j \leq (\varepsilon_{k-S+1})^{S-1}$. Therefore $ \E[N_S(t)] \leq \sum_{k=S}^t (k-S+1)^{-\alpha(S-1)}$. In particular this implies that for $\alpha > \frac{1}{S-1}$, $\limsup_{t\to\infty} \E[N_S(t)] = M < \infty$. Therefore using the reverse Fatou lemma and Markov's inequality:
\begin{align*}
    \P(\forall t\geq 1, N_S(t) \leq & 2M) = \E\big[\limsup_{t\to\infty} \prod_{k=1}^t \indicator\{N_S(k)\leq 2M\}\big]\\ 
    &\geq \limsup_{t\to\infty} \E\big[ \prod_{k=1}^t \indicator\{N_S(k)\leq 2M\}\big] = \limsup_{t\to\infty} \E\big[\indicator\{N_S(t)\leq 2M\}\big] \geq \frac{1}{2}.
\end{align*}
\subsection{Minimal exploration rate for ergodic MDPs}
This is a consequence of Proposition 2 \cite{burnetas1997optimal}, stating that there exist $c_1,c_2, C>0$ such that for all $s$ and $t$ large enough, $\mathbb{P}_{\mcal, \Alg}[N_s(t)> c_1t] \ge 1-Ce^{-c_2t}$. A union bound yields: $\mathbb{P}_{\mcal, \Alg}[\forall s, N_s(t)> c_1t] \ge 1-CSe^{-c_2t}$. To extend this result to the numbers of visits at the various (state, action) pairs, we can derive a lower bound on $N_{sa}(t)$ given that $N_s(t)> c_1t$ by observing that a worst scenario (by monotonicity of $\varepsilon_s$) occurs when $s$ is visited only in the $c_1t$ rounds before $t$. We get $\mathbb{E}[N_{sa}(t) | N_s(t)> c_1t] \ge c_3t^{(1-\alpha)}$. Remarking that $N_{sa}(t+1) - N_s(t)\varepsilon_t$ is a sub-martingale with bounded increments, standard concentration arguments then imply that $\mathbb{P}_{\mcal, \Alg}[\forall s,a, N_{sa}(t)> {\frac{c_3}{2}}t^{(1-\alpha)}] \ge \phi(t)$, where $\phi(t)\to 1$. Next, define the random variable $Z_t=\prod_{s,a} \mathds{1}\{N_{sa}(t)> {\frac{c_3}{2}}t^{(1-\alpha)}\}$. Applying the reverse Fatou lemma, we get $1=\lim\sup_t\mathbb{E}[Z_t]\le \mathbb{E}[\lim\sup_t Z_t]$. From there, we directly deduce (by monotonicity of $t\mapsto N_{sa}(t)$) that a.s. $\lim_{t\to\infty} N_{sa}(t)=\infty$. 

\subsection{High probability forced exploration}
\begin{lemma}
Denote by $\tau_k(z)$ the k-th time the agent visits the state-action pair $z$. Under C-Navigation with exploration rate $\varepsilon_t = t^{-\frac{1}{2(m+1)}}$ we have: for all $\alpha \in (0,1)$, there exists a parameter $\eta > 0$ that only depends on $\mcal$ such that:
\begin{align*}
    \P\bigg(\forall z \in \zcal,\ \forall k \geq 1,\ \tau_k(z) \leq \lambda_\alpha  k^4\bigg) \geq 1 - \alpha,
\end{align*}
where $\lambda_\alpha \triangleq \frac{(m+1)^2}{\eta^2}\log^2(1+\frac{SA}{\alpha})$.
\label{lemma:HP_forced_exploration}
\end{lemma}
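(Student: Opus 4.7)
The plan is to union-bound over state-actions $z$ and indices $k$, reducing to a per-$(z,k)$ deviation inequality which I would prove via a Bernstein-type bound for a bounded martingale difference sequence built from a block decomposition of the trajectory. The proof of Lemma~\ref{lemma:visits} already shows that in any window of $m+1$ consecutive steps the algorithm can reach any target state-action $z$ with some non-negligible probability; I would quantify this uniform-in-starting-state lower bound and then accumulate it.

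Concretely, I would partition time into consecutive blocks $B_j = \{(j-1)(m+1)+1,\ldots,j(m+1)\}$ and set $X_j = \mathbbm{1}\{z \text{ is visited during } B_j\}$. As in Lemma~\ref{lemma:visits}, for any starting state-action $z'$ at time $(j-1)(m+1)$ there exists a deterministic policy $\pi$ and an integer $r \le m+1$ with $P_\pi^r(z',z) \ge \eta_2$, and the bound $P_t \ge (\varepsilon_t/A)P_\pi$ combined with $\varepsilon_t = t^{-1/(2(m+1))}$ yields
\[
p_j \;:=\; \P\bigl(X_j = 1 \,\big|\, \mathcal{F}_{(j-1)(m+1)}\bigr) \;\ge\; \frac{\eta_2}{A^{m+1}}\,\varepsilon_{j(m+1)}^{m+1} \;=\; \frac{\eta}{\sqrt{j(m+1)}},
\]
where the factor $1/A^{m+1}$ is absorbed into $\eta$ (matching the table of notations up to constants). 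Fixing $k \ge 1$ and $J = \lfloor \lambda_\alpha k^4/(m+1)\rfloor$, the blocks $B_1,\ldots,B_J$ fit inside $[1,\lambda_\alpha k^4]$, so $\{\tau_k(z) > \lambda_\alpha k^4\} \subseteq \{\sum_{j=1}^J X_j < k\}$.

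Next I would apply Freedman's inequality to the martingale $\sum_{j=1}^J (X_j - p_j)$: the increments are in $[-1,1]$ and the conditional variance is bounded by $\mu_J := \sum_{j \le J} p_j$. Using $\sum_{j=1}^J j^{-1/2} \ge \sqrt{J}$ gives
\[
\mu_J \;\ge\; \frac{\eta \sqrt{J}}{\sqrt{m+1}} \;\ge\; \frac{\eta\,\sqrt{\lambda_\alpha}\, k^2}{2(m+1)}.
\]
The stated choice of $\lambda_\alpha$ is large enough that $\mu_J \ge 2k$ for every $k \ge 1$, so setting $x = \mu_J - k \ge \mu_J/2$ in Freedman,
\[
\P\bigl(\tau_k(z) > \lambda_\alpha k^4\bigr) \;\le\; \exp\!\Bigl(-\frac{(\mu_J/2)^2}{2\mu_J + \mu_J/3}\Bigr) \;=\; \exp\!\Bigl(-\frac{c\,\eta\sqrt{\lambda_\alpha}\,k^2}{m+1}\Bigr)
\]
for an absolute $c>0$. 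Summing over $z \in \zcal$ and $k \ge 1$,
\[
\P\bigl(\exists z,k:\tau_k(z) > \lambda_\alpha k^4\bigr) \;\le\; SA \sum_{k\ge 1}\exp\!\Bigl(-\frac{c\,\eta\sqrt{\lambda_\alpha}}{m+1}\,k^2\Bigr),
\]
and plugging in $\lambda_\alpha = (m+1)^2 \log^2(1+SA/\alpha)/\eta^2$ makes the $k=1$ exponent of order $\log(1+SA/\alpha)$, which bounds the geometric-like tail sum by $\alpha$ (after folding the absolute constant $c$ into $\eta$, or equivalently tightening the claim by that constant).

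The main obstacle is not conceptual but bookkeeping: matching the universal constants in Freedman against the precise form $\lambda_\alpha = (m+1)^2\log^2(1+SA/\alpha)/\eta^2$ stated in the lemma; small adjustments to the constant in front, or to the definition of $\eta$, may be unavoidable. The only delicate conceptual step is the first one, where the per-block lower bound on $p_j$ must hold \emph{uniformly} over the state-action at the start of the block and be $\mathcal{F}_{(j-1)(m+1)}$-measurable, so that $(X_j-p_j)_j$ is genuinely a martingale difference sequence; the uniformity follows because the policy $\pi$ and exponent $r \le m+1$ realizing the bound $P_\pi^r(\cdot,z)\ge \eta_2$ can be chosen as a function of the starting state, and $\varepsilon$-mixing with $\pi_u$ makes the bound independent of the algorithm's own (C-Navigation) choices within the block.
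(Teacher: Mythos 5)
Your overall strategy is sound but genuinely different from the paper's. The paper does not use any concentration inequality: it decomposes the failure event over \emph{excursions}, bounding $\P\big(\tau_k(z)-\tau_{k-1}(z) > f(k)-f(k-1)\mid \tau_{k-1}(z)=n\big)$ by a product of infinity-norms of sub-stochastic matrices $Q_l(z)$ (the kernel with the row and column of $z$ deleted), where each block of $m+1$ consecutive factors contributes $1-\eta\prod_l\varepsilon_l$ by Lemma~\ref{lemma:substochastic_matrices}. Since each excursion only requires bounding the probability of \emph{zero} hits in a window, the bound is a plain product of per-block failure probabilities, which yields $b_k\le\exp(-\sqrt{\lambda}\,\eta k/(m+1))$ and the exact constant $\lambda_\alpha$ after summing the geometric series. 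Your route instead asks for at least $k$ hits among $J\approx\lambda_\alpha k^4/(m+1)$ blocks of $[1,\lambda_\alpha k^4]$, which forces a lower-tail bound for a sum of conditionally-Bernoulli indicators; you get a stronger $e^{-ck^2}$ per-term decay, but at the price of a martingale concentration step and of losing the exact constants. Your first step (the uniform, $\fcal$-measurable lower bound on $p_j$ via $P_t\ge\varepsilon_t P_{\pi_u}$ and the $m+1$-step reachability) is exactly the mechanism the paper uses inside Lemma~\ref{lemma:substochastic_matrices}, and it is correct.

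Two technical points need repair. First, your application of Freedman is wrong as written: the predictable quadratic variation of $\sum_j(X_j-q_j)$, with $q_j=\P(X_j=1\mid\fcal_{(j-1)(m+1)})$, is $\sum_j q_j(1-q_j)\le\sum_j q_j$, and $\sum_j q_j$ is only bounded \emph{below} by $\mu_J=\sum_j p_j$, not above; so you cannot take $\mu_J$ as the variance proxy. The conclusion $\P(\sum_j X_j<k)\le e^{-c\mu_J}$ is nevertheless true, but you should obtain it either by the conditional-MGF Chernoff argument ($\E[e^{-\theta\sum X_j}]\le\prod_j(1-p_j(1-e^{-\theta}))$, then optimize over $\theta$ with $\mu_J\ge 2k$) or by coupling $\sum_j X_j$ with a sum of independent $\mathrm{Ber}(p_j)$ that it stochastically dominates. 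Second, the constants: with the stated $\lambda_\alpha$ your $\mu_J\ge 2k$ requirement already fails at $k=1$ when $\log(1+SA/\alpha)<4$, and the absolute constant $c$ in the exponent prevents the final sum from being exactly $\alpha$. Since the lemma only asserts the \emph{existence} of an $\mcal$-dependent $\eta>0$, you can absorb these into a smaller $\eta$, so the statement as quantified survives; but you would not recover the specific $\eta=\eta_1\eta_2$ fixed in the paper's notation table and reused in Lemma~\ref{lemma:concentration_empirical_mdps}.
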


\begin{corollary}
Denote by $N_{z}(t)$ the number of times the agent visits state-action $z$ up to and including time step $t$. Then under the same notations of the lemma above we have: for all $\alpha \in (0,1)$:
\begin{align*}
    \P\bigg(\forall z \in \zcal,\ \forall t \geq 1,\ N_z(t) \geq \bigg(\frac{t}{\lambda_\alpha}\bigg)^{1/4} - 1\bigg) \geq 1 - \alpha.
\end{align*}
\label{corollary:forced_exploration}
\end{corollary}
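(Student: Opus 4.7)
The corollary is a direct rewriting of Lemma~\ref{lemma:HP_forced_exploration}. On the high-probability event $E_\alpha = \{\forall z \in \zcal,\ \forall k \geq 1,\ \tau_k(z) \leq \lambda_\alpha k^4\}$, fix any $t \geq 1$ and any $z \in \zcal$, and set $k = \lfloor (t/\lambda_\alpha)^{1/4}\rfloor$. When $k \geq 1$, we have $\tau_k(z) \leq \lambda_\alpha k^4 \leq t$, so $N_z(t) \geq k \geq (t/\lambda_\alpha)^{1/4} - 1$; when $k = 0$, the right-hand side is negative and the bound holds trivially. Thus my plan reduces to establishing the lemma.

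The core ingredient is a uniform lower bound on the probability of reaching any prescribed target $z$ within $m+1$ steps. Under C-Navigation, $P_{\pi_{t+i}} \geq \varepsilon_{t+i} P_{\pi_u}$ entrywise because of the uniform-policy mixture component, so expanding the matrix product gives
\[
\prod_{i=1}^{r} P_{\pi_{t+i}} \;\geq\; \Big(\prod_{i=1}^{r}\varepsilon_{t+i}\Big)\, P_{\pi_u}^{r}
\]
for any $r \geq 1$. Choosing $r \leq m+1$ such that $P_{\pi_u}^r(z', z) \geq \eta_2$ (guaranteed by the definition of $\eta_2$ in Table~1) and using $\varepsilon_s = s^{-1/(2(m+1))}$, which yields $\prod_{i=1}^{r}\varepsilon_{t+i} \geq (t+m+1)^{-1/2}$, I conclude that starting from any state-action at time $t$, the probability of visiting $z$ within the next $m+1$ steps is at least $\eta_2/\sqrt{t+m+1}$.

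To exploit this estimate I partition the trajectory into consecutive blocks $B_j = \{j(m+1)+1,\ldots,(j+1)(m+1)\}$ and set $Y_j = \indicator\{z \text{ is visited during } B_j\}$. The strong Markov property combined with the previous step gives $\P(Y_j = 1 \mid \fcal_{j(m+1)}) \geq q_j := \eta_2/\sqrt{(j+1)(m+1)}$ almost surely, so $N_z(T) \geq \sum_{j < T/(m+1)} Y_j$ stochastically dominates an independent Bernoulli sum $\sum_j X_j$ with $X_j \sim \mathrm{Bernoulli}(q_j)$; the lower tail is then controlled by a multiplicative Chernoff bound.

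To close with the prescribed $\log^2$ dependence, I use dyadic peeling rather than a direct union bound over all $k$: it suffices to show that $N_z(T_i) \geq 2^i$ for every $i \geq 0$, where $T_i = \lambda_\alpha \, 2^{4i}/16$, because this implies $\tau_k(z) \leq 16\,\lambda_\alpha\, k^4$ for every $k \geq 1$ (the factor $16$ being absorbed into the constant $\eta$). For each $i$ the expected count satisfies $\E\bigl[\sum_j X_j\bigr] \geq \eta_2\,\sqrt{T_i}/(m+1) = \bigl(\eta_2/(4\eta)\bigr)\, L\, 2^{2i}$ with $L = \log(1+SA/\alpha)$, which is much larger than $2^i$ as soon as $\eta$ is chosen as a small enough multiple of $\eta_2$. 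Chernoff then yields $\P(N_z(T_i) < 2^i) \leq \exp(-c\, L\, 2^{2i})$ for some constant $c > 0$, and summing geometrically over $i \geq 0$ produces $O\bigl((1+SA/\alpha)^{-c}\bigr)$; a final union bound over the $SA$ targets is absorbed by taking $c \geq 2$, equivalently $\eta$ small enough. The main obstacle is precisely this bookkeeping --- tuning $\eta$ and the Chernoff constants so that the final probability is at most $\alpha$ rather than a polynomial in $SA/\alpha$ --- which is made possible by the slack between the tight $k^2$ scaling one could obtain and the chosen weaker $k^4$ scaling (cf.\ the paper's footnote about $\zeta < 1/2$).
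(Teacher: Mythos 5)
Your first paragraph is exactly how the corollary follows from Lemma~\ref{lemma:HP_forced_exploration}; the paper states the corollary without a separate proof precisely because this floor-function reduction is immediate, so that part is fine. Where you diverge is in re-deriving the lemma itself. The paper controls, for each $k$, the tail of the $k$-th excursion length via the identity (\ref{eq:Recurrence_excursion}) and the substochastic-matrix bound of Lemma~\ref{lemma:substochastic_matrices} (namely $\norm{\prod_{l} Q_l}_\infty \le 1 - \eta\prod_l \varepsilon_l$ over windows of length $m+1$), then tunes $f(k)=\lambda k^4$ so that the resulting product telescopes into $e^{-\sqrt{\lambda}\eta k/(m+1)}$ and sums exactly to $\alpha$. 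You instead lower-bound the per-block hitting probability directly ($P_{\pi_{t+i}} \ge \varepsilon_{t+i}P_{\pi_u}$ entrywise, hence a hitting probability of order $\eta_2 t^{-1/2}$ per $(m+1)$-block), dominate the visit count by an independent Bernoulli sum, and apply a Chernoff bound with dyadic peeling over time scales. Both routes are legitimate; yours trades the matrix-product machinery for a counting argument and even gains a constant ($\eta_2$ versus $\eta_1\eta_2$) in the per-window probability, while the paper's excursion decomposition avoids any case analysis over scales and lands on the stated threshold with equality.

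The one genuine problem is your handling of constants. You repeatedly invoke the freedom to choose ``$\eta$ small enough'' (to make $\mu_i$ dominate $2^i$ at $i=0$ and to push the Chernoff exponent $c$ up to $2$), but $\eta = \eta_1\eta_2$ is a fixed quantity determined by $\mcal$, not a design parameter: for instance with $A=2$ one can have $\eta_1=1/2$, and for $\alpha$ close to $1$ with small $SA$ the quantity $L=\log(1+SA/\alpha)$ can fall below the threshold your $i=0$ Chernoff step requires. As written, your argument therefore establishes the corollary only with $\lambda_\alpha$ inflated by a universal constant factor (equivalently, with $\eta$ replaced by $\min(\eta,c_0)$ for some absolute $c_0$), not with the exact $\lambda_\alpha = (m+1)^2\eta^{-2}\log^2(1+SA/\alpha)$ fixed in the lemma whose notation the corollary inherits. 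This is harmless for every downstream use in the paper (the constant is absorbed into $B$ and $C$ of Lemma~\ref{lemma:concentration_empirical_mdps}), but it means the literal statement is not recovered; you should either state the modified constant explicitly or redo the bookkeeping without assuming $\eta$ is tunable.
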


\begin{proof}
Let $f$ be some increasing function such that $f(\mathbb{N}) \subset \mathbb{N}$ and $f(0) = 0$ and define the event $\ecal = \bigg(\forall z \in \zcal,\ \forall k \geq 1,\ \tau_k(z) \leq f(k) \bigg)$. We will prove the following more general result: 
\begin{align}
 \P(\ecal^c) \leq SA \sum\limits_{k=1}^{\infty} \prod\limits_{j=0}^{\floor{\frac{f(k) - f(k-1) - 1}{m+1}} - 1} \bigg[1 - \eta \prod\limits_{l=1}^{m+1} \varepsilon_{f(k-1) + (m+1).j + l} \bigg]    \;,
\end{align}
where $\eta$ is a constant depending on the communication properties of $\mcal$. Then we will tune $f(k)$ and $\varepsilon_t$ so that the right-hand side is less than $\alpha$. First, observe that:
\begin{equation*}
\ecal^c = \bigcup\limits_{z \in \zcal}\ \bigcup\limits_{k=1}^{\infty} \bigg(\tau_k(z) > f(k)\ \mathrm{and}\ \forall j \leq k-1,\ \tau_j(z) \leq f(j)  \bigg)\;. \end{equation*}
Using the decomposition above, we upper bound the probability of $\ecal^c$ by the sum of probabilities for $k \geq 1$ that the $k$-th excursion from and back to $z$ takes too long:
\begin{align}
 \P(\ecal^c) &\leq  \sum\limits_{z \in \zcal} \bigg[\P(\tau_1(z)> f(1)) + \sum\limits_{k=2}^{\infty} \P\bigg(\tau_k(z) > f(k)\ \mathrm{and}\ \forall j \leq k-1,\ \tau_j(z) \leq f(j)  \bigg) \bigg] \nonumber\\
 & \leq \sum\limits_{z \in \zcal}  \bigg[\P(\tau_1(z)> f(1)) + \sum\limits_{k=2}^{\infty} \P\bigg(\tau_k(z) > f(k)\ \mathrm{and}\  \tau_{k-1}(z) \leq f(k-1) \bigg)\bigg]\nonumber\\
 &\leq \sum\limits_{z \in \zcal} \bigg[\P(\tau_1(z)> f(1)) + \sum\limits_{k=2}^{\infty} \P\bigg(\tau_k(z) - \tau_{k-1}(z)> f(k) - f(k-1),\ \tau_{k-1}(z) \leq f(k-1) \bigg)\bigg] \nonumber\\
 &\leq \sum\limits_{z \in \zcal}\bigg[\P(\tau_1(z)> f(1))+\sum\limits_{k=2}^{\infty} \sum\limits_{n =1}^{f(k-1)} \P\big(\tau_k(z) - \tau_{k-1}(z)> f(k) - f(k-1) \big| \tau_{k-1}(z) = n \big) \P(\tau_{0}(z) = n) \bigg] \nonumber \\ 
 & = \sum\limits_{z \in \zcal} \big[a_{1}(z) + \sum\limits_{k=2}^{\infty}\sum\limits_{n=1}^{f(k-1)} a_{k,n}(z) \P(\tau_{k-1}(s) = n) \big]\;,
\label{eq:4}
\end{align}
where
\begin{align*}
 a_{1}(z) &\triangleq \P(\tau_1(z)> f(1))\;, \\
 \forall k \geq 2 \ \forall n\in [|1,f(k-1)|],\ a_{k,n}(z) &\triangleq \P\big(\tau_k(z) - \tau_{k-1}(z)> f(k) - f(k-1) \big| \tau_{k-1}(z) = n \big)\;.
\end{align*}
We will now prove an upper bound on  $a_{k,n}(z)$ for a fixed $z \in \zcal$ and $k \geq 1$.
\paragraph{1) \underline{Upper bounding the probability that an excursion takes too long:}} Let us rewrite $P_t$ as 
$$P_t = 
\begin{pmatrix}
  \begin{matrix}
  \\
  \quad Q_t(z) \quad \\
  \\
  \end{matrix}
  & \rvline & [P_t(z',z)]_{z' \neq z} \\
\hline
   [P_t(z, z')]_{z' \neq z}^{T} & \rvline &
  \begin{matrix}
    P_t(z,z)
  \end{matrix}
\end{pmatrix}\;,
$$ 
so that state-action $z$ corresponds to the last row and last column. Further let $p_t(z', \neg z) \triangleq [P_t(z', z")]_{z" \neq z}$ denote the vector of probabilities of transitions at time $t$ from $z'$ to  states $z"$ different from $z$. Using a simple recurrence on $N$, one can prove that for all $k,N,n \geq 1$ we have:
\begin{align}
\P\bigg(\tau_k(z) - \tau_{k-1}(z)> N \bigg| \tau_{k-1}(z) = n \bigg) = p_n(z, \neg z)\transpose \bigg( \prod\limits_{j=n+1}^{n+N-1} Q_j(z) \bigg) \mathbbm{1} \;.
\label{eq:Recurrence_excursion}
\end{align}
Using Lemma \ref{lemma:substochastic_matrices}, there exists $\eta > 0$ (that only depends on $\mcal$) such that for all $n \geq 1$ and all sequences $(\pi_t)_{t\geq 1}$ such that $\pi_t \geq \varepsilon_t \pi_u$ we have: 
\begin{equation}
\norm{\prod\limits_{l=n+1}^{n+m+1} Q_l(z)}_\infty \leq 1 - \eta \prod\limits_{l=n+1}^{n+m+1} \varepsilon_l\;. 
\label{eq:lemma_Q}
\end{equation}
Therefore using (\ref{eq:Recurrence_excursion}) for $N = f(k) - f(k-1)$ and breaking the matrix product into smaller product terms of $(m+1)$ matrices, we get for $k \geq 2$:
\begin{align}
a_{k,n}(z) &= \P\bigg(\tau_k(z) - \tau_{k-1}(z)> f(k) - f(k-1) \bigg| \tau_{k-1}(z) = n \bigg) \nonumber\\
    &= \E_{(\pi_t)_{t\geq 1}}\bigg[\P\bigg(\tau_k(s) - \tau_{k-1}(s)> f(k) - f(k-1) \bigg| \tau_{k-1}(z) = n, (\pi_t)_{t\geq 1} \bigg)\bigg] \nonumber\\
    &= \E_{(\pi_t)_{t\geq 1}}\bigg[p_n(z, \neg z)\transpose \bigg( \prod\limits_{j=n+1}^{n+f(k)-f(k-1)-1} Q_j(z) \bigg) \mathbbm{1}\bigg]\nonumber \\
    &\leq \norm{\prod\limits_{l=n+1}^{n+f(k)-f(k-1)-1} Q_l(z)}_\infty\nonumber \\
    &\leq \norm{\prod\limits_{l=(m+1)\floor{\frac{f(k) - f(k-1) - 1}{m+1}}+1}^{f(k) - f(k-1) - 1} Q_{n+l}(z)}_\infty \times \prod\limits_{j=0}^{\floor{\frac{f(k) - f(k-1) - 1}{m+1}} - 1} \norm{\prod_{l=1}^{m+1} Q_{n+(m+1)j+l}(z)}_{\infty}\nonumber\\
    &\leq \prod\limits_{j=0}^{\floor{\frac{f(k) - f(k-1) - 1}{m+1}} - 1} \bigg[1 - \eta \prod_{l=1}^{m+1} \varepsilon_{n+(m+1)j+l}\bigg]\nonumber\\
    &\leq \prod\limits_{j=0}^{\floor{\frac{f(k) - f(k-1) - 1}{m+1}} - 1} \bigg[1 - \eta \prod_{l=1}^{m+1} \varepsilon_{f(k-1)+(m+1)j+l}\bigg] \triangleq b_k\;,
\label{eq:5}
\end{align}
where in the fourth line we used that $\norm{p_n(z, \neg z)}_1 \leq 1$. The sixth line uses the fact that the matrices $Q$ are substochastic. The last line is due to the fact that  $n \leq f(k-1)$ and $t \mapsto \varepsilon_t$ is decreasing. Similarly, one can prove that:
\begin{align}
    a_1(z) &\leq  \prod\limits_{j=0}^{\floor{\frac{f(1)- 1}{m+1}} - 1} \bigg[1 - \eta \prod_{l=1}^{m+1} \varepsilon_{(m+1)j+l}\bigg] \nonumber\\
    &= \prod\limits_{j=0}^{\floor{\frac{f(1) - f(0) - 1}{m+1}} - 1} \bigg[1 - \eta \prod_{l=1}^{m+1} \varepsilon_{f(0)+(m+1)j+l}\bigg] \triangleq b_1\;,
\label{eq:5.5}
\end{align}
where we used the fact that $f(0) = 0$. Now we only have to tune $f(k)$ and $\varepsilon_t$ so that $\sum\limits_{k=1}^\infty b_k  < \frac{\alpha}{SA}$ and conclude using (\ref{eq:4}), (\ref{eq:5}) and (\ref{eq:5.5}).

\paragraph{2) \underline{Tuning $f$ and the exploration rate:}} Since the sequence $(\varepsilon_t)_{t\geq 1}$ is decreasing we have:
\begin{align*}
    b_k &= \prod\limits_{j=0}^{\floor{\frac{f(k) - f(k-1) - 1}{m+1}} - 1} \bigg[1 - \eta \prod_{l=1}^{m+1} \varepsilon_{f(k-1)+(m+1)j+l}\bigg] \\
    &\leq \prod\limits_{j=0}^{\floor{\frac{f(k) - f(k-1) - 1}{m+1}} - 1} \bigg[1 - \eta  \big(\varepsilon_{f(k-1)+(m+1)j+S}\big)^{m+1}\bigg] \\
    &\leq \bigg[1 - \eta  \big(\varepsilon_{f(k)}\big)^{m+1} \bigg]^{\floor{\frac{f(k) - f(k-1) - 1}{m+1}}}.
\end{align*}
For $f(k) = \lambda.k^4$ where $\lambda \in \mathbb{N}^\star$ and $\varepsilon_t = t^{-\frac{1}{2(m+1)}}$ we have: $\floor{\frac{f(k) - f(k-1) - 1}{m+1}} \geq \frac{\lambda k^3}{(m+1)}$ and $\big(\varepsilon_{f(k)}\big)^{m+1} = \frac{1}{\sqrt{\lambda} k^2}$, implying:
\begin{align*}
b_k &\leq \bigg[1 - \frac{\eta}{\sqrt{\lambda} k^2}\bigg]^{\frac{\lambda k^3}{(m+1)}}\\
&\leq \exp\bigg(\frac{-\lambda k^3 \eta }{(m+1)\sqrt{\lambda} k^2}\bigg) = \exp\bigg(-\frac{\lambda^{1/2}k \eta}{m+1}\bigg) \;.   
\end{align*}
Summing the last inequality, along with (\ref{eq:4}), (\ref{eq:5}) and (\ref{eq:5.5}) we get:
\begin{align*}
    \P(\ecal^c) &\leq SA \sum\limits_{k=1}^{\infty} b_k\\
    &\leq SA \sum\limits_{k=1}^{\infty}\exp\bigg(-\frac{\lambda^{1/2}k\eta}{m+1}\bigg) \\
    &= \frac{SA \exp\big(-\frac{\lambda^{1/2}\eta}{m+1}\big)}{1 - \exp\big(-\frac{\lambda^{1/2} \eta}{m+1}\big)} \triangleq g(\lambda)\;.
\end{align*}
For $\lambda_\alpha \triangleq \frac{(m+1)^2}{\eta^2}\log^2(1+\frac{SA}{\alpha})$, we have $g(\lambda_\alpha) = \alpha$, which gives the desired result. 
\begin{remark}
It is natural that $\lambda$ depends on $\eta$, which expresses how well connected is the MDP under the uniform policy, see proof of Lemma \ref{lemma:substochastic_matrices}.
\end{remark}
\end{proof}

\subsection{An Ergodic Theorem for non-homogeneous Markov Chains}
We start with some definitions and a technical result. Let $\{P_{\pi}, \pi \in \Pi \}$ be a collection of Markov transition kernels on the state-action space $\zcal$, indexed by policies $\pi \in \Pi$. For any Markov transition kernel $P$, bounded function $f$ and probability distribution $\mu$, we define:
\begin{equation*}
    Pf(z) \triangleq \sum\limits_{z'\in \zcal} P(z,z')f(z')\quad \textrm{and}\quad \mu P(z) \triangleq \sum\limits_{z'\in \zcal} \mu(z')P(z',z).
\end{equation*}
For a measure $\mu$ and a function $f$, $\mu(f) = \E_{X\sim\mu}[f(X)]$ denotes the mean of $f$ w.r.t. $\mu$. Finally, for two policies $\pi$ and $\pi'$ we define $D(\pi,\pi') \triangleq \norm{P_{\pi} - P_{\pi'}}_\infty = \max\limits_{z\in\zcal}\norm{P_{\pi}(z,.) - P_{\pi'}(z,.)}_1$.\\
We consider a $\zcal\times\Pi$-valued process $\{(z_t, \pi_t), t\geq1\}$ such that $(z_t,\pi_t)$ is $\fcal_t$-adapted and for any bounded measurable function $f$:
$$ 
\E[f(z_{t+1})|\fcal_t] = P_{\pi_t} f(z_t).
$$
The next result is adapted from \cite{Fort2011}. There the authors prove an ergodic theorem for adaptive Markov Chain Monte-Carlo (MCMC) algorithms with a general state space and a parameter-dependent function. For the sake of self-containedness, we include here the proof of their result in the simple case of finite state space chains with a function that does not depend on the policy $\pi_t$. 
\begin{proposition}\label{prop:ergodic}(Corollary 2.9, \cite{Fort2011})
Assume that:
\begin{align*}
    &\textrm{\hypertarget{assumption:B1}{(B1)}}\ \forall t\geq1,\ P_t\ \textrm{is ergodic. We denote by $\omega_t$ its stationary distribution.}\\
    &\textrm{\hypertarget{assumption:B2}{(B2)}}\ \textrm{There exists an ergodic kernel $P$ such that $\norm{P_t - P}_\infty \underset{t \to\infty}{\longrightarrow} 0$ almost surely.} \\
    &\textrm{\hypertarget{assumption:B3}{(B3)} There exists two constants $C_t$ and $\rho_t$ such that for all $n\geq1$, } \norm{P_t^n - W_t}_\infty \leq C_t \rho_t^n\ ,\\
    &\quad \quad \textrm{where $W_t$ is a rank-one matrix whose rows are equal to $\omega_t\transpose$}.\\
    &\textrm{\hypertarget{assumption:B4}{(B4)}}\ \textrm{Denote by } L_t \triangleq C_t (1-\rho_t)^{-1}. \textrm{ Then } \limsup\limits_{t\to\infty} L_t < \infty \textrm{ almost surely}.\textsc{ (UNIFORM ERGODICITY)}\\
    &\textrm{\hypertarget{assumption:B5}{(B5)}}\  D(\pi_{t+1},\pi_t) \underset{t \to\infty}{\longrightarrow} 0 \textrm{ almost surely}. \textsc{ (STABILITY)}
\end{align*}
Finally, denote by $\omega^\star$ the stationary distribution of $P$. Then for any bounded non-negative function $f:\ \zcal \to \mathbb{R}^{+}$ we have:
\begin{align*}
    \frac{\sum\limits_{k=1}^{t} f(z_k)}{t} \underset{t \to\infty}{\longrightarrow}  \omega^\star(f)
\end{align*}
almost surely.
\label{proposition:ergodic_thm}
\end{proposition}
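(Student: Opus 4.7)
The plan is to follow the Poisson equation plus martingale-decomposition approach that is standard in the analysis of adaptive MCMC algorithms, cf.\ \cite{Fort2011}. For each $t \geq 1$, introduce the Poisson solution for $P_t$,
\begin{equation*}
g_t(z) \;\triangleq\; \sum_{n \geq 0} \bigl( P_t^n f(z) - \omega_t(f) \bigr).
\end{equation*}
By \hyperlink{assumption:B3}{(B3)} the series converges absolutely with $\|g_t\|_\infty \leq L_t\,\|f\|_\infty$, and by \hyperlink{assumption:B4}{(B4)} we have $\limsup_t \|g_t\|_\infty < \infty$ almost surely. The function $g_t$ satisfies $g_t - P_t g_t = f - \omega_t(f)$.

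The next step is to decompose
\begin{equation*}
\sum_{k=1}^{t} \bigl[ f(z_k) - \omega^\star(f) \bigr] = \sum_{k=1}^{t} \bigl[ f(z_k) - \omega_k(f) \bigr] + \sum_{k=1}^{t} \bigl[ \omega_k(f) - \omega^\star(f) \bigr].
\end{equation*}
The second sum divided by $t$ converges a.s.\ to zero: \hyperlink{assumption:B2}{(B2)} together with the continuity of the stationary distribution in the kernel (on a finite state space, at any ergodic limit) yields $\omega_t \to \omega^\star$ a.s., and Cesàro averaging concludes. For the first sum, I would substitute the Poisson equation and telescope:
\begin{equation*}
\sum_{k=1}^{t} \bigl[ f(z_k) - \omega_k(f) \bigr] = M_{t+1} + g_1(z_1) - g_t(z_{t+1}) + \sum_{k=2}^{t} \bigl[ g_k(z_k) - g_{k-1}(z_k) \bigr],
\end{equation*}
where $M_{t+1} \triangleq \sum_{k=1}^{t}\bigl[ g_k(z_{k+1}) - P_k g_k(z_k) \bigr]$ is a zero-mean $(\fcal_t)$-martingale with increments bounded by $2\|g_k\|_\infty$. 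By \hyperlink{assumption:B4}{(B4)} these increments are uniformly bounded a.s., hence $M_{t+1}/t \to 0$ a.s.\ (e.g.\ by Azuma-Hoeffding or the strong law for bounded-increment martingales). The boundary contributions $g_1(z_1)/t$ and $g_t(z_{t+1})/t$ also vanish thanks to \hyperlink{assumption:B4}{(B4)}.

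The heart of the argument, and what I expect to be the main obstacle, is controlling the Cesàro sum $\frac{1}{t}\sum_{k=2}^{t}[g_k(z_k) - g_{k-1}(z_k)]$. My plan is to show that $\|g_k - g_{k-1}\|_\infty \to 0$ a.s. Subtracting the Poisson equations for consecutive kernels,
\begin{equation*}
(I - P_k)(g_k - g_{k-1}) = (P_k - P_{k-1})\,g_{k-1} + \bigl( \omega_{k-1}(f) - \omega_k(f) \bigr)\mathbbm{1}.
\end{equation*}
Applying the resolvent $Z_k = (I - P_k + \mathbbm{1}\omega_k\transpose)^{-1}$, whose operator norm is controlled by $L_k$ via \hyperlink{assumption:B3}{(B3)}, gives
\begin{equation*}
\|g_k - g_{k-1}\|_\infty \leq L_k \bigl[ D(\pi_k,\pi_{k-1})\,\|g_{k-1}\|_\infty + |\omega_k(f) - \omega_{k-1}(f)| \bigr],
\end{equation*}
while a parallel resolvent argument based on the identity $\omega_k - \omega_{k-1} = \omega_{k-1}(P_{k-1} - P_k) Z_k$ bounds $|\omega_k(f) - \omega_{k-1}(f)|$ by a constant multiple of $L_k\, D(\pi_k,\pi_{k-1})\,\|f\|_\infty$. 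Combining with the uniform bound on $L_k$ from \hyperlink{assumption:B4}{(B4)} and the stability hypothesis \hyperlink{assumption:B5}{(B5)}, we obtain $\|g_k - g_{k-1}\|_\infty \to 0$ a.s., and Cesàro averaging finishes the proof.
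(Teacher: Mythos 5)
Your proposal is correct and follows essentially the same route as the paper's proof: the same Poisson-equation-plus-martingale decomposition, the same Cesàro treatment of the drift term $\sum_k[\omega_k(f)-\omega^\star(f)]$ via convergence of the stationary distributions (which the paper makes quantitative with Schweitzer's resolvent bound $\norm{\omega_k-\omega^\star}_1\leq \kappa_P\norm{P_k-P}_\infty$), and the same use of \hyperlink{assumption:B4}{(B4)}--\hyperlink{assumption:B5}{(B5)} to kill the boundary and correction terms. The only cosmetic differences are that you bound $\norm{g_k-g_{k-1}}_\infty$ through a resolvent perturbation identity (where you should also account for the constant $\omega_k(g_{k-1})\mathbbm{1}$, since $I-P_k$ is singular; that term is likewise $O(D(\pi_k,\pi_{k-1}))$) whereas the paper bounds $\norm{P_k\widehat{f}_k-P_{k-1}\widehat{f}_{k-1}}_\infty$ directly via a series argument, and that the paper invokes the martingale strong law (Hall--Heyde, Theorem 2.18) rather than Azuma--Hoeffding because the increment bounds $L_k$ are random rather than deterministic.
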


\begin{proof}
Consider the difference
\begin{align}
    D &= \frac{\sum\limits_{k=1}^{t} f(z_k)}{t} - \omega(f) \nonumber\\
    &= \underbrace{\frac{f(z_1) - \omega^\star(f)}{t}}\limits_{D_{1,t}} + \underbrace{\frac{\sum\limits_{k=2}^{t} \big[f(z_k) - \omega_{k-1}(f)\big]}{t}}\limits_{D_{2,t}} + \underbrace{\frac{\sum\limits_{k=2}^{t} \big[\omega_{k-1}(f) - \omega^\star(f)\big]}{t}}\limits_{D_{3,t}}.\\
\label{eq:diff_decomposition}
\end{align}
We clearly have: 
\begin{align}
|D_{1,t}| \leq \frac{\norm{f}_\infty}{t} \underset{t \to\infty}{\longrightarrow} 0.
\label{ineq:D1}
\end{align}
Next, by Lemma \ref{lemma:Schweitzer} there exists a constant $\kappa_P$ (that only depends on $P$) such that: $\norm{\omega_k - \omega^\star}_1 \leq \kappa_P \norm{P_k - P}_\infty$. Therefore:
\begin{align}
|D_{3,t}| &\leq \kappa_P \norm{f}_\infty \frac{\sum\limits_{k=1}^{t-1} \norm{P_{k} - P}_\infty }{t}\underset{t \to\infty}{\longrightarrow} 0,
\label{ineq:D3} 
\end{align}
where the convergence to zero is due to assumption \hyperlink{assumption:B2}{(B2)}. Now to bound $D_{2,t}$ we use the function $\widehat{f}_k$ solution to the Poisson equation $\big(\widehat{f}_k - P_k\widehat{f}_k\big)(.) = f(.) - \omega_{k}(f)$. By Lemma \ref{lemma:poisson}, $\widehat{f}_k(.) = \sum\limits_{n\geq0} P_k^n[f - \omega_{k}(f)](.)$ exists and is solution to the Poisson equation. Therefore we can rewrite $D_{2,t}$ as follows:
\begin{align}
    D_{2,t} &= \frac{\sum\limits_{k=2}^{t} \big[\widehat{f}_{k-1}(z_k) - P_{k-1}\widehat{f}_{k-1}(z_k)\big]}{t} \nonumber\\
    &= M_t + C_t + R_t,
\label{eq:D_2_decomposition}
\end{align}
where
\begin{align*}
& M_t \triangleq \frac{\sum\limits_{k=2}^{t} \big[\widehat{f}_{k-1}(z_k) - P_{k-1}\widehat{f}_{k-1}(z_{k-1})\big]}{t},\\ 
& C_t \triangleq  \frac{\sum\limits_{k=2}^{t} \big[P_{k}\widehat{f}_{k}(z_k) - P_{k-1}\widehat{f}_{k-1}(z_k)\big]}{t},\\
& R_t \triangleq  \frac{P_1\widehat{f}_{1}(z_1) - P_{t}\widehat{f}_{t}(z_{t})}{t}.
\end{align*}
\paragraph{Bounding $M_t$: }Note that $S_t \triangleq t M_t$ is a martingale since $\E[\widehat{f}_{k-1}(z_k) | \fcal_{k-1}] = P_{k-1}\widehat{f}_{k-1}(z_{k-1})$. Furthermore, by Lemma \ref{lemma:poisson}:
\begin{align*}
    |S_t - S_{t-1}| &= |\widehat{f}_{t-1}(z_t) - P_{k-1}\widehat{f}_{t-1}(z_{t-1})|\\
    &\leq 2\norm{\widehat{f}_{t-1}}_\infty\\
    &\leq 2 L_{t-1} \norm{f}_\infty.
\end{align*}
In particular, this implies that:
\begin{align*}
    \sum\limits_{k=2}^{\infty} \frac{\E[ |S_k - S_{k-1}|^2\ | \fcal_{k-1}]}{k^2} &\leq \sum\limits_{k=2}^{\infty} \frac{4\norm{f}_\infty^2 L_{k-1}^2}{k^2} < \infty
\end{align*}
where the convergence of the series is due to \hyperlink{assumption:B4}{(B4)}.
(Theorem 2.18 in \cite{HALL80}) then implies that $M_t \underset{t \to\infty}{\longrightarrow} 0$ almost surely.
\paragraph{Bounding $C_t$: } Using Lemma \ref{lemma:poisson}, we have:
\begin{align}
|C_t| &\leq \norm{f}_\infty \frac{\sum\limits_{k=2}^{t} L_k\bigg[ \norm{\omega_k - \omega_{k-1}}_1 + L_{k-1} D(\pi_k, \pi_{k-1}) \bigg] }{t} \nonumber\\
&\leq \norm{f}_\infty \frac{\sum\limits_{k=2}^{t} L_k\bigg[  \norm{\omega_k - \omega^\star}_1 + \norm{\omega^\star - \omega_{k-1}}_1 + L_{k-1} D(\pi_k, \pi_{k-1}) \bigg] }{t} \nonumber \\
&\leq \norm{f}_\infty \bigg[ \kappa_P \frac{L_2 \norm{P_1 - P}_\infty + L_t \norm{P_{t} - P}_\infty +\sum\limits_{k=2}^{t-1} (L_k + L_{k+1})\norm{P_{k} - P}_\infty}{t}\nonumber \\
& \quad \quad \quad \quad +  \frac{\sum\limits_{k=2}^{t} L_k L_{k-1} D(\pi_k, \pi_{k-1})}{t} \bigg] \underset{t \to\infty}{\longrightarrow} 0,
\label{ineq:C_t}    
\end{align}
where the third line comes from applying Lemma \ref{lemma:Schweitzer} and the convergence to zero is due to assumptions \hyperlink{assumption:B2}{(B2)}-\hyperlink{assumption:B4}{(B4)}-\hyperlink{assumption:B5}{(B5)}.
\paragraph{Bounding $R_t$: } Finally, by Lemma \ref{lemma:poisson} we have:
\begin{align}
    |R_t| &\leq \frac{\norm{\widehat{f}_{1}}_\infty + \norm{\widehat{f}_t}_\infty}{t} \nonumber\\
        &\leq  \frac{\norm{f}_\infty (L_1 + L_t)}{t}\underset{t \to\infty}{\longrightarrow} 0,
\label{ineq:R_t}
\end{align}
where the convergence to zero is due to Assumption \hyperlink{assumption:B4}{(B4)}. Summing up the inequalities (\ref{ineq:D1}-\ref{ineq:R_t}) gives the result.
\end{proof}

\subsection{Application to C-Navigation: proof of Theorem \ref{theorem:ergodic}}
\paragraph{We will now prove that C-Navigation verifies the assumptions (B1-5).} The same can be proved for D-Navigation by replacing $\overline{\pi_t^o}$ with $\pi_t^o$. Theorem \ref{theorem:ergodic} follows immediately by applying Proposition \ref{proposition:ergodic_thm} for the functions $f(\Tilde{z}) = \mathbbm{1}\{\Tilde{z}=z\}$, where $z$ is any fixed state-action pair.

\paragraph{\underline{(B1)}:} This is a direct consequence of the fact that $P_{\pi_u}$ is ergodic (due to Assumptions \ref{assumption:uniqueness} and \ref{assumption:aperiodic}) which implies by construction that $P_t$ is also ergodic.

\paragraph{\underline{(B2)}:} By Lemma \ref{lemma:visits} we have: $N_{sa}(t) \overset{a.s}{\longrightarrow} \infty$ for all $(s,a)$. Hence $\widehat{\mcal} \overset{a.s}{\longrightarrow} \mcal$ and by continuity: $\pi^o(\widehat{\mcal}) \overset{a.s}{\longrightarrow} \pi^o(\mcal)$, which implies that:
\begin{equation}
P_t \overset{a.s}{\longrightarrow} P_{\pi^o}.
\label{eq:kernel_cvg}
\end{equation}

\paragraph{\underline{(B3)}:} By Lemma \ref{lemma:Geometric_C_Navigation}, $P_t$ satisfies \hyperlink{assumption:B3}{(B3)} for $C_t = 2\theta(\varepsilon_t, \overline{\pi_t^o}, \omega_t)^{-1}$ and $\rho_t = \theta(\varepsilon_t, \overline{\pi_t^o}, \omega_t)^{1/r}$.

\paragraph{\underline{(B4)}:} We have:
\begin{align}
  \sigma(\varepsilon_t, \overline{\pi_t^o}, \omega_t) &=  \bigg(\varepsilon_t^r + \big[(1-\varepsilon_t) A \min\limits_{s,a} \overline{\pi_t^o}(a|s) \big]^r \bigg) \sigma_u \bigg(\min\limits_{z}\frac{\omega_u(z)}{\omega_t(z)}\bigg) \nonumber\\
  &\overset{a.s}{\longrightarrow} \bigg(A \min\limits_{s,a} \pi^o(a|s) \bigg)\sigma_u \min\limits_{z}\frac{\omega_u(z)}{\omega^\star(z)} \triangleq \sigma_o.
\label{eq:sigma_o}
\end{align}
Note that $\sigma_o > 0$ since $\omega_u > 0$ (ergodicity of $P_{\pi_u}$), $\omega^\star < 1$ and $\pi^o > 0$ entry-wise. Similarly, it is trivial that $\sigma_o < 1$ since $A \min\limits_{s,a} \pi^o(a|s) < 1, \min\limits_{z}\frac{\omega_u(z)}{\omega^\star(z)} < 1$ and $\sigma_u < 1$.  Therefore $\theta(\varepsilon_t, \overline{\pi_t^o}, \omega_t)= 1 - \sigma(\varepsilon_t, \overline{\pi_t^o}, \omega_t) \overset{a.s}{\longrightarrow} 1 - \sigma_o \triangleq \theta_o  \in (0,1)$ and:
\begin{align}
   \limsup\limits_{t\to\infty}\ L_t &= \limsup\limits_{t\to\infty}\ C_t (1-\rho_t)^{-1}\nonumber \\
   &= \limsup\limits_{t\to\infty} \frac{2}{\theta(\varepsilon_t, \overline{\pi_t^o}, \omega_t) \big[1- \theta(\varepsilon_t, \overline{\pi_t^o}, \omega_t)^{1/r}\big]}\nonumber\\
   &= \frac{2}{\theta_o \big[1-\theta_o^\frac{1}{r}\big]} < \infty.
\label{eq:theta_o}
\end{align}

\paragraph{\underline{(B5)}:} We have: $\big(P_{t+1} - P_t\big)(s,s') = \sum\limits_{a} [\pi_{t+1}(a|s) - \pi_t(a|s)] p_\mcal(s'|s,a)$. Hence: $\norm{P_{t+1} - P_t}_\infty \leq \norm{\pi_{t+1} - \pi_t}_\infty$, where $\pi_{t+1}$ and $\pi_t$ are viewed as vectors. On the other hand:
\begin{align*}
    \pi_{t+1} - \pi_t &= (\varepsilon_t - \varepsilon_{t+1})(\overline{\pi_t^o} - \pi_u) + (1-\varepsilon_t)(\overline{\pi_{t+1}^o} - \overline{\pi_t^o})\\
    &= (\varepsilon_t - \varepsilon_{t+1})(\overline{\pi_t^o} - \pi_u) + (1-\varepsilon_t)(\frac{t\times \overline{\pi_{t}^o} + \pi^o(\widehat{\mcal}_{t+1})}{t+1} - \overline{\pi_t^o})\\
    &= (\varepsilon_t - \varepsilon_{t+1})(\overline{\pi_t^o} - \pi_u) + (1-\varepsilon_t)\frac{ \pi^o(\widehat{\mcal}_{t+1}) - \overline{\pi_{t}^o}}{t+1}
\end{align*}
Therefore
\begin{align*}
  \norm{P_{t+1} - P_t}_\infty &\leq \norm{\pi_{t+1} - \pi_t}_\infty\\
  &\leq (\varepsilon_t - \varepsilon_{t+1}) + \frac{1}{t+1} \underset{t \to\infty}{\longrightarrow} 0.
\end{align*}
For D-Navigation, we get in a similar fashion:
\begin{align*}
  \norm{P_{t+1} - P_t}_\infty &\leq \norm{\pi_{t+1} - \pi_t}_\infty\\
  &\leq (\varepsilon_t - \varepsilon_{t+1}) + (1-\varepsilon_t)\norm{\pi_{t+1}^o - \pi_t^o}_\infty \underset{t \to\infty}{\longrightarrow} 0.
\end{align*}

\subsection{Geometric ergodicity of the sampling rules}
Since $P_{\pi_u}$ is ergodic, there exists $r>0$ such that $P_{\pi_u}^r(z,z')>0$ for all $z,z'$ (Proposition 1.7, \cite{LevinPeresWilmer2006}). Thus we define
\begin{align}
     r &= \min\{ \ell \geq 1:\ \forall (z,z') \in \zcal^2,\ P_{\pi_u}^\ell (z,z') > 0 \},\\ 
   \sigma_u &\triangleq \min\limits_{z,z'}\ \frac{ P_{\pi_u}^r(z,z')}{\omega_u(z')},   
\label{eq:pi_u_ergodic}
\end{align}
where $\omega_u$ is the stationary distribution of $P_{\pi_u}$.
\begin{lemma}
Let  $\pi_t^o \triangleq \pi^o(\widehat{\mcal}_t)$ (resp. $\overline{\pi_t^o} \triangleq \sum_{j=1}^{t} \pi^o(\widehat{\mcal}_j)/t$) denote the oracle policy of $\widehat{\mcal}_t$ (resp. the Cesaro-mean of oracle policies up to time $t$). Further define
\begin{align*}
    \sigma(\varepsilon,\pi,\omega) &\triangleq \bigg(\varepsilon^r + \big[(1-\varepsilon) A \min\limits_{s,a} \pi(a|s) \big]^r \bigg) \sigma_u \bigg(\min\limits_{z}\frac{\omega_u(z)}{\omega(z)}\bigg),\\
    \theta(\varepsilon,\pi,\omega) &\triangleq 1 - \sigma(\varepsilon,\pi,\omega),\\
    \lcal(\varepsilon, \pi, \omega) &\triangleq \frac{2}{\theta(\varepsilon,\pi,\omega) \big[1- \theta(\varepsilon,\pi,\omega)^{1/r}\big]}.
\end{align*}

Then for D-Navigation (resp. C-Navigation) we have:
\begin{align*}
\forall n\geq 1,\  \norm{P_t^n - W_t}_\infty \leq C_t \rho_t^n   
\end{align*}
where $C_t = 2\theta(\varepsilon_t, \pi_t^o, \omega_t)^{-1}$ and $\rho_t = \theta(\varepsilon_t, \pi_t^o, \omega_t)^{1/r}$ \big(resp. $C_t = 2\theta(\varepsilon_t, \overline{\pi_t^o}, \omega_t)^{-1}$ and $\rho_t = \theta(\varepsilon_t, \overline{\pi_t^o}, \omega_t)^{1/r}$\big). In particular $L_t \triangleq C_t (1-\rho_t)^{-1} = \lcal(\varepsilon_t, \pi_t^o, \omega_t)$ \big(resp. $L_t = C_t (1-\rho_t)^{-1} = \lcal(\varepsilon_t, \overline{\pi_t^o}, \omega_t)$\big).
\label{lemma:Geometric_C_Navigation}
\end{lemma}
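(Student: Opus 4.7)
\medskip

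\noindent\textbf{Proof plan.} The strategy is to reduce the geometric ergodicity bound to a one-step Doeblin-type minorization condition for the $r$-th power of the kernel $P_t$. Concretely, I will establish that for every time step $t$ and every pair $(z,z')\in\zcal^2$,
\begin{equation*}
P_t^r(z,z') \;\geq\; \sigma(\varepsilon_t,\tilde\pi_t,\omega_t)\,\omega_t(z'),
\end{equation*}
where $\tilde\pi_t$ stands for $\pi_t^o$ in the D-Navigation case and for $\overline{\pi_t^o}$ in the C-Navigation case. Once this is in place, a standard coupling/contraction argument yields the desired geometric convergence of $P_t^n$ to $W_t$.

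To obtain the minorization, I would first rewrite $\pi_t = \varepsilon_t\pi_u + (1-\varepsilon_t)\tilde\pi_t$, so that at the level of transition kernels on $\zcal$,
\begin{equation*}
P_t \;=\; \varepsilon_t P_{\pi_u} + (1-\varepsilon_t) P_{\tilde\pi_t}.
\end{equation*}
Expanding $P_t^r$ by (non-commutative) binomial expansion, all $2^r$ terms are nonnegative matrices, so I can drop all mixed terms and keep the two diagonal ones:
\begin{equation*}
P_t^r \;\geq\; \varepsilon_t^r P_{\pi_u}^r + (1-\varepsilon_t)^r P_{\tilde\pi_t}^r.
\end{equation*}
Next, because $\tilde\pi_t(a'|s') \geq A\min_{s,a}\tilde\pi_t(a|s)\cdot \pi_u(a'|s')$, the entrywise bound $P_{\tilde\pi_t} \geq A\min_{s,a}\tilde\pi_t(a|s)\,P_{\pi_u}$ holds, and iterating gives $P_{\tilde\pi_t}^r \geq [A\min_{s,a}\tilde\pi_t(a|s)]^r P_{\pi_u}^r$. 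Applying the definition of $r$ and $\sigma_u$, i.e.\ $P_{\pi_u}^r(z,z')\geq \sigma_u\,\omega_u(z')$, and then switching reference measures via $\omega_u(z') \geq \min_z\!\big(\omega_u(z)/\omega_t(z)\big)\,\omega_t(z')$, I obtain precisely the claimed minorization with the factor $\sigma(\varepsilon_t,\tilde\pi_t,\omega_t)$.

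From the minorization, the contraction step is classical: write $P_t^r(z,\cdot) = \sigma_t\,\omega_t(\cdot) + (1-\sigma_t)\widehat P(z,\cdot)$, where $\widehat P$ is a stochastic kernel for which $\omega_t$ is stationary (since $\omega_t$ is stationary for $P_t^r$). For any probability measure $\mu$ we then have $\mu P_t^r - \omega_t = (1-\sigma_t)(\mu-\omega_t)\widehat P$, so $\|\mu P_t^r - \omega_t\|_1 \leq (1-\sigma_t)\|\mu-\omega_t\|_1 = \theta_t\|\mu-\omega_t\|_1$. Iterating, $\|P_t^{rk}(z,\cdot)-\omega_t\|_1 \leq 2\theta_t^k = 2\rho_t^{rk}$. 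For arbitrary $n$, write $n = rk+j$ with $0\leq j<r$, use that applying the stochastic kernel $P_t^j$ does not increase the $\ell_1$ distance to $\omega_t$, and bound $\rho_t^{rk} \leq \rho_t^{n}\cdot\rho_t^{-(r-1)}\leq \rho_t^n/\theta_t$ (since $\rho_t^r = \theta_t$ and $0<\theta_t<1$). This yields $\|P_t^n(z,\cdot)-\omega_t\|_1 \leq (2/\theta_t)\,\rho_t^n = C_t\rho_t^n$, which is exactly the stated bound once one notes $\|P_t^n - W_t\|_\infty = \max_z\|P_t^n(z,\cdot)-\omega_t\|_1$.

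The main obstacle, I expect, is less conceptual than notational: one must carefully track that the minorization is valid with the \emph{same} reference measure $\omega_t$ (the stationary distribution of $P_t$, which depends on $t$), and that the factor $\min_z \omega_u(z)/\omega_t(z)$ is well defined and strictly positive under Assumptions~\ref{assumption:uniqueness}--\ref{assumption:aperiodic} (so that $\omega_u > 0$ entrywise, which in turn forces $\omega_t > 0$ since $P_t \geq \varepsilon_t P_{\pi_u}$). The formula $L_t = \lcal(\varepsilon_t,\tilde\pi_t,\omega_t)$ then follows by direct substitution into $L_t = C_t(1-\rho_t)^{-1}$.
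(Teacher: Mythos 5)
Your proposal is correct and follows essentially the same route as the paper: the same minorization $P_t^r \geq \big(\varepsilon_t^r + [(1-\varepsilon_t)A\min_{s,a}\tilde\pi_t(a|s)]^r\big)P_{\pi_u}^r$ obtained by dropping cross terms and bounding $P_{\tilde\pi_t}$ entrywise by a multiple of $P_{\pi_u}$, followed by the change of reference measure from $\omega_u$ to $\omega_t$. The final contraction step is the paper's Lemma~\ref{lemma:geometric_ergodicity} (the decomposition $P^r = (1-\theta)W + \theta Q$ iterated to give $2\theta^{\lfloor n/r\rfloor}\leq 2\theta^{n/r-1}=C_t\rho_t^n$), which is the matrix form of the Doeblin/total-variation contraction you describe.
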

\begin{proof}
We only prove the lemma for C-Navigation. The statement for D-Navigation can be proved in the same way. Recall that: $P_t = \varepsilon_t P_{\pi_u} + (1-\varepsilon_t) P_{\overline{\pi_t^o}}$. Therefore:

\begin{align*}
\forall (z,z'),\quad    P_t^r(z,z') &\geq  [\varepsilon_t^r P_{\pi_u}^r + (1-\varepsilon_t)^r P_{\overline{\pi_t^o}}^r](z,z')\\
    &\geq  \bigg(\varepsilon_t^r + \big[(1-\varepsilon_t) A \min\limits_{s,a} \overline{\pi_t^o}(a|s) \big]^r \bigg) P_{\pi_u}^r(z,z')\\
    &\geq \bigg(\varepsilon_t^r + \big[(1-\varepsilon_t) A \min\limits_{s,a} \overline{\pi_t^o}(a|s) \big]^r \bigg) \sigma_u \omega_u(z') \\
    & \geq \underbrace{\bigg(\varepsilon_t^r + \big[(1-\varepsilon_t) A \min\limits_{s,a} \overline{\pi_t^o}(a|s) \big]^r \bigg) \sigma_u \bigg(\min\limits_{z}\frac{\omega_u(z)}{\omega_t(z)}\bigg)}\limits_{\sigma_t} \omega_t(z')\\
    &= \sigma(\varepsilon_t, \overline{\pi_t^o}, \omega_t) \omega_t(z').
\end{align*}
where the second and third inequalities comes from the fact that $P_{\overline{\pi_t^o}} \geq A \min\limits_{s,a} \overline{\pi_t^o}(a|s) P_{\pi_u}$ entry-wise and from (\ref{eq:pi_u_ergodic}) respectively.
Using Lemma \ref{lemma:geometric_ergodicity} we conclude that or all $n\geq1$:
\begin{equation*}
    \norm{P_t^n - W_t}_\infty \leq 2 \theta(\varepsilon_t, \overline{\pi_t^o}, \omega_t)^{\frac{n}{r}-1}
\end{equation*}
where $\theta(\varepsilon_t, \overline{\pi_t^o}, \omega_t) = 1 - \sigma(\varepsilon_t, \overline{\pi_t^o}, \omega_t)$. Therefore $P_t$ satisfies $\norm{P_t^n - W_t}_\infty \leq C_t \rho_t^n $ for $C_t = 2\theta(\varepsilon_t, \overline{\pi_t^o}, \omega_t)^{-1}$ and $\rho_t = \theta(\varepsilon_t, \overline{\pi_t^o}, \omega_t)^{1/r}$.
\end{proof}

\section{Stopping rule}\label{sec:appendix_stopping}
\subsection{Deviation inequality for KL divergences of rewards}
We suppose that the reward distributions $q_{\mcal}(s,a)$ come from a one-dimensional exponential family and can therefore be parametrized by their respective means $r_{\mcal}(s,a)$. Furthermore, for any $t$ such that $N_{sa}(t) > 0$, we let $\widehat{q}_{s,a}(t)$ denote the distribution belonging to the same exponential family, whose mean is the empirical average $\widehat{r}_{t}(s,a) = \frac{\sum\limits_{k=1}^t R_k \indicator\{(s_t,a_t) = (s,a)\}}{N_{sa}(t)}$. For $x\geq 1$, define the function $h(x) = x - \log(x)$ and its inverse $h^{-1}(x)$. Further define the function $\Tilde{h}: \mathbbm{R}^{+} \longrightarrow \mathbbm{R}$ by:
\begin{equation*}
    \Tilde{h}(x) = \begin{cases}
    h^{-1}(x) \exp(1/h^{-1}(x)) \quad \textrm{if } x \geq h^{-1}(1/\ln(3/2)),\\ \\
    \frac{3}{2}\big[x - \log(\log(3/2)\big] \quad \textrm{otherwise.}
    \end{cases}
\end{equation*}
Finally let 
$$\varphi(x) = 2 \Tilde{h}\bigg(\frac{h^{-1}(1+x) + \log(2 \Gamma(2))}{2}\bigg),$$ 
where $\Gamma(2) = \sum_{n=1}^{\infty} 1/n^2$. Now we recall a deviation inequality from \cite{Kaufmann2018MixtureMR}, which we use for the empirical KL divergence of rewards.
\begin{lemma}(Theorem 14, \cite{Kaufmann2018MixtureMR})
Define the threshold $\beta_r(t,\delta) \triangleq SA \varphi\big(\log(1/\delta)/SA\big) + 3\sum\limits_{s,a} \log\big[1+\log(N_{s a}(t))\big]$. Then for all $\delta \in (0,1)$:
\begin{equation*}
   \P\bigg(\exists t \geq 1,\  \underset{(s,a) \in \zcal}{\sum}\ N_{sa}(t) \KL{\widehat{q}_{s,a}(t)}{q_{\mcal}(s,a)} > \beta_p(t, \delta) \bigg) \leq \delta.
\end{equation*}
\label{lemma:concentration_rewards}
\end{lemma}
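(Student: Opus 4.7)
The statement is a time-uniform concentration inequality for the sum over state-action pairs of the empirical KL divergences of the rewards, and I would deduce it from a joint supermartingale construction along the lines of the method of mixtures used by Kaufmann and Koolen to establish their Theorem 14 in the single-stream case. The plan is to lift their construction from one arm to $SA$ arms so that the $SA$-dimensional mixing measure produces the correct multiplicative factor in front of $\varphi$, rather than the coarser $\log(SA)$ loss that a naive union bound would incur.

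First, I would reduce to an iid setting per arm. For each pair $(s,a)$, conditional on the sampling strategy, the rewards $(R_k : (s_k,a_k)=(s,a))$ collected at that pair are iid with law $q_\mcal(s,a)$ in the one-dimensional exponential family. For a fixed $(s,a)$, following Kaufmann and Koolen, I would construct a non-negative mixture martingale $M_n^{s,a}$, obtained by integrating the exponential-family likelihood ratio against a carefully chosen density on the natural parameter, that satisfies $\E[M_n^{s,a}] \leq 1$ and
\[
\log M_n^{s,a} \geq \varphi^{-1}\bigl(n\,\KL{\widehat{q}_n^{s,a}}{q_\mcal(s,a)} - 3\log(1+\log n)\bigr).
\]
Laplace's method around the empirical MLE together with a peeling over dyadic scales of $n$ is what produces both the function $\varphi$ and the $3\log(1+\log n)$ correction.

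Next, I would form the joint supermartingale $M_t = \prod_{(s,a)\in\zcal} M_{N_{sa}(t)}^{s,a}$. Because the reward streams at distinct pairs are independent given the visit schedule and the sampling rule is $\fcal_t^{\Alg}$-predictable, $M_t$ is a non-negative supermartingale with $\E[M_t]\leq 1$. Ville's maximal inequality then gives $\P(\exists t\geq 1: M_t \geq 1/\delta)\leq \delta$. On the complementary event, writing $y_{s,a}=\log M_{N_{sa}(t)}^{s,a}$ and using the single-arm inversion, one has $N_{sa}(t)\,\KL{\widehat{q}_{s,a}(t)}{q_\mcal(s,a)} - 3\log(1+\log N_{sa}(t))\leq \varphi(y_{s,a})$. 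Summing and applying Jensen's inequality using concavity of $\varphi$ (which can be read off from its asymptotic behaviour $\varphi(x)\sim x$ together with the piecewise definition via $h^{-1}$ and $\tilde h$),
\[
\sum_{(s,a)}\varphi(y_{s,a})\leq SA\,\varphi\!\Bigl(\tfrac{1}{SA}\sum_{(s,a)} y_{s,a}\Bigr)\leq SA\,\varphi\!\bigl(\log(1/\delta)/SA\bigr),
\]
which is exactly the stated threshold.

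The main obstacle is the clean appearance of the factor $SA\,\varphi(\log(1/\delta)/SA)$. Two delicate points must be handled carefully: establishing concavity (or an analogous subadditivity under a fixed-sum constraint) of $\varphi$ on the relevant range so that Jensen's inequality applies, and verifying that the product of single-arm mixture martingales remains a supermartingale when sampled at the random visit counts $N_{sa}(t)$ under an adaptive MDP sampling rule. For the latter, one uses that at each step the agent plays a single pair $(s,a_t)$ and that the increment $M_{t+1}/M_t=M_{N_{sa_t}(t)+1}^{s,a_t}/M_{N_{sa_t}(t)}^{s,a_t}$ is conditionally mean-at-most-one given $\fcal_t^{\Alg}$, propagating the supermartingale property of each component to the product. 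If concavity of $\varphi$ is awkward to verify directly, an equivalent alternative is to build the joint mixture martingale from the outset with a product prior $f(\boldsymbol\lambda)=\prod_{s,a}f_{s,a}(\lambda_{s,a})$; an $SA$-dimensional Laplace expansion at the joint MLE then produces the factor $SA\,\varphi(\log(1/\delta)/SA)$ directly, bypassing Jensen entirely.
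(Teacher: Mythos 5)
The paper does not prove this statement at all: Lemma~\ref{lemma:concentration_rewards} is imported verbatim as Theorem~14 of \cite{Kaufmann2018MixtureMR}, and the authors only plug it (together with Lemma~\ref{lemma:concentration_transitions}) into the union bound showing the stopping rule is $\delta$-PC. Your proposal is therefore a reconstruction of the proof of the \emph{cited} result rather than of anything in this paper, and its architecture matches what Kaufmann and Koolen actually do: a per-arm mixture supermartingale built from the exponential-family likelihood ratio whose logarithm controls $n\,\KL{\widehat q^{\,s,a}_n}{q_\mcal(s,a)}$ up to the $3\log(1+\log n)$ correction; the product over the $SA$ reward streams, which stays a supermartingale under adaptive sampling because exactly one pair is pulled per round and the increment has conditional mean at most one given $\fcal^\Alg_t$; Ville's maximal inequality; and finally the replacement of $\sum_{s,a}\varphi(y_{s,a})$ under the constraint $\sum_{s,a}y_{s,a}\le\log(1/\delta)$ by $SA\,\varphi\big(\log(1/\delta)/SA\big)$.

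Two points in your sketch are not yet proofs. First, your justification of the Jensen step is wrong as stated: $\varphi(x)\sim x$ at infinity says nothing about concavity, and the piecewise definition of $\varphi$ through $h^{-1}$ and $\tilde h$ makes concavity genuinely non-obvious ($\tilde h$ is a convex increasing function composed with the concave increasing $h^{-1}$, so the second derivative has competing signs); moreover $\varphi$ is only defined for nonnegative arguments while the individual $y_{s,a}=\log M^{s,a}_{N_{sa}(t)}$ may be negative. The required monotone-concave (or constrained-maximization) property is a separate lemma in \cite{Kaufmann2018MixtureMR} that you must either invoke or replace by your alternative product-prior construction, which does legitimately sidestep the issue. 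Second, the single-arm inequality $\log M^{s,a}_n\ge\varphi^{-1}\big(n\,\KL{\cdot}{\cdot}-3\log(1+\log n)\big)$ silently absorbs the two-sided nature of the deviation: the constants $2$ and $\log(2\Gamma(2))$ inside $\varphi$ arise precisely from combining an upper- and a lower-deviation martingale for each arm, with the peeling that produces $3\log(1+\log n)$ carried out per side. These are details of the reference rather than flaws in your plan, but a self-contained proof would have to supply them. (Incidentally, the displayed statement writes $\beta_p(t,\delta)$ where $\beta_r(t,\delta)$ is meant; that typo is the paper's, not yours.)
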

\begin{remark}
One can easily see that $\varphi(x) \underset{x \to \infty}{\sim} x$.
\end{remark}

\subsection{Deviation inequality for KL divergences of transitions}
Our second deviation inequality is adapted from Proposition 1 in \cite{jonsson2020planning}. There the authors derive a deviation inequality for a \textit{single KL divergence} of  a multinomial distribution. In order to get a deviation inequality of a \textit{sum of KL divergences}, we modified their proof by considering the product over state-action pairs of the martingales they used. For the sake of self-containedness, we include the proof below.
$\widehat{p}_{sa}(t)$ is defined as the categorical distribution with a vector of probabilities  $q$ satisfying:
\begin{align*}
\forall s' \in \scal,\   q_{s'} = 
\begin{cases}
    \displaystyle{\frac{\sum_{k=0}^{t-1} \indicator\{(s_k,a_k,s_{k+1}) = (s,a,s')\}}{N_{sa}(t)}} \quad \textrm{if } N_{sa}(t) \neq 0,\\ \\
    1/A \quad \textrm{otherwise.}
    \end{cases}
\end{align*}
\begin{lemma}(Proposition 1, \cite{jonsson2020planning})
Define the threshold $\beta_p(t,\delta) \triangleq \log(1/\delta) + (S-1)\underset{s,a}{\sum}\log\bigg(e\big[1+N_{sa}(t)/(S-1)\big]\bigg)$. Then for all $\delta \in (0,1)$ we have:
\begin{equation*}
   \P\bigg(\exists t \geq 1,\  \underset{(s,a) \in \zcal}{\sum}\ N_{sa}(t) \KL{\widehat{p}_{sa}(t)}{p_{\mcal}(s,a)} > \beta_p(t, \delta) \bigg) \leq \delta,
\end{equation*}
with the convention that $ N_{sa}(t) \KL{\widehat{p}_{sa}(t)}{p_{\mcal}(s,a)} =0$ whenever $N_{sa}(t)=0$.
\label{lemma:concentration_transitions}
\end{lemma}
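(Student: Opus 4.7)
The plan is to use a Dirichlet-mixture (method of mixtures) martingale adapted to the online, adaptively-sampled setting by taking a product over state-action pairs, as suggested by the comment preceding the statement.

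First I would set up the per-pair martingale. Fix any $(s,a)$ and let $(Y_k^{(s,a)})_{k\ge 1}$ denote the sequence of next-state observations drawn on the successive visits to $(s,a)$. By the Markov property together with the adaptivity of the sampling rule, these are i.i.d.\ draws from $p_{\mcal}(\cdot|s,a)$. For any categorical distribution $q$ on $\scal$, the likelihood ratio
\[
M_n^{(s,a)}(q)=\prod_{k=1}^{n}\frac{q(Y_k^{(s,a)})}{p_{\mcal}(Y_k^{(s,a)}\mid s,a)}
\]
is a nonnegative martingale of mean $1$ in $n$. Averaging it against the flat Dirichlet prior $\mu=\mathrm{Dir}(1,\ldots,1)$ on $\Delta_S$ via Fubini, $M_n^{(s,a)}=\int M_n^{(s,a)}(q)\,d\mu(q)$ is still a nonnegative mean-one martingale (with the convention $M_0^{(s,a)}=1$).

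Next I would build a global martingale by taking the product
\[
\tilde M_t=\prod_{(s,a)\in\zcal} M_{N_{sa}(t)}^{(s,a)}.
\]
The key step is showing that $\tilde M_t$ is an $(\fcal_t)$-martingale. Conditioning on $\fcal_t$, only $N_{s_ta_t}$ increments at the next step, and conditionally on $(\fcal_t,s_t,a_t)$ the new observation $s_{t+1}$ is drawn from $p_{\mcal}(\cdot|s_t,a_t)$ independently of everything else; the martingale property of $M^{(s_t,a_t)}$ on its own i.i.d.\ stream then yields $\E[\tilde M_{t+1}\mid\fcal_t]=\tilde M_t$. Ville's maximal inequality gives $\P(\exists t\ge 1:\tilde M_t\ge 1/\delta)\le\delta$.

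Finally I would lower-bound $\log \tilde M_t$ by the sum of empirical KL divergences plus the advertised log-correction. The Dirichlet normalising constant gives the explicit formula
\[
M_n^{(s,a)}=(S-1)!\,\frac{\prod_{s'}\Gamma\!\bigl(1+N_{sas'}(t)\bigr)}{\Gamma(S+n)}\,\prod_{s'}p_{\mcal}(s'\mid s,a)^{-N_{sas'}(t)},
\]
where $N_{sas'}(t)$ is the count of observed $(s,a)\to s'$ transitions and $n=N_{sa}(t)$. Writing $N_{sas'}=n\widehat{p}_{sa}(s')$, the factor $\prod_{s'}\Gamma(1+N_{sas'})/\Gamma(S+n)\cdot\prod_{s'}p_{\mcal}(s'|s,a)^{-N_{sas'}}$ separates into $\exp(n\,\KL{\widehat p_{sa}(t)}{p_{\mcal}(s,a)})$ times a combinatorial remainder that elementary Stirling-style bounds (using $\log\Gamma(x+1)\ge x\log x-x$ on the numerator and $\Gamma(S+n)\le (n+S-1)^{S-1}\Gamma(n+1)$ on the denominator) control by $(S-1)\log\!\bigl(e[1+n/(S-1)]\bigr)$. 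Summing over $(s,a)$ yields
\[
\log\tilde M_t\ \ge\ \sum_{s,a}N_{sa}(t)\,\KL{\widehat p_{sa}(t)}{p_{\mcal}(s,a)}\;-\;(S-1)\sum_{s,a}\log\!\Bigl(e\bigl[1+N_{sa}(t)/(S-1)\bigr]\Bigr),
\]
so the event in the statement is contained in $\{\tilde M_t\ge 1/\delta\}$, and the bound follows from Ville's inequality. The main subtlety is verifying the martingale property of the global product under adaptive sampling (cleanly handled by the tower-property argument above); the remaining Stirling estimate that produces precisely the constant $e[1+n/(S-1)]$ is the only book-keeping that requires care.
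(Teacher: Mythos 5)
Your proposal is correct and follows essentially the same route as the paper: a Dirichlet$(1,\ldots,1)$ mixture of per-pair likelihood-ratio martingales (the paper writes them in exponential-family form $\exp(N_{sa}(t)[\langle\lambda,\widehat p_{sa}(t)\rangle-\phi_{sa}(\lambda)])$, which is the same object), a product over state-action pairs whose martingale property is checked by conditioning on $(s_t,a_t)$ and the tower rule, an explicit Gamma-function evaluation of the mixture lower-bounded by $\exp\big(N_{sa}(t)\,\mathrm{KL}(\widehat p_{sa}(t),p_{sa})-(S-1)\log(e[1+N_{sa}(t)/(S-1)])\big)$, and Ville's/Doob's maximal inequality. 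The only cosmetic difference is that the paper controls the combinatorial remainder via the entropy bound on multinomial coefficients rather than your Stirling estimates; both give the same constant.
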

\begin{proof}
We begin with a few notations. For any vector $\lambda \in \mathbbm{R}^{S-1}\times \{0\}$ and any element of the simplex $p \in \Sigma_{S-1}$, we denote $<\lambda,p> \triangleq \sum_{i=1}^{S-1} \lambda_i p_i$. We define the log-partition function of a discrete distribution $p$ supported over $\{1,\ldots,S\}$ by: 
\begin{align*}
\forall \lambda \in \mathbbm{R}^{S-1}\times \{0\},\ \phi_p(\lambda) \triangleq \log(p_S + \sum_{i=1}^{S-1} p_i e^\lambda_i).   
\end{align*} 
We use the shorthand and let $\phi_{sa}(\lambda) \triangleq \phi_{p_{sa}}(\lambda)$. For $N \in \mathbbm{N}^*$ and $x \in \{0, \ldots, N\}^k$ such that $\sum_{i=1}^k x_i = N$ the binomial coefficient is defined as: $\binom{N}{x} \triangleq \frac{N!}{\prod_{i=1}^k x_i!}$. Finally $H(p) = \sum_{i=1}^S p_i \log(1/p_i)$ is the Shannon entropy of distribution $p$.
\paragraph{Building a convenient mixture martingale for every state-action pair:} Following \cite{jonsson2020planning}, we define for every integer $t$:
\begin{equation}
    M_t^{\lambda}(s,a) = \exp\bigg(N_{sa}(t)\big(<\lambda,\widehat{p}_{sa}(t)> - \phi_{sa}(\lambda)\big)\bigg)\;.
\end{equation}
The sequence $(M_t^{\lambda}(s,a))_t$  is an $(\mathcal{F}_t)_t$-martingale since:
\begin{align*}
&\E[M_{t}^{\lambda}(s,a) | \mathcal{F}_{t-1},\ (s_t,a_t)=(s,a)] \\
& = \E\bigg[\exp\bigg(N_{sa}(t)\big[<\lambda,\widehat{p}_{sa}(t)> - \phi_{sa}(\lambda)\big]\bigg) \bigg|\ \mathcal{F}_{t-1},\ (s_t,a_t) =(s,a) \bigg]\\
&= \E_{X \sim p_{sa}}\bigg[\exp\bigg(\big(N_{sa}(t-1)+1\big)\big(<\lambda,\frac{N_{sa}(t-1)\widehat{p}_{sa}(t-1) + X}{N_{sa}(t-1)+1}> - \phi_{sa}(\lambda)\big)\bigg)\bigg|\ \mathcal{F}_{t-1}\bigg]\\
& = \E_{X \sim p_{sa}}\bigg[M_{t-1}^{\lambda}(s,a) \exp\bigg(<\lambda,X> - \phi_{sa}(\lambda)\bigg) \bigg|\ \mathcal{F}_{t-1}\bigg] = M_{t-1}^{\lambda}(s,a).
\end{align*}
The same holds trivially when $(s_t,a_t) \neq (s,a)$. Now, we define the mixture martingale defined by the family of priors $\lambda_q = \nabla\phi_{sa}^{-1}(q)$ where $q \sim \mathcal{D}\mathrm{ir}(1, \ldots, 1)$ follows a Dirichlet distribution with parameters $(1, \ldots, 1)$:
\begin{align*}
  & M_t(s,a) = \int M_t^{\lambda_q}(s,a) \frac{\Gamma(S)}{\prod_{i=1}^{S} \Gamma(1)}\prod_{i=1}^{S} q_i dq \\
    &= \int e^{N_{sa}(t) \big(KL(\widehat{p}_{sa}(t),p_{sa}) -KL(\widehat{p}_{sa}(t),q)  \big)} (S-1)! \prod_{i=1}^{S} q_i dq \\
    &=  \exp\bigg(N_{sa}(t) \big(KL(\widehat{p}_{sa}(t),p_{sa}) +H(\widehat{p}_{sa}(t)) \big) \bigg) (S-1)! \int \prod_{i=1}^{S} q_i^{1 + N_{sa}(t)\widehat{p}_{sa,i}(t)} dq  \\
    &= \exp\bigg( N_{sa}(t) \big[KL(\widehat{p}_{sa}(t),p_{sa}) +H(\widehat{p}_{sa}(t))\big]\bigg) \frac{(S-1)! \prod_{i=1}^S \Gamma\bigg(1+N_{sa}(t)\widehat{p}_{sa,i}(t)\bigg)}{\Gamma(N_{sa}(t) + S)}\\
  & = \exp\bigg( N_{sa}(t) \big[KL(\widehat{p}_{sa}(t),p_{sa}) +H(\widehat{p}_{sa}(t))\big]\bigg) \frac{(S-1)! \prod_{i=1}^S \big(N_{sa}(t)\widehat{p}_{sa,i}(t)\big)!}{(N_{sa}(t) + S-1)!}\\
  &= \exp\bigg( N_{sa}(t) \big[KL(\widehat{p}_{sa}(t),p_{sa}) +H(\widehat{p}_{sa}(t))\big]\bigg) \frac{\prod_{i=1}^S \big(N_{sa}(t)\widehat{p}_{sa,i}(t)\big)!}{N_{sa}(t)!} \frac{(S-1)! N_{sa}(t)!}{(N_{sa}(t) + S-1)!}\\
  &= \exp\bigg( N_{sa}(t) \big[KL(\widehat{p}_{sa}(t),p_{sa}) +H(\widehat{p}_{sa}(t))\big]\bigg) \frac{1}{\binom{N_{sa}(t) + S-1}{S-1}}\frac{1}{\binom{N_{sa}(t)}{N_{sa}(t)\widehat{p}_{sa}(t)}},
\end{align*}
where in the second inequality we used Lemma \ref{technical_lemma_1} and $\widehat{p}_{sa,i}(t)$ denotes the i-th component of $\widehat{p}_{sa}(t)$. Now using Lemma \ref{technical_lemma_binomial}, we upper bound the binomial coefficients which leads to: 
\begin{align*}
   M_t(s,a) &\geq \exp\bigg(N_{sa}(t) \big[KL(\widehat{p}_{sa}(t),p_{sa}) +H(\widehat{p}_{sa}(t))\big] - N_{sa}(t) H(\widehat{p}_{sa}(t))\\ & \quad -(N_{sa}(t)+S-1) H(S-1/(N_{sa}(t)+S-1))\bigg) \\
   & = \exp\bigg(N_{sa}(t) KL(\widehat{p}_{sa}(t),p_{sa})-(N_{sa}(t)+S-1) H(S-1/(N_{sa}(t)+S-1))\bigg).
\end{align*}

\paragraph{The product martingale:} Taking the product over all state-action pairs we get:
\begin{align}
    M_t & \triangleq \prod_{(s,a)\in \zcal} M_t(s,a) \nonumber \\
    &\geq \exp\bigg(\sum_{s,a} N_{sa}(t) KL(\widehat{p}_{sa}(t),p_{sa}) - \sum_{s,a} (N_{sa}(t)+S-1) H(S-1/(N_{sa}(t)+S-1))\bigg).
\label{eq:M_t_lb_1}
\end{align}
Next, using that $\log(1+x) \leq x$ we get:
\begin{align*}
(N_{sa}(t)+S-1) H(S-1/(N_{sa}(t)+S-1)) & = (S-1)\log(1 + N_{sa}(t)/(S-1))\\
& \quad + N_{sa}(t)\log(1 + (S-1)/N_{sa}(t))\\
& \leq (S-1)\log(1 + N_{sa}(t)/(S-1)) + (S-1)\\
& = (S-1)\log\bigg(e\big[1+N_{sa}(t)/(S-1)\big]\bigg).
\end{align*}
Hence (\ref{eq:M_t_lb_1}) becomes:
\begin{align}
    M_t \geq \exp\bigg(\sum\limits_{s,a} N_{sa}(t) KL(\widehat{p}_{sa}(t),p_{sa}) - (S-1)\underset{s,a}{\sum}\log\bigg(e\big[1+N_{sa}(t)/(S-1)\big]\bigg) \bigg).
\label{eq:M_t_lb_2}
\end{align}
Now, we show that $M_t$ is a martingale. For any fixed pair $(s,a)$ we have:
\begin{align*}
\E[M_{t} | \mathcal{F}_{t-1},\ (s_t,a_t) =(s,a)] &= \E[M_t(s,a) \prod_{(s',a')\neq (s,a)} M_t(s',a') | \mathcal{F}_{t-1},\ (s_t,a_t) =(s,a)]\\
& = \E[M_t(s,a) \prod_{(s',a')\neq (s,a)} M_{t-1}(s',a') | \mathcal{F}_{t-1}]\\
& = \E[M_{t}(s,a) | \mathcal{F}_{t-1}] \times \prod_{(s',a')\neq (s,a)} M_{t-1}(s',a')\\
& = M_{t-1},
\end{align*}
where the third equality is because $M_t(s,a)$ and $\big(M_{t-1}(s',a')\big)_{(s',a') \neq (s,a)}$ are independent conditionally on $\mathcal{F}_{t-1}$.
Finally, using the tower rule we get:
\begin{align*}
\E[M_{t} | \mathcal{F}_{t-1}] = \E\bigg[\E[M_{t} | \mathcal{F}_{t-1},\ (s_t,a_t)] \bigg| \mathcal{F}_{t-1}\bigg] =  \E[ M_{t-1}| \mathcal{F}_{t-1}] = M_{t-1}.
\end{align*}
Hence $M_t$ is a martingale. Thanks to Doob's maximal inequality we have:
\begin{align*}
    \P\bigg(\exists t \geq 0,\ M_t > 1/\delta \bigg) \leq \delta \E[M_0] = \delta.
\end{align*}
In view of (\ref{eq:M_t_lb_2}), we conclude that for $\beta_p(t,\delta) = \log(1/\delta) +(S-1)\underset{s,a}{\sum}\log\bigg(e\big[1+N_{sa}(t)/(S-1)\big]\bigg)$ we have:
\begin{align*}
   \P\bigg(\exists t \geq 1,\  \sum_{s,a} N_{sa}(t) KL(\widehat{p}_{sa}(t),p_{sa}) > \beta_p(t,\delta) \bigg) \leq \delta.
\end{align*}
\end{proof}
\begin{lemma}(Lemma 3 in \cite{jonsson2020planning})
For $q,p$ in $\Sigma_m$ the simplex of dimension $(m-1)$ and $\lambda \in \mathbbm{R}^{m-1}$, we have:
$$<\lambda,q> - \phi_p(\lambda) = KL(q,p) - KL(q,p^\lambda) $$
where $p^\lambda = \nabla \phi_p(\lambda)$.
\label{technical_lemma_1}
\end{lemma}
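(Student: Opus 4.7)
The strategy is a direct computation: first obtain an explicit formula for the tilted distribution $p^\lambda$, then substitute it into $KL(q,p^\lambda)$ and collect terms. The identity is essentially the duality between the log-partition function and the KL divergence in an exponential family, so no clever inequalities are needed, only careful bookkeeping of the indices $1,\ldots,m-1$ versus the reference index $m$.

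First I would differentiate $\phi_p(\lambda)=\log\bigl(p_m+\sum_{j=1}^{m-1}p_j e^{\lambda_j}\bigr)$ componentwise. Writing $Z(\lambda)=p_m+\sum_{j=1}^{m-1}p_j e^{\lambda_j}=e^{\phi_p(\lambda)}$, this gives $p^\lambda_i=p_i e^{\lambda_i}/Z(\lambda)$ for $i\le m-1$; defining $p^\lambda_m=p_m/Z(\lambda)$ makes $p^\lambda$ a bona fide element of $\Sigma_m$, with the convention $\lambda_m=0$ so that $p^\lambda_i=p_i e^{\lambda_i-\phi_p(\lambda)}$ uniformly for every $i\in\{1,\ldots,m\}$.

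Next I would plug this into the definition of the KL divergence. For each $i$,
\[
\log\frac{q_i}{p^\lambda_i}=\log\frac{q_i}{p_i}-\lambda_i+\phi_p(\lambda),
\]
and summing against $q$ (again using $\lambda_m=0$) gives
\[
KL(q,p^\lambda)=\sum_{i=1}^{m}q_i\log\frac{q_i}{p_i}-\sum_{i=1}^{m-1}\lambda_i q_i+\phi_p(\lambda)=KL(q,p)-\langle\lambda,q\rangle+\phi_p(\lambda),
\]
since $\sum_i q_i=1$ absorbs the constant $\phi_p(\lambda)$ cleanly. Rearranging this identity yields exactly $\langle\lambda,q\rangle-\phi_p(\lambda)=KL(q,p)-KL(q,p^\lambda)$.

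There is no real obstacle: the only subtlety is handling the asymmetric role of the $m$-th coordinate (where $\lambda_m$ is implicitly $0$ by the choice $\lambda\in\mathbb{R}^{m-1}\times\{0\}$ made earlier in the excerpt when defining $\langle\lambda,p\rangle$), and verifying that the two distinct cases $i\le m-1$ and $i=m$ can be written in the unified form $p^\lambda_i=p_i e^{\lambda_i-\phi_p(\lambda)}$. Once this is observed, the proof is a two-line substitution.
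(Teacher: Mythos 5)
Your computation is correct: the gradient of the log-partition function gives $p^\lambda_i=p_ie^{\lambda_i-\phi_p(\lambda)}$ (with the convention $\lambda_m=0$ so that the $m$-th coordinate fits the same formula), and substituting this into $KL(q,p^\lambda)$ and using $\sum_i q_i=1$ yields $KL(q,p^\lambda)=KL(q,p)-\langle\lambda,q\rangle+\phi_p(\lambda)$, which rearranges to the claimed identity. Note that the paper itself gives no proof of this lemma --- it is imported verbatim as Lemma~3 of the cited reference --- so there is nothing to compare against; your direct two-line verification of the exponential-family duality is a valid, self-contained substitute, and correctly handles the only delicate point, namely the asymmetric role of the reference coordinate $m$.
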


\begin{lemma}(Theorem 11.1.3, \cite{elements_IT})
Let $N \in \mathbbm{N}^*$, and $x \in \{0, \ldots, N\}^k$ such that $\sum_{i=1}^k x_i = N$ then:
$$ 
\binom{N}{x} = \frac{N!}{\prod_{i=1}^k x_i!} \leq e^{N H(x/N)} 
$$
where $H(x/N)$ is the Shannon entropy of the discrete distribution over $\{1,\ldots,k\}$ with vector of probabilities $(\frac{x_i}{N})_{1\leq i \leq k}$.
\label{technical_lemma_binomial}
\end{lemma}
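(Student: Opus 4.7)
The plan is to derive this classical information-theoretic bound via the multinomial theorem, exploiting the fact that any probability mass function sums to $1$. Given $x \in \{0, \ldots, N\}^k$ with $\sum_i x_i = N$, I would set $p_i = x_i / N$ for $i = 1, \ldots, k$, with the convention $0 \log 0 = 0$ for coordinates where $x_i = 0$. This yields a probability vector in the simplex $\Sigma_k$, which is the maximum-likelihood choice for the observed counts $x$ under a multinomial model and is the choice that makes the forthcoming bound tight.

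By the multinomial theorem, one has
$$1 = (p_1 + \cdots + p_k)^N = \sum_{\substack{y \in \{0, \ldots, N\}^k \\ \sum_i y_i = N}} \binom{N}{y} \prod_{i=1}^k p_i^{y_i}.$$
Since every term on the right-hand side is non-negative, I can keep only the single term $y = x$ and obtain
$$\binom{N}{x} \prod_{i=1}^k p_i^{x_i} \leq 1, \qquad \text{i.e.} \qquad \binom{N}{x} \leq \prod_{i=1}^k (x_i/N)^{-x_i}.$$

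Taking logarithms and pulling the factor $N$ out of the sum gives
$$\log \binom{N}{x} \leq -\sum_{i=1}^k x_i \log(x_i/N) = N \sum_{i=1}^k \frac{x_i}{N} \log\!\left(\frac{N}{x_i}\right) = N\, H(x/N),$$
and exponentiating yields the claim. The only mild bookkeeping concerns indices where $x_i = 0$: under the convention $0 \log 0 = 0$ they contribute nothing to the Shannon entropy, and the corresponding factors $p_i^{x_i} = 0^0 = 1$ drop out of the product as well, so the derivation is unaffected.

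There is no genuine obstacle here; the statement is a textbook consequence of the "method of types," and the whole argument rests on the single observation that selecting the empirical distribution $p = x/N$ inside the multinomial identity turns a probability inequality into the entropy bound.
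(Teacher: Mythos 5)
Your proof is correct: plugging the empirical distribution $p = x/N$ into the multinomial expansion of $1 = (\sum_i p_i)^N$ and retaining the single term $y = x$ gives exactly $\binom{N}{x} \leq e^{N H(x/N)}$, with the $x_i = 0$ coordinates handled properly. The paper itself offers no proof of this lemma --- it is cited verbatim from Cover and Thomas (Theorem 11.1.3) --- and your argument is precisely the standard method-of-types proof of that cited result, so there is nothing to reconcile.
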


\subsection{Correctness of the stopping rule}
\begin{proof}
Using Lemma \ref{lemma:upper_bound} and equations (\ref{eq:GLR_first_exp}-\ref{eq:GLR_second_exp}) in the second and third lines respectively, we get: 
\begin{align*}
\P(\widehat{\pi}_{\tau}^* \neq & \pi^* ,\tau_{\delta} < \infty) = \P\bigg( \exists t \geq 1,\ t\; U\big(\widehat{\mcal}_t,\bN(t)/t\big)^{-1} \geq \beta_r(t,\delta/2) + \beta_p(t,\delta/2), \widehat{\pi}_t^* \neq \pi^* \bigg)\\
& \leq \P\bigg( \exists t \geq 1,\ t\; T\big(\widehat{\mcal}_t,\bN(t)/t\big)^{-1} \geq \beta_r(t,\delta/2) + \beta_p(t,\delta/2), \mcal \in \textrm{Alt}(\widehat{\mcal}_t) \bigg)\\
& = \P\bigg( \exists t \geq 1,\ \underset{\mcal' \in \Alt{\widehat{\mcal}_t}}{\inf} \underset{(s,a) \in \zcal}{\sum}\ N_{s a}(t) \big[\KL{\widehat{q}_{s,a}(t)}{q_{\mcal'}(s,a)} + \KL{\widehat{p}_{sa}(t)}{p_{\mcal'}(s,a)}\big]\\
&\quad \quad \quad \quad \quad \quad \geq \beta_r(t,\delta/2) + \beta_p(t,\delta/2), \mcal \in \textrm{Alt}(\widehat{\mcal}_t) \bigg)\\
& \leq \P\bigg( \exists t \geq 1,\ \underset{(s,a) \in \zcal}{\sum}\ N_{s a}(t) \big[\KL{\widehat{q}_{s,a}(t)}{q_{\mcal}(s,a)} + \KL{\widehat{p}_{sa}(t)}{p_{\mcal}(s,a)}\big]\\
&\quad \quad \quad \quad \quad \quad \geq \beta_r(t,\delta/2) + \beta_p(t,\delta/2) \bigg) \\
& = \P\bigg( \exists t \geq 1,\ \underset{(s,a)\in \zcal}{\sum}\ N_{sa}(t) \KL{\widehat{q}_{s,a}(t)}{q_{\mcal}(s,a)}\geq \beta_r(t,\delta/2)\bigg)\\
& \quad \quad \quad +\P\bigg( \exists t \geq 1,\ \underset{(s,a)\in \zcal}{\sum}\ N_{sa}(t) \KL{\widehat{p}_{sa}(t)}{p_{\mcal}(s,a)} \geq \beta_p(t,\delta/2) \bigg) \\
&\leq \delta/2 + \delta/2 = \delta
\end{align*}
where the last inequality is due to Lemmas \ref{lemma:concentration_rewards} and \ref{lemma:concentration_transitions}.
\end{proof}

\section{Sample complexity upper bound}\label{sec:appendix_SC}
\subsection{Almost sure upper bound: proof of Theorem \ref{theorem:expectation} {\it (i)}}
\begin{proof}
Consider the event $\ecal =\bigg(\forall (s,a) \in \zcal,\ \lim_{t \to \infty}\frac{N_{sa}(t)}{t} = \omega^\star_{s,a},\ \widehat{\mcal}_t \to \mcal \bigg)$. By Lemma \ref{lemma:visits} and Theorem \ref{theorem:ergodic}, we have $\P(\ecal) = 1$. We will prove that under $\ecal$, $\limsup\limits_{\delta \to 0} \frac{\tau_\delta}{\log(1/\delta)} \leq 2 U_{o}(\mcal)$.\\
Fix $\eta > 0$. There exits $t_\eta$ such that for all $t\geq t_\eta$:
\begin{align}
  &U\big(\widehat{\mcal}_t,\bN(t)/t\big)^{-1} \geq (1-\eta) U\big(\mcal,\omega^\star\big)^{-1}\\
  &\beta_p(t,\delta/2) \leq \log(1/\delta) + \eta U\big(\mcal,\omega^\star\big)^{-1} t \\
  &\beta_r(t,\delta/2) \leq  SA\; \varphi\big(\log(1/\delta)/SA\big) + \eta U\big(\mcal,\omega^\star\big)^{-1} t
\end{align}
where the last two inequalities come from the fact that both the thresholds satisfy $\beta(t, \delta/2) = \ocal\big(\log(t)\big) = o(t)$. Combining the inequalities above with the definition of $\tau_\delta$, we get:
\begin{align*}
    \tau_\delta &\leq \inf\bigg\{t \geq t_\eta, (1-3\eta)t U\big(\mcal,\omega^\star\big)^{-1} \geq  \log(1/\delta) +SA\; \varphi\big(\log(1/\delta)/SA\big) \bigg\}\\
    &= \max\bigg(t_\eta,\ \frac{\bigg[log(1/\delta) +SA\; \varphi\big(\log(1/\delta)/SA\big)\bigg]U\big(\mcal,\omega^\star\big) }{1-3\eta}\bigg). 
\end{align*}
Since $\varphi(x) \underset{\infty}{\sim} x$, then the last inequality implies that $\limsup\limits_{\delta \to 0} \frac{\tau_\delta}{\log(1/\delta)} \leq \frac{2 U(\mcal,\omega^\star)}{1-3\eta}$. Taking the limit when $\eta$ goes to zero finishes the proof.
\end{proof}

\subsection{Upper bound in expectation: proof of Theorem \ref{theorem:expectation} {\it (ii)}}
\begin{proof}
We start by defining the semi-distance between MDPs:
\begin{align*}
    \norm{\mcal -\mcal'} = \max\limits_{s,a} \max\big(|r_\mcal(s,a) - r_{\mcal'}(s,a)|, \norm{p_\mcal(.|s,a) - p_{\mcal'}(.|s,a)}_1\big).
\end{align*}
Now for $\xi > 0$, by continuity of $\mcal \to \pi^o(\mcal)$\footnote{As a consequence of Berge's Theorem, which gives the continuity of $\mcal \mapsto \bomega^\star(\mcal)$.} there exists $\rho(\xi) \leq \xi$ such that:
\begin{align*}
    \forall \mcal' \in \bcal\big(\mcal, \rho(\xi)\big),\ \norm{\pi^o(\mcal') - \pi^o(\mcal)}_\infty \leq \xi  
\end{align*}
where $\bcal(\mcal, \rho) = \{\mcal':\ \norm{\mcal' - \mcal} \leq \rho \}$.
For $T \geq 1$, consider the concentration events\footnote{\ For simplicity and w.l.o.g, we consider that $T^{1/4}$ and $T^{3/4}$ are integers.}:
\begin{align*}
    \ccal_T^1(\xi) & \triangleq \bigcap\limits_{t= T^{1/4}}^{T}\bigg(\widehat{\mcal}_t \in \bcal\big(\mcal, \rho(\xi)\big) \bigg).\\
    \ccal_T^2(\xi) &\triangleq \bigcap\limits_{t= T^{3/4}}^{T}\bigg(\big|\bN(t)/t - \omega^\star \big| \leq K_\xi \xi \bigg)
\end{align*}
where $K_\xi$ is a mapping defined in Proposition \ref{proposition:concentration_visits_informal_1}. We will upper bound the stopping time of MDP-NaS under  $\ccal_T^1(\xi) \cap \ccal_T^2(\xi)$. Define:
\begin{align*}
    U(\mcal, \omega^\star, \xi) \triangleq& \sup\  U\big(\mcal', \omega'\big).\\
    &\scalebox{0.75}{$\mcal' \in \bcal(\mcal, \rho(\xi))$}\nonumber\\
    &\scalebox{0.75}{$\norm{\omega' -\omega^\star}_\infty \leq K_\xi \xi$}.  
\end{align*}
By Proposition \ref{proposition:concentration_visits_informal_1}, there exists $T_1(\xi)$ such that for all $T\geq T_1(\xi)$, conditionally on $\ccal_T^1(\xi)$, the event $\ccal_T^2(\xi)$ occurs with high probability. For $T \geq T_1(\xi)$ we have: \begin{align}
    \forall t \in [|T^{3/4}, T|],\ U\big(\widehat{\mcal}_t,\bN(t)/t \big) \leq U(\mcal, \omega^\star, \xi).
\label{ineq:bound_U_voisinage}
\end{align}
Furthermore, using the bound $N_{sa}(t) \leq t$ in the definitions of the thresholds $\beta_p$ and $\beta_r$ and the fact that $\log(t) \underset{t \to \infty}{=} o(t)$, we prove the existence of $T_2(\xi)$ such that for all $t \geq T_2(\xi)$:
\begin{align}
  \beta_p(t,\delta/2) &\leq \log(1/\delta) + \xi U(\mcal,\omega^\star, \xi)^{-1} t \\
  \beta_r(t,\delta/2) &\leq SA\; \varphi\big(\log(1/\delta)/SA\big) + \xi U(\mcal,\omega^\star, \xi)^{-1} t.
\label{ineq:bound_thresholds}
\end{align}
Finally define:
\begin{align*}
  T_3(\xi, \delta) \triangleq \frac{U\big(\mcal,\omega^\star, \xi\big)\bigg[ \log(1/\delta) +SA\varphi\big(\frac{\log(1/\delta)}{SA}\big)\bigg]}{(1-2\xi)}.  
\end{align*}
Using (\ref{ineq:bound_U_voisinage}-\ref{ineq:bound_thresholds}), we have for all $T \geq \max\big(T_1(\xi),T_2(\xi), T_3(\xi, \delta)\big)$ under $\ccal_T^1(\xi) \cap \ccal_T^2(\xi)$ the following holds: 
\begin{align*}
    T\times U\big(\widehat{\mcal}_T,\bN(T)/T\big)^{-1} \geq \beta_p(T,\delta/2)+ \beta_r(T,\delta/2).
\end{align*}
In other words:
\begin{align}
   \forall T \geq \max\big(T_1(\xi),T_2(\xi), T_3(\xi, \delta)\big),\quad \ccal_T^1(\xi) \cap \ccal_T^2(\xi) \subset \big(\tau_\delta \leq T\big).
\label{eq:Golden_inclusion}
\end{align}
Therefore\footnote{\ $\overline{\ecal}$ denotes the complementary of event $\ecal$.}:
\begin{align*}
    \E[\tau_\delta] &= \sum\limits_{T=1}^\infty \P(\tau_\delta > T)\\
    &\leq \max\big(T_1(\xi),T_2(\xi), T_3(\xi, \delta)\big) + \sum\limits_{T=\max(T_1,T_2, T_3)}^\infty \P(\tau_\delta > T)\\
    &\leq \max\big(T_1(\xi),T_2(\xi), T_3(\xi, \delta)\big) + \sum\limits_{T=\max(T_1,T_2, T_3)}^\infty \P\bigg(\overline{\ccal_T^1(\xi)} \cup \overline{\ccal_T^2(\xi)} \bigg)\\ 
    &\leq \max\big(T_1(\xi),T_2(\xi), T_3(\xi, \delta)\big) + \sum\limits_{T=1}^\infty \bigg[\P\bigg(\overline{\ccal_T^1(\xi)}\bigg) +  \P\bigg(\overline{\ccal_T^2(\xi)} \bigg| \ccal_T^1(\xi)\bigg) \bigg] \\
    &\leq \max\big(T_1(\xi),T_2(\xi), T_3(\xi, \delta)\big) + \sum\limits_{T=1}^\infty \frac{1}{T^2} + B T \exp\bigg(-\frac{C T^{1/16}}{\sqrt{\log(1+ SA T^2)}}\bigg) \\
    & \qquad\qquad\qquad\qquad + \frac{2SA \exp(-T^{3/4}\xi^2)}{1-\exp(-\xi^2)},
\end{align*}
where we used Lemma \ref{lemma:concentration_empirical_mdps} and Proposition \ref{proposition:concentration_visits_informal_1} in the last inequality. This implies that $\E[\tau_\delta]$ is finite and:
\begin{align*}
    \limsup\limits_{\delta \to 0} \frac{\E[\tau_\delta]}{\log(1/\delta)} &\leq \limsup\limits_{\delta \to 0} \frac{T_3(\xi, \delta)}{\log(1/\delta)}\\
    &= \frac{2 U\big(\mcal,\omega^\star, \xi\big)}{(1-2\xi)}
\end{align*}
where we used $\varphi(x) \underset{\infty}{\sim} x + \log(x)$. Finally, we take the limit $\xi \to 0$. Since $\rho(\xi) \leq \xi$ and $\limsup\limits_{\xi \to 0} K_\xi < \infty$ then $\limsup\limits_{\xi \to 0} U\big(\mcal,\omega^\star, \xi\big) = U\big(\mcal,\omega^\star\big) = U_{o}(\mcal)$ which finishes the proof. 
\end{proof}

\subsection{Concentration of the empirical MDPs}
\begin{lemma}
Define the event $\ccal_T^1(\xi) \triangleq \bigcap\limits_{t= T^{1/4}}^{T}\bigg(\widehat{\mcal}_t \in \bcal\big(\mcal, \rho(\xi)\big) \bigg)$. Then there exists two positive constants $B$ and $C$ that only depend on $\xi$ and $\mcal$ such that:
\begin{align*}
 \forall T\geq 1,\ \P\left(\overline{\ccal_T^1(\xi)} \right) \leq \frac{1}{T^2} + B T \exp\bigg(-\frac{C T^{1/16}}{\sqrt{\log(1+ SA T^2)}}\bigg).  
\end{align*}
\label{lemma:concentration_empirical_mdps}
\end{lemma}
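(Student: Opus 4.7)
\medskip

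\noindent\textbf{Proof proposal.} The plan is to decompose the bad event $\overline{\ccal_T^1(\xi)}$ into (i) a ``too few visits'' event, which we control via the high-probability forced-exploration bound (Corollary \ref{corollary:forced_exploration}), and (ii) a ``large deviation despite enough visits'' event, which we control via standard concentration inequalities for the empirical reward means and empirical transition kernels, followed by a union bound over $t\in[T^{1/4},T]$ and over $(s,a)\in\zcal$.

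\smallskip

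\noindent\textbf{Step 1: forced exploration.} Apply Corollary \ref{corollary:forced_exploration} with $\alpha = 1/T^2$, noting that the associated constant $\lambda_{\alpha}$ scales as $\lambda_{1/T^2}\;\asymp\;\log^2\!\bigl(1+SA\,T^2\bigr)$. This gives an event $\ecal_T$, of probability at least $1-1/T^2$, on which for every $(s,a)\in\zcal$ and every $t\ge 1$,
\[
N_{sa}(t)\;\ge\;(t/\lambda_{1/T^2})^{1/4}-1.
\]
In particular for $t\ge T^{1/4}$ one has $N_{sa}(t)\;\ge\;c_1\,T^{1/16}\big/\sqrt{\log(1+SA\,T^2)}\;\triangleq\;n_T$ for some constant $c_1>0$ depending only on $\mcal$.

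\smallskip

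\noindent\textbf{Step 2: conditional concentration.} Conditionally on having visited $(s,a)$ exactly $n$ times, the rewards and next-states observed at $(s,a)$ form an i.i.d.\ sample of size $n$ drawn from $q_\mcal(\cdot\mid s,a)$ and $p_\mcal(\cdot\mid s,a)$ respectively (regardless of when those visits occurred, by the strong Markov property). Hence for fixed $t$ and $(s,a)$, Hoeffding's inequality yields $\P\bigl(|\widehat r_t(s,a)-r_\mcal(s,a)|>\rho(\xi)\,\big|\,N_{sa}(t)=n\bigr)\le 2\exp(-2n\rho(\xi)^2)$, and Weissman's $L^1$-deviation inequality (or equivalently Bretagnolle--Huber applied to all $2^S$ subsets) yields
\[
\P\bigl(\|\widehat p_{sa}(t)-p_\mcal(\cdot|s,a)\|_1>\rho(\xi)\,\big|\,N_{sa}(t)=n\bigr)\;\le\;2^{S}\exp\!\bigl(-n\rho(\xi)^2/2\bigr).
\]
On the event $\ecal_T$ we have $N_{sa}(t)\ge n_T$, so each of these conditional bounds is dominated by $(2+2^S)\exp(-c_2\rho(\xi)^2\,n_T)$ for some absolute $c_2>0$.

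\smallskip

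\noindent\textbf{Step 3: union bound.} Sum the deviations over all $t\in[T^{1/4},T]$ (at most $T$ values) and all $(s,a)$ ($SA$ pairs). Combined with the probability $1/T^2$ that Step 1 fails, we get
\[
\P\bigl(\overline{\ccal_T^1(\xi)}\bigr)\;\le\;\frac{1}{T^2}\;+\;(2+2^S)\,SA\cdot T\cdot\exp\!\Bigl(-c_2\rho(\xi)^2\,n_T\Bigr),
\]
which is exactly of the announced form with $B\triangleq (2+2^S)SA$ and $C\triangleq c_1 c_2\,\rho(\xi)^2$ depending on $\xi$ and $\mcal$ only.

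\smallskip

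\noindent\textbf{Main obstacle.} The delicate point is arguing that the i.i.d.\ conditioning in Step~2 is valid even though $N_{sa}(t)$ and the visit times are random and history-dependent: the sampling rule is adaptive, so one must invoke the strong Markov property (equivalently, the skeleton construction for controlled Markov chains) to justify that, conditionally on $\{N_{sa}(t)=n\}$, the $n$ observed (reward, next-state) pairs at $(s,a)$ are i.i.d.\ with the true distribution. Once this is in place, the rest is a careful bookkeeping of the union bound and of the scaling $\lambda_{1/T^2}\asymp\log^2(1+SAT^2)$ inherited from Lemma \ref{lemma:HP_forced_exploration}; the exponent $T^{1/16}/\sqrt{\log(1+SAT^2)}$ appears precisely because the forced-exploration guarantee gives only $N_{sa}(t)\gtrsim t^{1/4}$, evaluated at the smallest admissible $t=T^{1/4}$, and divided by $\lambda^{1/4}\asymp\sqrt{\log(1+SAT^2)}$.
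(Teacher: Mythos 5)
Your overall route is the same as the paper's: split $\overline{\ccal_T^1(\xi)}$ into the failure of the forced-exploration event (handled by Corollary \ref{corollary:forced_exploration} with $\alpha=1/T^2$, giving the $1/T^2$ term and the scale $\lambda_{1/T^2}\asymp\log^2(1+SAT^2)$, hence $N_{sa}(t)\gtrsim T^{1/16}/\sqrt{\log(1+SAT^2)}$ for $t\ge T^{1/4}$) and a per-$(t,s,a)$ deviation event handled by concentration plus a union bound. The only substantive difference in flavour is cosmetic: the paper controls each coordinate $|\widehat p_t(s'|s,a)-p(s'|s,a)|$ at level $\rho/S$ with a Chernoff/$\kl$ bound and sums over $s'$, whereas you invoke Weissman's $L^1$ inequality directly; both give constants depending only on $\xi$ and $\mcal$.

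There is, however, one step that is wrong as written, and it is precisely the one you flag as the main obstacle. It is \emph{not} true in general that, conditionally on $\{N_{sa}(t)=n\}$, the $n$ observed (reward, next-state) pairs at $(s,a)$ are i.i.d.\ with the true distribution: the sampling rule is adaptive, so the event $\{N_{sa}(t)=n\}$ is informative about the outcomes themselves (e.g.\ an unlucky early draw can make the algorithm revisit the pair more or less often), and the strong Markov property does not remove this selection bias from the \emph{conditional} law. The standard fix --- and what the paper's proof implicitly does --- is to avoid conditioning altogether: introduce the ``stacked'' i.i.d.\ sequence $W_1,W_2,\dots$ of outcomes revealed at successive visits of $(s,a)$, note that $\{|\widehat r_t(s,a)-r(s,a)|>\rho\}\cap\{N_{sa}(t)=n\}$ is contained in the deviation event for the empirical mean of the first $n$ stacked samples, bound the \emph{joint} probability $\P(\text{deviation},\,N_{sa}(t)=n)$ by the unconditional Chernoff bound $\exp(-n\,\kl(\cdot,\cdot))$, and then sum over $n\ge n_T$. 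This produces a geometric series whose sum only inflates the constant $B$ by a factor $(1-e^{-\kl})^{-1}$, so your final bound and the identification of $B$ and $C$ survive unchanged; but the justification must go through the union over counts rather than through a conditional i.i.d.\ claim.
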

\begin{proof}
For simplicity we will denote $\rho(\xi)$ by $\rho$. Consider the forced exploration event:
\begin{align*}
    \ecal_T = \bigg(\forall (s,a)\in \zcal,\ \forall t\geq 1,\ N_{sa}(t) \geq \bigg[\frac{t}{\lambda(T)}\bigg]^{1/4} - 1 \bigg)
\end{align*}
where $\lambda(T) \triangleq \frac{(m+1)^2}{\eta^2}\log^2(1+ SA T^2)$ and $\eta$ is a parameter that only depends on $\mcal$. Applying Corollary \ref{corollary:forced_exploration} for $\alpha = \frac{1}{T^2}$, we get $\P(\ecal_T) \geq 1 - 1/T^2$. Therefore we have:
\begin{align}
\P\left(\overline{\ccal_T^1(\xi)} \right) &\leq \P(\overline{\ecal_T}) + \P(\overline{\ccal_T^1(\xi)} \cap \ecal_T)\nonumber \\
&\leq \frac{1}{T^2} + \P(\overline{\ccal_T^1(\xi)} \cap \ecal_T).
\label{ineq:C_1_first_decomposition}
\end{align}
On the other hand:
\begin{align}
 \P(\overline{\ccal_T^1(\xi)} \cap \ecal_T) &\leq \sum_{t = T^{1/4}}^{T} \P\left(\widehat{\mcal}_t \notin \bcal\big(\mcal, \rho(\xi)\big) \cap \ecal_T \right) \nonumber\\
&\leq \sum_{t = T^{1/4}}^{T} \underset{s,a}{\sum}\ \Bigg[\P\bigg(\big(\widehat{r}_t(s,a) - r(s,a) > \rho\big) \cap \ecal_T\bigg) \nonumber \\
& \qquad\qquad + \P\bigg(\big(\widehat{r}_t(s,a) - r(s,a) < -\rho \big) \cap \ecal_T \bigg)\nonumber\\
&\qquad \qquad + \underset{s'}{\sum}\ \P\bigg(\big(\widehat{p}_t(s'|s,a) - p(s'|s,a) > \rho/S\big) \cap \ecal_T \bigg) \nonumber \\
&\qquad\qquad + \P\bigg(\big(\widehat{p}_t(s'|s,a) - p(s'|s,a) < -\rho/S\big) \cap \ecal_T \bigg) \Bigg].
\label{ineq:C_1_second_decomposition}
\end{align}
Using a union bound and Chernoff-Hoeffding theorem respectively we get for $t\geq T^{1/4}$:
\begin{align}
\P\bigg(\big(\widehat{p}_t(s'|s,a) - & p(s'|s,a) > \rho/S\big) \cap \ecal_T \bigg) \nonumber\\
& \leq \P\bigg(\widehat{p}_t(s'|s,a) - p(s'|s,a) > \rho/S,\ N_{sa}(t) \geq \bigg[\frac{t}{\lambda(T)}\bigg]^{1/4} - 1 \bigg) \nonumber\\
&= \sum_{t'=\big[\frac{t}{\lambda(T)}\big]^{1/4} - 1}^{t} \P\bigg(\widehat{p}_t(s'|s,a) - p(s'|s,a) > \rho/S,\ N_{sa}(t) = t' \bigg) \nonumber\\
&\leq  \sum_{t'=\big[\frac{t}{\lambda(T)}\big]^{1/4} - 1}^{t} \exp\bigg(-t'\cdot \kl\big(p(s'|s,a)+\rho/S,\ p(s'|s,a)\big)\bigg)\nonumber\\
&\leq \frac{ \exp\bigg(-\big(\big[\frac{t}{\lambda(T)}\big]^{1/4} - 1 \big)\kl\big(p(s'|s,a)+\rho/S,\ p(s'|s,a)\big)\bigg)}{1 - \exp\bigg(-\kl\big(p(s'|s,a)+\rho/S,\ p(s'|s,a)\big)\bigg)} \nonumber\\
&\leq \frac{ \exp\bigg(-[\frac{T^{1/16}}{\lambda(T)^{1/4}} - 1 ]\kl\big(p(s'|s,a)+\rho/S,\ p(s'|s,a)\big)\bigg)}{1 - \exp\bigg(-\kl\big(p(s'|s,a)+\rho/S,\ p(s'|s,a)\big)\bigg)}.
\label{ineq:C_1_third_decomposition}
\end{align}
In a similar fashion we prove that:
\begin{align}
\P\bigg(\big(\widehat{p}_t(s'|s,a)  & - p(s'|s,a) < -\rho/S\big) \cap \ecal_T \bigg) \nonumber\\
& \qquad\qquad \leq \frac{ \exp\bigg(-[\frac{T^{1/16}}{\lambda(T)^{1/4}} - 1 ]\kl\big(p(s'|s,a)-\rho/S,\ p(s'|s,a)\big)\bigg)}{1 - \exp\bigg(-\kl\big(p(s'|s,a)-\rho/S,\ p(s'|s,a)\big)\bigg)}, 
\end{align}
\begin{align}
\P\bigg(\big(\widehat{r}_t(s,a) - r(s,a) > \rho\big) \cap \ecal_T \bigg) &\leq \frac{ \exp\bigg(-[\frac{T^{1/16}}{\lambda(T)^{1/4}} - 1 ]\kl\big(r(s,a)+\rho,\ r(s,a)\big)\bigg)}{1 - \exp\bigg(-\kl\big(r(s,a)+\rho,\ r(s,a)\big)\bigg)}, \\
\P\bigg(\big(\widehat{r}_t(s,a) - r(s,a) < -\rho\big) \cap \ecal_T \bigg) &\leq \frac{ \exp\bigg(-[\frac{T^{1/16}}{\lambda(T)^{1/4}} - 1 ]\kl\big(r(s,a)-\rho,\ r(s,a)\big)\bigg)}{1 - \exp\bigg(-\kl\big(r(s,a)-\rho,\ r(s,a)\big)\bigg)}.
\label{ineq:C_1_fourth_decomposition}
\end{align}
Thus, for the following choice of constants
\begin{align*}
C = \sqrt{\frac{\eta}{m+1}}\min\limits_{s,a,s'}\Bigg(&\kl\big(r(s,a)-\rho,\ r(s,a)\big),\ \kl\big(r(s,a)+\rho,\ r(s,a)\big),\\
&\kl\big(p(s'|s,a)-\rho/S,\ p(s'|s,a)\big),\  \kl\big(p(s'|s,a)+\rho/S,\ p(s'|s,a)\big)\Bigg),
\end{align*}
\begin{align*}
B = & \sum\limits_{s,a}\ \Bigg(\frac{ \exp\bigg(\kl\big(r(s,a)+\rho,\ r(s,a)\big)\bigg)}{1 - \exp\bigg(-\kl\big(r(s,a)+\rho,\ r(s,a)\big)\bigg)} + \frac{ \exp\bigg(\kl\big(r(s,a)-\rho,\ r(s,a)\big)\bigg)}{1 - \exp\bigg(-\kl\big(r(s,a)-\rho,\ r(s,a)\big)\bigg)} \\
& +\sum\limits_{s'} \Bigg[ \frac{ \exp\bigg(\kl\big(p(s'|s,a)+\rho/S,\ p(s'|s,a)\big)\bigg)}{1 - \exp\bigg(-\kl\big(p(s'|s,a)+\rho/S,\ p(s'|s,a)\big)\bigg)} \\
& + \frac{ \exp\bigg(\kl\big(p(s'|s,a)-\rho/S,\ p(s'|s,a)\big)\bigg)}{1 - \exp\bigg(-\kl\big(p(s'|s,a)-\rho/S,\ p(s'|s,a)\big)\bigg)} \Bigg] \Bigg),
\end{align*}
and using (\ref{ineq:C_1_second_decomposition}-\ref{ineq:C_1_fourth_decomposition}) we get:
\begin{align*}
    \P(\overline{\ccal_T^1(\xi)} \cap \ecal_T) \leq \sum\limits_{t=T^{1/4}}^T B \exp\bigg(-\frac{C T^{1/16}}{\sqrt{\log(1+ SA T^2)}}\bigg) \leq B T \exp\bigg(-\frac{C T^{1/16}}{\sqrt{\log(1+ SA T^2)}}\bigg)\;.
\end{align*}
Combined with (\ref{ineq:C_1_first_decomposition}), the previous inequality implies that:
\begin{align*}
 \P\left(\overline{\ccal_T^1(\xi)} \right) \leq \frac{1}{T^2} + B T \exp\bigg(-\frac{C T^{1/16}}{\sqrt{\log(1+ SA T^2)}}\bigg)  \;.
\end{align*}
\end{proof}

\subsection{Concentration of state-action visitation frequency}
The following proposition is a somewhat stronger version of Proposition \ref{proposition:ergodic_thm}. The fact that $\widehat{\mcal}_t$ is within a distance of at most $\xi$ from $\mcal$ enables us to have a tighter control of the ergodicity constants $L_t$. This in turn allows us to derive a finite sample bound on the deviations of state-action frequency of visits. Before stating the result we recall some simple facts:
\paragraph{Fact 1:} For any two policies $\pi, \pi'$ we have: $\norm{P_{\pi'} - P_{\pi}}_\infty \leq \norm{\pi' - \pi}_\infty$, where the norm is on policies viewed as vectors of $\mathbb{R}^{SA}$.
\paragraph{Fact 2:} Under $\ccal_T^1(\xi)$, for $k\geq \sqrt{T}$ we have:
\begin{align*}
   \norm{\overline{\pi_k^o} -\pi^o(\mcal)}_\infty &\leq \frac{\sum\limits_{j=1}^{T^{1/4}} \varepsilon_j \norm{\pi_u -\pi^o(\mcal)}_\infty}{k} + \frac{\sum\limits_{j=T^{1/4}+1}^{k}  \norm{\pi^o(\widehat{\mcal}_j) -\pi^o(\mcal)}_\infty}{k}\\ 
   &\leq \frac{T^{1/4}}{k} + \xi.
\end{align*}
\paragraph{Fact 3:} Under $\ccal_T^1(\xi)$, for $k\geq \sqrt{T}$ we have:
\begin{align*}
    \norm{\omega_k -\omega^\star}_1 &\leq \kappa_\mcal \norm{P_k - P_{\pi^o}}_\infty\\
    &\leq \kappa_\mcal \norm{\pi_k -\pi^o(\mcal)}_\infty\\
    &\leq \kappa_\mcal \big[\varepsilon_k + \norm{\overline{\pi_k^o} -\pi^o(\mcal)}_\infty \big]\\
    &\leq \kappa_\mcal \big[T^{\frac{-1}{4(m+1)}} + \frac{T^{1/4}}{k} + \xi \big].
\end{align*}
where we used Lemma \ref{lemma:Schweitzer}, Fact 1, the definitions of $\pi_k$ and $\varepsilon_k$ and Fact 2 respectively.
\paragraph{Fact 4:} Under $\ccal_T^1(\xi)$, for $k\geq \sqrt{T}$ we have:
\begin{align*}
    D(\pi_k, \pi_{k-1}) &= \norm{P_k - P_{k-1}}_\infty\\
    &\leq \norm{P_k - P_{\pi^o}}_\infty + \norm{P_{\pi^o} - P_{k-1}}_\infty\\
    &\leq 2\big[T^{\frac{-1}{4(m+1)}} + \frac{T^{1/4}}{k-1} + \xi \big]
\end{align*}
\begin{proposition}
Under C-Navigation, for all $\xi>0$, there exists a time $T_\xi$ such that for all $T \geq T_\xi$, all $t\geq T^{3/4}$ and all functions $f: \zcal \xrightarrow{} \mathbbm{R}^{+}$, we have:
\begin{align*}
  \P\bigg( \bigg|\frac{\sum_{k=1}^{t} f(s_k,a_k)}{t} - \E_{(s,a)\sim\omega^\star}[f(s,a)] \bigg| \geq K_\xi \norm{f}_\infty \xi \bigg| \ccal_T^1(\xi) \bigg) \leq 2\exp\big(-t \xi^2\big),
\end{align*}
where $\xi \mapsto K_\xi$ is a mapping with values in $(1,\infty)$ such that $\limsup_{\xi \to 0} K_\xi < \infty$. In particular, this implies that:
\begin{align*}
    \P\bigg(\exists (s,a) \in \zcal, \bigg|N_{sa}(t)/t - \omega_{s a}^\star \bigg| \geq K_\xi \xi \bigg| \ccal_T^1(\xi) \bigg) \leq 2SA\exp\big(-t \xi^2\big).
\end{align*}
\label{proposition:concentration_visits_informal_1}
\end{proposition}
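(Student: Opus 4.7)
The plan is to obtain a non-asymptotic version of the ergodic theorem (Proposition \ref{proposition:ergodic_thm}) by working conditionally on the event $\ccal_T^1(\xi)$, which ensures that after $T^{1/4}$ steps the running oracle policy $\overline{\pi_k^o}$ and the stationary distribution $\omega_k$ of $P_k$ are quantitatively close to $\pi^o(\mcal)$ and $\omega^\star$ (see Facts 2--4 stated before the proposition). The idea is to reuse the decomposition
\[
\frac{1}{t}\sum_{k=1}^t f(z_k) - \omega^\star(f) \;=\; D_{1,t} + D_{2,t} + D_{3,t}
\]
from the proof of Proposition \ref{proposition:ergodic_thm}, with $D_{2,t} = M_t + C_t + R_t$ obtained through the Poisson equation $\widehat f_k - P_k \widehat f_k = f - \omega_k(f)$. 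The high-probability statement then comes from Azuma--Hoeffding applied to $M_t$, while all the other pieces will be bounded deterministically (of order $\xi$) on $\ccal_T^1(\xi)$ for $t \geq T^{3/4}$.

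First I would handle the deterministic pieces. $D_{1,t}$ is bounded by $2\|f\|_\infty/t \leq 2\|f\|_\infty T^{-3/4}$, which is $O(\xi)$ once $T_\xi$ is taken large enough. For $D_{3,t}$, I would split the sum $\sum_{k=2}^t(\omega_{k-1}(f) - \omega^\star(f))$ into $k \leq \sqrt{T}$ (contributing at most $\|f\|_\infty \sqrt{T}/t \leq \|f\|_\infty T^{-1/4}$) and $k > \sqrt{T}$, where Fact 3 gives $\|\omega_{k-1}-\omega^\star\|_1 \leq \kappa_\mcal(T^{-1/(4(m+1))} + T^{1/4}/(k-1) + \xi)$; summing and dividing by $t \geq T^{3/4}$ yields a bound of the form $c_1(\mcal) \xi \|f\|_\infty$ for $T$ large. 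The term $C_t$ is controlled in the same manner via $\|\omega_k - \omega_{k-1}\|_1 \leq \|\omega_k - \omega^\star\|_1 + \|\omega^\star - \omega_{k-1}\|_1$ and $D(\pi_k,\pi_{k-1})$ from Fact 4, combined with a uniform bound on $L_k$: because $\overline{\pi_k^o}$ is $O(\xi)$-close to $\pi^o(\mcal)$, formulas (\ref{eq:sigma_o})--(\ref{eq:theta_o}) give a constant $L_\xi$ with $L_k \leq L_\xi$ for all $k \geq \sqrt T$, and similarly $R_t \leq 2L_\xi \|f\|_\infty / t$.

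The main obstacle, and the only place where randomness enters, is the martingale $M_t$. Using the Poisson solution $\widehat f_{k-1}$ from Lemma \ref{lemma:poisson} and the uniform bound $L_k \leq L_\xi$ on $\ccal_T^1(\xi)$, the increments satisfy $|M_k - M_{k-1}| \leq (2 L_\xi \|f\|_\infty)/t$ for $k \geq \sqrt T$; for $k \leq \sqrt T$, one controls $\widehat f_{k-1}$ by the general (possibly larger) ergodicity constant, but the total contribution of these early increments to $M_t$ is $O(\|f\|_\infty \sqrt{T}/t)$, again negligible. Azuma--Hoeffding then gives
\[
\P\bigl( |M_t| \geq c_2 L_\xi \|f\|_\infty \xi \;\big|\; \ccal_T^1(\xi) \bigr) \;\leq\; 2\exp(-t\xi^2),
\]
for a suitable absolute constant $c_2$. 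The delicate point here is that conditioning on $\ccal_T^1(\xi)$ preserves the martingale property only up to the measurability concerns introduced by this event; I would handle this by stopping the chain at the first time $\ccal_T^1(\xi)$ fails and working with the stopped martingale, whose increments remain bounded by the same quantity.

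Putting everything together, on $\ccal_T^1(\xi)$ and for $t \geq T^{3/4}$ with $T \geq T_\xi$ sufficiently large,
\[
\Bigl|\tfrac{1}{t}\textstyle\sum_{k=1}^t f(z_k) - \omega^\star(f)\Bigr|
\;\leq\; K_\xi \|f\|_\infty \xi
\]
with probability at least $1 - 2\exp(-t\xi^2)$, where $K_\xi$ aggregates $\kappa_\mcal$, $L_\xi$, and the absolute constants from the Azuma step. Since $L_\xi \to L_o = \lcal(0,\pi^o(\mcal),\omega^\star) < \infty$ as $\xi \to 0$ by (\ref{eq:theta_o}), we obtain $\limsup_{\xi\to 0} K_\xi < \infty$ as claimed. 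The last assertion of the proposition, about $|N_{sa}(t)/t - \omega^\star_{sa}|$, then follows by a union bound over $(s,a)$ applied to the indicator functions $f(\cdot) = \indicator\{\cdot = (s,a)\}$.
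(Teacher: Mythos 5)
Your overall route is the paper's: the same $D_{1,t}+D_{2,t}+D_{3,t}$ decomposition, the Poisson-equation splitting $D_{2,t}=M_t+C_t+R_t$, deterministic $O(\xi)$ bounds on everything except $M_t$ via Facts 2--4 and the uniform bound $L_k\leq L_\xi$, and Azuma--Hoeffding for $M_t$. Your remark about the measurability of conditioning on $\ccal_T^1(\xi)$ and the stopped-martingale fix is a legitimate point that the paper glosses over, and it works (on $\ccal_T^1(\xi)$ the stopping time exceeds $T$, and the stopped increments are either zero or satisfy the $2L_\xi\norm{f}_\infty$ bound).

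There is, however, one genuine gap: your treatment of the early increments $k\leq\sqrt{T}$ of the martingale. You claim their total contribution is $O(\norm{f}_\infty\sqrt{T}/t)$, but each such increment is bounded by $2L_{k-1}\norm{f}_\infty$, and for $k\leq\sqrt{T}$ the constant $L_{k-1}$ is \emph{not} $O(1)$: the event $\ccal_T^1(\xi)$ gives no control over $\widehat{\mcal}_j$ for $j<T^{1/4}$, Fact~2 is vacuous until $k\gtrsim\sqrt{T}$ (the bound $T^{1/4}/k+\xi$ exceeds $1$ there), so the only available lower bound on $\sigma(\varepsilon_k,\overline{\pi_k^o},\omega_k)$ comes from the uniform-policy component and scales like $\varepsilon_k^{r}$, giving $L_k=O(\varepsilon_k^{-r})=O(k^{r/(2(m+1))})$. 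Summing over $k\leq\sqrt{T}$ and dividing by $t\geq T^{3/4}$ yields $O\big(T^{r/(4(m+1))-1/4}\big)\norm{f}_\infty$, which does not vanish (and a fortiori is not $O(\xi)$) once $r\geq m+1$; it also corrupts the variance term in Azuma--Hoeffding if those increments are kept inside the martingale. The paper avoids this entirely by peeling off the first $\sqrt{T}$ terms of the \emph{raw} sum into $D_{1,t}=\frac{1}{t}\sum_{k=1}^{\sqrt{T}}[f(z_k)-\omega^\star(f)]$, bounded trivially by $\norm{f}_\infty\sqrt{T}/t\leq\norm{f}_\infty\xi$ without any Poisson solution, so that the martingale only starts at $k=\sqrt{T}+1$ where $L_{k-1}\leq L_\xi$ holds; Azuma is then applied to $S_t-S_{\sqrt{T}}$ over $t-\sqrt{T}$ steps, giving $2\exp(-t^2\xi^2/(t-\sqrt{T}))\leq2\exp(-t\xi^2)$. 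With that single modification your argument matches the paper's proof.
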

\begin{corollary}
We have $\P\bigg(\overline{\ccal_T^2(\xi)} \bigg| \ccal_T^1(\xi)\bigg) \leq \displaystyle{\frac{2SA \exp(-T^{3/4}\xi^2)}{1-\exp(-\xi^2)}}$.
\end{corollary}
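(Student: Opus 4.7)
The plan is to refine the decomposition used in the proof of Proposition \ref{proposition:ergodic_thm} into a quantitative finite-sample estimate. Writing
\begin{equation*}
\frac{1}{t}\sum_{k=1}^t f(z_k) - \omega^\star(f) = D_{1,t} + D_{2,t} + D_{3,t}, \qquad D_{2,t} = M_t + C_t + R_t
\end{equation*}
as in (\ref{eq:diff_decomposition})--(\ref{eq:D_2_decomposition}), I would show that conditionally on $\ccal_T^1(\xi)$ the four ``deterministic'' pieces $D_{1,t}$, $D_{3,t}$, $C_t$, $R_t$ are bounded by $c(\xi)\,\norm{f}_\infty\,\xi$ for $T \geq T_\xi$ and $t \geq T^{3/4}$, while the martingale piece $M_t$ concentrates via Azuma--Hoeffding at rate $2\exp(-t\xi^2)$.

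The central ingredient is a uniform bound on the ergodicity constants $L_k = \lcal(\varepsilon_k,\overline{\pi_k^o},\omega_k)$ provided by Lemma \ref{lemma:Geometric_C_Navigation}. On $\ccal_T^1(\xi)$ and for $k \geq \sqrt T$, Facts 2 and 3 give $\norm{\overline{\pi_k^o} - \pi^o(\mcal)}_\infty \leq T^{-1/4} + \xi$ and $\norm{\omega_k - \omega^\star}_1 \leq \kappa_\mcal(T^{-1/(4(m+1))} + T^{-1/4} + \xi)$; by continuity of $\lcal$ at $(0,\pi^o(\mcal),\omega^\star)$ and the finite limit $\lcal(0,\pi^o(\mcal),\omega^\star) = 2/[\theta_o(1-\theta_o^{1/r})]$ computed in (\ref{eq:theta_o}), this yields $L_k \leq L(\xi)$ for all $k \in [\sqrt T, T]$ with $\limsup_{\xi\to 0} L(\xi) < \infty$. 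Plugging this into the formulas derived in the proof of Proposition \ref{proposition:ergodic_thm} and splitting each sum at $\sqrt T$, the initial window of length $\sqrt T$ contributes $\ocal(\norm{f}_\infty/T^{1/4})$ while the tail contributes $\ocal(L(\xi)\norm{f}_\infty(\xi + T^{-1/(4(m+1))}))$; both are dominated by $\xi$ once $T \geq T_\xi$.

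For the martingale $S_t = t M_t$, I would localize on $\ccal_T^1(\xi)$ by introducing the stopping time $\sigma_T = T\wedge\inf\{k \geq T^{1/4} : \widehat{\mcal}_k \notin \bcal(\mcal,\rho(\xi))\}$, which equals $T$ on $\ccal_T^1(\xi)$, and working with the stopped martingale $\widetilde S_t = S_{t\wedge\sigma_T}$. Its increments are bounded by $2L(\xi)\norm{f}_\infty$ for indices $k > T^{1/4}$ by Lemma \ref{lemma:poisson} combined with the previous step, while the first $T^{1/4}$ increments, although potentially larger, add only $\ocal(T^{-1/2})\norm{f}_\infty$ to $\widetilde S_t/t$ when $t \geq T^{3/4}$. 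Azuma--Hoeffding then gives $\P(|\widetilde S_t|/t \geq K\norm{f}_\infty\xi) \leq 2\exp(-t\xi^2)$ for $K$ chosen proportional to $L(\xi)$, and since $\widetilde S_t = S_t$ on $\ccal_T^1(\xi)$ the bound transfers to $M_t$ under the conditional law.

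The main obstacle is the early window $k \leq T^{1/4}$, where $\widehat{\mcal}_k$ may be far from $\mcal$ and $L_k$ is not controlled by $L(\xi)$. The resolution is essentially mechanical: Lemma \ref{lemma:Geometric_C_Navigation} still produces a finite $L_k$ thanks to the forced-exploration term $\varepsilon_k\pi_u$, and since the window has length $T^{1/4}$ against a normalization $t \geq T^{3/4}$, its total contribution to every term of the decomposition is $\ocal(T^{-1/2})$ and washes out for $T \geq T_\xi$. Setting $K_\xi$ to dominate $c(\xi) + L(\xi)\cdot\text{const}$ yields the first inequality; the second follows from a union bound over $(s,a)\in \zcal$ applied with $f = \indicator_{\{(s,a)\}}$, for which $\norm{f}_\infty = 1$.
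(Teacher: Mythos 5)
Your proposal is essentially a re-derivation of Proposition \ref{proposition:concentration_visits_informal_1}, and for that part you follow the same route as the paper: the decomposition into $D_{1,t}+M_t+C_t+R_t+D_{3,t}$, uniform control of the ergodicity constants $L_k$ on $\ccal_T^1(\xi)$ via Lemma \ref{lemma:Geometric_C_Navigation} and Facts 2--3, and Azuma--Hoeffding for the martingale part. However, the corollary itself is never actually reached. The event $\overline{\ccal_T^2(\xi)}$ is a union over all $t\in[T^{3/4},T]$ of the per-time deviation events, so after your union bound over $(s,a)$ you still need a union bound over $t$ followed by the geometric-series estimate $\sum_{t\geq T^{3/4}} 2SA\exp(-t\xi^2)\leq 2SA\exp(-T^{3/4}\xi^2)/(1-\exp(-\xi^2))$; this is where the denominator $1-\exp(-\xi^2)$ in the stated bound comes from, and it appears nowhere in your argument. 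That step is elementary, but relative to the proposition it is the entire content of the corollary, so it cannot be omitted.

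There is also a flaw in your handling of the early window. You split at $k=T^{1/4}$ and claim the first $T^{1/4}$ increments of the stopped martingale contribute only $\ocal(T^{-1/2})\norm{f}_\infty$; but those increments are bounded by $2L_{k-1}\norm{f}_\infty$, and for C-Navigation the constants $L_k$ are only controlled by $L_\xi$ for $k\geq\sqrt{T}$, because the Ces\`aro average $\overline{\pi_k^o}$ still carries weight $T^{1/4}/k$ from the uncontrolled prefix $j\leq T^{1/4}$ (Fact 2). For $k\in[T^{1/4},\sqrt{T}]$ the forced exploration only guarantees $L_k=\ocal(T^{c})$ for some $c>0$ depending on $r$ and $m$, so including these increments in the martingale can ruin both the increment bound and the exponent in Azuma--Hoeffding. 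The paper's fix is to start the sums defining $M_t$, $C_t$, $R_t$ and $D_{3,t}$ at $k=\sqrt{T}+1$ and to absorb all of $k\leq\sqrt{T}$ into $D_{1,t}=\frac{1}{t}\sum_{k=1}^{\sqrt{T}}\big[f(z_k)-\omega^\star(f)\big]$, which is bounded by $\norm{f}_\infty T^{-1/4}\leq\norm{f}_\infty\xi$ using only the boundedness of $f$ and no ergodicity constants. With that split, and with the final summation over $t$, your argument goes through.
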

\begin{proof}
Consider the difference
\begin{align}
    D &= \frac{\sum\limits_{k=1}^{t} f(z_k)}{t} - \omega^\star(f) \nonumber\\
    &= \frac{\sum\limits_{k=1}^{\sqrt{T}} [f(z_k)-\omega^\star(f)]}{t} +  \frac{\sum\limits_{k=\sqrt{T}+1}^{t} [f(z_k)-\omega^\star(f)]}{t}\nonumber\\
    &= \underbrace{\frac{\sum\limits_{k=1}^{\sqrt{T}} [f(z_k)-\omega^\star(f)]}{t} }\limits_{D_{1,t}} + \underbrace{\frac{\sum\limits_{k=\sqrt{T}+1}^{t} \big[f(z_k) - \omega_{k-1}(f)\big]}{t}}\limits_{D_{2,t}} + \underbrace{\frac{\sum\limits_{k=\sqrt{T}+1}^{t} \big[\omega_{k-1}(f) - \omega^\star(f)\big]}{t}}\limits_{D_{3,t}} \nonumber.\\
\label{eq:diff_decomposition_bis}
\end{align}
We clearly have: 
\begin{align}
\forall T \geq \frac{1}{\xi^4},\ \forall t\geq T^{3/4},\ |D_{1,t}| \leq \frac{\norm{f}_\infty \sqrt{T}}{t} \leq \frac{\norm{f}_\infty}{T^{1/4}} \leq \norm{f}_\infty \xi.
\label{ineq:D1_bis}
\end{align}
Using Fact 3 and integral-series comparison, we upper bound the third term as follows:
\begin{align}
\forall T\geq \big(\frac{2}{\xi}\big)^{4(m+1)},\ \forall t\geq T^{3/4},\ |D_{3,t}| &\leq \kappa_\mcal \norm{f}_\infty \frac{\sum\limits_{k=\sqrt{T}+1}^{t-1} \norm{\omega_{k} - \omega^\star}_1 }{t \nonumber}\\
&\leq  \kappa_\mcal \norm{f}_\infty \frac{\sum\limits_{k=\sqrt{T}+1}^{t-1}  \big[T^{\frac{-1}{4(m+1)}} + \frac{T^{1/4}}{k} + \xi \big]}{t}\nonumber \\
&\leq \kappa_\mcal \norm{f}_\infty \bigg( T^{\frac{-1}{4(m+1)}} + \frac{\sum\limits_{k=\sqrt{T}+1}^{t-1} \frac{T^{1/4}}{k}}{t} + \xi \bigg) \nonumber\\
&\leq \kappa_\mcal \norm{f}_\infty \bigg( T^{\frac{-1}{4(m+1)}} + \xi + \frac{T^{1/4}\log(t)}{t} \bigg) \nonumber\\
&\leq \kappa_\mcal \norm{f}_\infty \bigg( T^{\frac{-1}{4(m+1)}} + \xi + \frac{T^{1/4}}{\sqrt{t}} \bigg) \nonumber\\
&\leq \kappa_\mcal \norm{f}_\infty \bigg( T^{\frac{-1}{4(m+1)}} + \xi + T^{\frac{-1}{8}} \bigg) \nonumber\\
&\leq \kappa_\mcal \norm{f}_\infty \bigg( 2T^{\frac{-1}{4(m+1)}} + \xi\bigg) \nonumber\\
&\leq 2\kappa_\mcal \norm{f}_\infty \xi.
\label{ineq:D3_bis}
\end{align}
Now to bound $D_{2,t}$ we use the function $\widehat{f}_k$ solution to the Poisson equation $\big(\widehat{f}_k - P_k\widehat{f}_k\big)(.) = f(.) - \omega_{k}(f)$. By Lemma \ref{lemma:poisson}, $\widehat{f}_k(.) = \sum\limits_{n\geq0} P_k^n[f - \omega_{k}(f)](.)$ exists and is solution to the Poisson equation. Therefore we can rewrite $D_{2,t}$ as follows:
\begin{align}
    D_{2,t} &= \frac{\sum\limits_{k=\sqrt{T}+1}^{t} \big[\widehat{f}_{k-1}(z_k) - P_{k-1}\widehat{f}_{k-1}(z_k)\big]}{t} \nonumber\\
    &= M_t + C_t + R_t.
\label{eq:D_2_decomposition_bis}
\end{align}
where
\begin{align*}
& M_t \triangleq \frac{\sum\limits_{k=\sqrt{T}+1}^{t} \big[\widehat{f}_{k-1}(z_k) - P_{k-1}\widehat{f}_{k-1}(z_{k-1})\big]}{t}.\\ 
& C_t \triangleq  \frac{\sum\limits_{k=\sqrt{T}+1}^{t} \big[P_{k}\widehat{f}_{k}(z_k) - P_{k-1}\widehat{f}_{k-1}(z_k)\big]}{t}.\\
& R_t \triangleq  \frac{P_{\sqrt{T}}\widehat{f}_{\sqrt{T}}(z_{\sqrt{T}}) - P_{t}\widehat{f}_{t}(z_{t})}{t}.
\end{align*}
\paragraph{Bounding $M_t$: }Note that $S_t \triangleq t M_t$ is a martingale since $\E[\widehat{f}_{k-1}(z_k) | \fcal_{k-1}] = P_{k-1}\widehat{f}_{k-1}(z_{k-1})$. Furthermore, by Lemma \ref{lemma:poisson}:
\begin{align}
    |S_k - S_{k-1}| &= |\widehat{f}_{k-1}(z_k) - P_{k-1}\widehat{f}_{k-1}(z_{k-1})| \nonumber\\
    &\leq 2\norm{\widehat{f}_{k-1}}_\infty \nonumber\\
    &\leq 2 \norm{f}_\infty L_{k-1} .
\label{ineq:bound_martingale_difference}
\end{align}
Recall from Lemma \ref{lemma:Geometric_C_Navigation} that $L_k = \lcal(\varepsilon_k, \overline{\pi_k^o}, \omega_k)$ where:
\begin{align*}
    \lcal(\varepsilon, \pi, \omega) &\triangleq \frac{2}{\theta(\varepsilon,\pi,\omega) \big[1- \theta(\varepsilon,\pi,\omega)^{1/r}\big]} \\
    \theta(\varepsilon,\pi,\omega) &\triangleq 1 - \sigma(\varepsilon,\pi,\omega).\\
    \sigma(\varepsilon,\pi,\omega) &\triangleq \bigg[\varepsilon^r + \bigg((1-\varepsilon) A \min\limits_{s,a} \pi(a|s) \bigg)^r \bigg] \sigma_u \bigg(\min\limits_{z}\frac{\omega_u(z)}{\omega(z)}\bigg).
\end{align*}
Now for $T \geq \big(\frac{2}{\xi}\big)^{4(m+1)}$ and $k \geq \sqrt{T}$ we have:
\begin{align*}
    &|\varepsilon_k| = k^{\frac{-1}{2(m+1)}} \leq T^{\frac{-1}{4(m+1)}} \leq \xi/2.\\
    &\norm{\overline{\pi_k^o} -\pi^o(\mcal)}_\infty \leq \frac{T^{1/4}}{k} + \xi \leq \frac{1}{T^{1/4}}+ \xi \leq 2\xi.\\
    &\norm{\omega_k -\omega^\star}_1\leq \kappa_\mcal \big[T^{\frac{-1}{4(m+1)}} + \frac{T^{1/4}}{k} + \xi \big]
    \leq \kappa_\mcal \big[2T^{\frac{-1}{4(m+1)}} + \xi \big] \leq 2\kappa_\mcal \xi.
\end{align*}
Therefore:
\begin{flalign}
    L_k \leq L_\xi \triangleq &\sup\ \lcal(\varepsilon, \pi, \omega) \nonumber.\\
    &\scalebox{0.85}{$|\varepsilon| \leq \xi/2$}\nonumber\\
    &\scalebox{0.85}{$\norm{\pi -\pi^o(\mcal)}_\infty \leq 2\xi$}\nonumber\\
    &\scalebox{0.85}{$\norm{\omega -\omega^\star}_1 \leq 2\kappa_\mcal \xi$}  
\label{ineq:bound_L_k}
\end{flalign}
Using (\ref{ineq:bound_martingale_difference}), (\ref{ineq:bound_L_k}) and Azuma-Hoeffding inequality we get for all $t\geq T^{3/4}$:
\begin{align}
    \P\big(|M_t| \geq 2\norm{f}_\infty L_\xi \xi\big) &= \P\big(|S_t| \geq 2 t\norm{f}_\infty L_\xi \xi\big) \nonumber\\
    &= \P\big( |S_t - S_{\sqrt{T}}| \geq 2 t \norm{f}_\infty L_\xi \xi\big) \nonumber\\
    &\leq 2\exp\bigg(\frac{-t^2 \xi^2}{(t-\sqrt{T})}\bigg)\nonumber\\
    &\leq 2\exp\big(-t \xi^2\big).
 \label{ineq:M_t_bis}
\end{align}

\paragraph{Bounding $C_t$: } Using Lemma \ref{lemma:poisson} we have for all $T\geq \big(\frac{2}{\xi}\big)^{4(m+1)}$ and all $t\geq T^{3/4}$:
\begin{align}
|C_t| &\leq \norm{f}_\infty \frac{\sum\limits_{k=\sqrt{T}+1}^{t} L_k\bigg[ \norm{\omega_k - \omega_{k-1}}_1 + L_{k-1} D(\pi_k, \pi_{k-1}) \bigg] }{t} \nonumber\\
&\leq \norm{f}_\infty \frac{\sum\limits_{k=\sqrt{T}+1}^{t} L_\xi \bigg[  \norm{\omega_k - \omega^\star}_1 + \norm{\omega^\star - \omega_{k-1}}_1 + 2 L_\xi \big(T^{\frac{-1}{4(m+1)}} + \frac{T^{1/4}}{k-1} + \xi \big) \bigg] }{t} \nonumber \\
&\leq \norm{f}_\infty \frac{\sum\limits_{k=\sqrt{T}+1}^{t} L_\xi \bigg[ \kappa_\mcal \big(2T^{\frac{-1}{4(m+1)}} + 2\xi + \frac{T^{1/4}}{k} + \frac{T^{1/4}}{k-1} \big)  + 2 L_\xi \big(T^{\frac{-1}{4(m+1)}} + \frac{T^{1/4}}{k-1} + \xi \big) \bigg]}{t} \nonumber \\
&\leq 2\norm{f}_\infty (\kappa_\mcal L_\xi + L_\xi^2) \bigg[T^{\frac{-1}{4(m+1)}} + \xi + T^{1/4}\frac{\log(t)}{t} \bigg] \nonumber\\
&\leq 2\norm{f}_\infty (\kappa_\mcal L_\xi + L_\xi^2) \bigg[T^{\frac{-1}{4(m+1)}} + \xi + T^{1/4}\frac{1}{\sqrt{t}} \bigg] \nonumber\\
&\leq 2\norm{f}_\infty (\kappa_\mcal L_\xi + L_\xi^2)\bigg[T^{\frac{-1}{4(m+1)}} + \xi + T^{-1/8} \bigg] \nonumber\\
&\leq 4\norm{f}_\infty (\kappa_\mcal L_\xi + L_\xi^2)\xi,
\label{ineq:C_t_bis}    
\end{align}
where the second line comes from (\ref{ineq:bound_L_k}) and Fact 4 and the third line is due to Fact 3.
\paragraph{Bounding $R_t$: } Finally, by Lemma \ref{lemma:poisson} we have:
\begin{align}
\forall T\geq \big(\frac{2}{\xi}\big)^{4(m+1)},\ \forall t\geq T^{3/4},\    |R_t| &\leq \frac{\norm{\widehat{f}_{\sqrt{T}}}_\infty + \norm{\widehat{f}_t}_\infty}{t} \nonumber\\
        &\leq  \frac{\norm{f}_\infty (L_{\sqrt{T}} + L_t)}{t} \nonumber\\
        &\leq 2 \norm{f}_\infty L_\xi T^{-3/4} \nonumber\\
        &\leq 2 \norm{f}_\infty L_\xi \xi.
\label{ineq:R_t_bis}
\end{align}
Summing up the inequalities (\ref{ineq:D1_bis}-\ref{ineq:R_t_bis}) yields for all $T\geq \big(\frac{2}{\xi}\big)^{4(m+1)}$ and all $t\geq T^{3/4}$:
\begin{align}
 \P\bigg(\bigg|\frac{\sum\limits_{k=1}^{t} f(z_k)}{t} - \omega^\star(f) \bigg| \geq K_\xi \norm{f}_\infty \xi \bigg| \ccal_T^1(\xi) \bigg) \leq 2\exp\big(-t \xi^2\big).
 \label{ineq:local_concentration_f}
\end{align}
where $K_\xi \triangleq 1+2\kappa_\mcal + 4 L_\xi(1+\kappa_\mcal+L_\xi)$. Note that $\limsup\limits_{\xi \to 0} L_\xi = \lcal(0, \pi^o(\mcal), \omega^\star) < \infty$ \footnote{Refer to (\ref{eq:sigma_o}) and (\ref{eq:theta_o}) for a formal justification.} implying that $\limsup\limits_{\xi \to 0} K_\xi < \infty$. We get the final result by applying (\ref{ineq:local_concentration_f}) to indicator functions $\indicator_{s,a}(z)$ and using a union bound.
\end{proof}

\section{Technical Lemmas}\label{sec:appendix_technical}
\subsection{Upper bound on the norm of products of substochastic matrices}
Before we proceed with the lemma, we lay out some definitions. $\eta_1 \triangleq \min\big\{P_{\pi_u}(z,z')\ \big| (z,z')\in \zcal^2, P_{\pi_u}(z,z') > 0  \big\}$ denotes the minimum positive probability of transition in $\mcal$. Similarly define $\eta_2 \triangleq \min\big\{P^{n}_{\pi_u}(z,z')\ \big| (z,z')\in \zcal^2, n \in [|1,m+1|], P^{n}_{\pi_u}(z,z') > 0  \big\}$ the minimal probability of reaching some state-action pair $z'$ from any other state-action $z$ after $n\leq m+1$\footnote{Refer to the preamble of Appendix \ref{sec:appendix_sampling} for more detail.} transitions in the Markov chain induced by the uniform random policy. Finally, $\eta \triangleq \eta_1\eta_2$.
\begin{lemma}
 Fix some state-action $z$ and let $P_t$ be the transition matrix under some policy $\pi_t$ satisfying $\pi_t(a|s) \geq \epsilon_t \pi_u(a|s)$ for all $(s,a) \in \zcal$. Define the substochastic matrix $Q_t$ obtained by removing from $P_t$ the row and the column corresponding to $z$: 
 $$P_t = 
\begin{pmatrix}
  \begin{matrix}
  \\
  \quad Q_t \quad \\
  \\
  \end{matrix}
  & \rvline & [P_t(z',z)]_{z' \neq z} \\
\hline
   [P_t(z, z')]_{z' \neq z}^{T} & \rvline &
  \begin{matrix}
    P_t(z,z)
  \end{matrix}
\end{pmatrix}\;.
$$
Then we have:
\begin{equation*}
  \forall n\geq 1,\  \norm{\prod\limits_{l=n+1}^{n+m+1} Q_l}_\infty \leq 1 - \eta \prod\limits_{l=n+1}^{n+m+1} \epsilon_l.
\end{equation*}

\label{lemma:substochastic_matrices}
\end{lemma}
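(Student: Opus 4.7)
The plan is to reinterpret the operator norm on the left-hand side as a non-visit probability and then lower-bound the complementary visit probability using the communication structure of $\mcal$. The key observation is that multiplying by $Q_l$ amounts, probabilistically, to running one step of the chain while killing it upon visiting $z$.

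First, I would make the probabilistic reformulation precise. Since each $Q_l$ is the non-negative sub-stochastic matrix obtained from $P_l$ by deleting the row and column indexed by $z$, a straightforward induction shows that for every $z',z''\neq z$,
\[
\bigg(\prod_{l=n+1}^{n+m+1} Q_l\bigg)(z',z'') \;=\; \P_{z'}\!\bigl(X_{n+m+1}=z'',\ X_{n+j}\neq z\ \forall\, j\in[1,m+1]\bigr),
\]
where $(X_k)$ denotes the non-homogeneous chain with kernels $(P_{\pi_l})$ and $\P_{z'}$ is the law conditioned on $X_n=z'$. Summing over $z''\neq z$ and letting $\tau_z = \inf\{j\geq 1:X_{n+j}=z\}$ yields
\[
\norm{\prod_{l=n+1}^{n+m+1} Q_l}_\infty \;=\; \max_{z'\neq z}\P_{z'}(\tau_z>m+1) \;=\; 1-\min_{z'\neq z}\P_{z'}(\tau_z\leq m+1).
\]
It therefore suffices to prove $\P_{z'}(\tau_z\leq m+1)\geq \eta\prod_{l=n+1}^{n+m+1}\varepsilon_l$ for every $z'\neq z$.

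Second, I would exploit the communication of $\mcal$ to pick a convenient path. By Fact~1 at the start of Appendix~\ref{sec:appendix_sampling} (equivalently, equation (\ref{eq:communicating})), for every $z'\neq z$ there exist a policy $\pi$ and an integer $r=r_{z'}\in[1,m+1]$ with $P_\pi^{r}(z',z)>0$; since $\pi_u$ assigns positive probability to every action, the same positive-probability path is realizable under $\pi_u$, so $P_{\pi_u}^{r}(z',z)>0$ and hence $P_{\pi_u}^{r}(z',z)\geq \eta_2$ by definition of $\eta_2$.

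Third, the hypothesis $\pi_l\geq\varepsilon_l\pi_u$ (entrywise on actions) gives the entrywise matrix inequality $P_{\pi_l}\geq\varepsilon_l P_{\pi_u}$, and entrywise inequalities between non-negative matrices are preserved under products, so
\[
\prod_{l=n+1}^{n+r}P_{\pi_l}\;\geq\;\bigg(\prod_{l=n+1}^{n+r}\varepsilon_l\bigg)\,P_{\pi_u}^{r}.
\]
Reading off the $(z',z)$ entry and using $\tau_z\leq m+1$ as soon as $X_{n+r}=z$, I obtain
\[
\P_{z'}(\tau_z\leq m+1)\;\geq\;\P_{z'}(X_{n+r}=z)\;\geq\;\eta_2\prod_{l=n+1}^{n+r}\varepsilon_l\;\geq\;\eta_1\eta_2\prod_{l=n+1}^{n+m+1}\varepsilon_l\;=\;\eta\prod_{l=n+1}^{n+m+1}\varepsilon_l,
\]
where the last inequality pads the product up to $m+1$ factors using $\eta_1\leq 1$ and $\varepsilon_l\leq 1$. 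Combined with the first step, this concludes the proof. The only mild subtlety is the induction establishing the probabilistic interpretation of $\prod Q_l$; once it is in hand, everything reduces to the entrywise bound $P_{\pi_l}\geq\varepsilon_l P_{\pi_u}$ together with the definitions of $m$, $\eta_2$ and $\eta$.
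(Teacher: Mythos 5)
Your proof is correct, but it takes a genuinely different route from the paper's. The paper argues entirely at the level of row sums of the products $\prod_l Q_l$: it first exhibits a distinguished state-action $z'$ with $P_{\pi_u}(z',z)\geq\eta_1$, shows that the corresponding row sum of $Q_{n_1+1}$ is at most $1-\eta_1\epsilon_{n_1+1}$ and that row sums can only decrease under further multiplication by substochastic matrices, and then transfers this deficiency to every other row $i$ by splitting the product at a time $n_i\leq m+1$ at which $\big(\prod_{l=n+1}^{n+n_i}Q_l\big)_{ik^\star}\geq\eta_2\prod_{l}\epsilon_l$; the constants $\eta_1$ and $\eta_2$ enter at these two separate stages. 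You instead identify $\norm{\prod_{l=n+1}^{n+m+1}Q_l}_\infty$ with the worst-case taboo probability $\max_{z'\neq z}\P_{z'}(\tau_z>m+1)$ and lower-bound the complementary hitting probability in one shot via $\P_{z'}(X_{n+r}=z)\geq\big(\prod_{l=n+1}^{n+r}\epsilon_l\big)P_{\pi_u}^{r}(z',z)\geq\eta_2\prod_{l=n+1}^{n+r}\epsilon_l$, working with the full kernels rather than the killed ones. Your route is shorter and sidesteps a delicate point in the paper's argument, namely comparing an entry of $\prod_l Q_l$ (which only counts paths avoiding $z$) with an entry of $P_{\pi_u}^{n_i}$ (which counts all paths); the inclusion $\{X_{n+r}=z\}\subseteq\{\tau_z\leq m+1\}$ makes this a non-issue for you, since a path that hits $z$ early only helps. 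The only cosmetic cost is that your argument naturally yields the sharper bound $1-\eta_2\prod_{l=n+1}^{n+r}\epsilon_l$, and you recover the stated constant $\eta=\eta_1\eta_2$ by multiplying by $\eta_1\leq 1$ and padding the product with factors $\epsilon_l\leq 1$, which is legitimate and correctly stated.
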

\begin{proof}
Define $r_k(n_1, n_2) = \sum\limits_{j=1}^{SA-1} \bigg(\prod\limits_{l=n_1 + 1}^{n_2} Q_l\bigg)_{k j}$ the sum of the k-th row in the product of matrices $Q_l$ for $l \in [[n_1 +1, n_2|]$. We will prove that or all $i \in [|1,SA-1|]$: $r_i(n, n+m+1) \leq 1 - \eta \prod\limits_{l=n+1}^{n+m+1} \epsilon_l$. The result follows immediately by noting that $\norm{\prod\limits_{l=n+1}^{n+m+1} Q_l}_\infty = \max\limits_{i \in [|1,SA-1|]} r_i(n,n+m+1)$.\\
Consider $z'$ such that $P_{\pi_u}(z',z) \geq  \eta_1$ (such $z'$ always exists since $\mcal$ is communicating) and let $k^\star$ be the index of the row corresponding to $z'$ in $Q_t$. Then for all $n_1\geq1$:
\begin{align}
    r_{k^\star}(n_1,l=n_1+1) &=  \sum\limits_{j=1}^{SA-1} (Q_{n_1+1})_{k^\star j}\nonumber\\
    &= 1 - P_{n_1+1}(z',z)\nonumber\\
    &\leq 1- \eta_1\epsilon_{n_1+1}.
\label{eq:r_k_star}
\end{align}
Now for $n_1, n_2\geq 1$ we have:
\begin{align}
 r_{k^\star}(n_1,n_1+ n_2) &= \sum\limits_{j_1=1}^{SA-1} \bigg(\prod\limits_{l=n_1+1}^{n_1+n_2} Q_l\bigg)_{k^\star j_1} \nonumber\\
 &= \sum\limits_{j_1=1}^{SA-1} \sum\limits_{j_2=1}^{SA-1} \bigg(\prod\limits_{l=n_1+1}^{n_1+n_2 -1}Q_l\bigg)_{k^\star j_2} (Q_{n_1+n_2})_{j_2 j_1}\nonumber \\
 &= \sum\limits_{j_2=1}^{SA-1} \bigg(\prod\limits_{l=n_1+1}^{n_1+n_2 -1}Q_l\bigg)_{k^\star j_2} \bigg[\sum\limits_{j_1=1}^{SA-1} (Q_{n_1+n_2})_{j_2 j_1} \bigg] \nonumber\\
 &= \sum\limits_{j_2=1}^{SA-1} \bigg(\prod\limits_{l=n_1+1}^{n_1+n_2 -1}Q_l\bigg)_{k^\star j_2} r_{j_2}(n_1+n_2-1,n_1+n_2)\nonumber\\
 &\leq r_{k^\star}(n_1,n_1+n_2-1)\nonumber\\
 &\quad \vdots \nonumber\\
 &\leq r_{k^\star}(n_1,n_1+1)\nonumber\\
 &\leq 1 - \eta_1\epsilon_{n_1+1},
\label{eq:6}
\end{align}
where in the fifth line we use the fact that for all $j_2, a, b$: $r_{j_2}(a, b) \leq 1$ since the matrices $Q_l$ are substochastic. The last line comes from (\ref{eq:r_k_star}). Now for all other indexes $i \in [|1,SA-1|]$ we have:
\begin{align}
\forall n_1 \in [|1,m|],\ r_i(n, n+m+1) &= \sum\limits_{j_1=1}^{SA-1} \bigg(\prod\limits_{l=n+1}^{n+n_1}Q_l\ \times \prod\limits_{l=n+n_1+1}^{n+m+1}Q_l\bigg)_{ij_1}\nonumber \\ 
&= \sum\limits_{j_1=1}^{SA-1} \sum\limits_{j_2=1}^{SA-1} \bigg(\prod\limits_{l=n+1}^{n+n_1}Q_l\bigg)_{i j_2} \bigg(\prod\limits_{l=n+n_1+1}^{n+m+1}Q_l\bigg)_{j_2 j_1}\nonumber \\  
&= \sum\limits_{j_2=1}^{SA-1} \bigg(\prod\limits_{l=n+1}^{n+n_1}Q_l\bigg)_{i j_2}\ \sum\limits_{j_1=1}^{SA-1}  \bigg(\prod\limits_{l=n+n_1+1}^{n+m+1 }Q_l\bigg)_{j_2 j_1}\nonumber \\
&= \sum\limits_{j_2=1}^{SA-1} \bigg(\prod\limits_{l=n+1}^{n+n_1 }Q_l\bigg)_{i j_2} r_{j_2}(n+n_1,n+m+1) \nonumber\\
&\leq (1 - \eta_1\epsilon_{n+n_1+1})\bigg(\prod\limits_{l=n+1}^{n+n_1}Q_l\bigg)_{i k^\star} + \sum\limits_{j_2 \neq k^\star} \bigg(\prod\limits_{l=n+1}^{n+n_1}Q_l\bigg)_{i j_2} \nonumber\\
&\leq (1 - \eta_1\epsilon_{n+n_1+1})\bigg(\prod\limits_{l=n+1}^{n+n_1}Q_l\bigg)_{i k^\star} + 1 - \bigg(\prod\limits_{l=n+1}^{n+n_1}Q_l\bigg)_{i k^\star} \nonumber\\
&= 1 - \eta_1\epsilon_{n+n_1+1}\bigg(\prod\limits_{l=n+1}^{n+n_1}Q_l\bigg)_{i k^\star},
\label{eq:7}
\end{align}
where we used (\ref{eq:6}) and the fact that the matrix $\prod\limits_{l=n+1}^{n+n_1}Q_l$ is substochastic. Now since $\mcal$ is communicating then we can reach state-action $z'$ from any other state-action $z_i \in [|1,SA-1|]$,  after some $n_i\leq m+1$ steps in the Markov chain corresponding to the random uniform policy. In other words, if $i$ is the index corresponding to $z_i$ then there exists $n_i \leq m+1$, such that $(P_{\pi_u}^{n_i})_{i k^\star} \geq \eta_2 >0$. Therefore:
\begin{align}
\bigg(\prod\limits_{l=n+1}^{n+n_i}Q_l\bigg)_{i k^\star} &\geq \bigg(\prod\limits_{l=n+1}^{n+n_i} \epsilon_l P_{\pi_u}\bigg)_{i k^\star} \nonumber\\
&= \bigg(\prod\limits_{l=n+1}^{n+n_i} \epsilon_l \bigg)(P_{\pi_u}^{n_i})_{i k^\star}\nonumber\\
&\geq \eta_2 \prod\limits_{l=n+1}^{n+n_i} \epsilon_l.
\label{eq:8}
\end{align}
Thus, combining (\ref{eq:7}) for $n_1 = n_i$ and (\ref{eq:8}) we get: 
\begin{align*}
\forall i \in [|1,SA-1|],\ r_i(n, n+m+1) &\leq 1 - \eta_1\eta_2 \prod\limits_{l=n+1}^{n+n_i} \epsilon_l \\
&\leq 1 - \eta_1\eta_2 \prod\limits_{l=n+1}^{n+m+1} \epsilon_l \\
&= 1 - \eta \prod\limits_{l=n+1}^{n+m+1} \epsilon_l.
\end{align*}
\end{proof}

\subsection{Geometric ergodicity: a general result}
The following lemma is adapted from the proof of the Convergence theorem (Theorem 4.9, \cite{LevinPeresWilmer2006}).
\begin{lemma}
Let $P$ be a stochastic matrix with stationary distribution vector $\omega$. Suppose that there exist $\sigma > 0$ and an integer $r$  such that $P^r(s,s') \geq \sigma \omega(s')$ for all $(s,s')$. Let $W$ be a rank-one matrix whose rows are equal to $\omega\transpose$. Then:
$$
\forall n\geq 1,\ \norm{P^n - W}_\infty \leq 2 \theta^{\frac{n}{r}-1}
$$
where $\theta = 1 - \sigma$.
\label{lemma:geometric_ergodicity}
\end{lemma}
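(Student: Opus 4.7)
\textbf{Proof plan for Lemma \ref{lemma:geometric_ergodicity}.} The approach is the classical Doeblin minorization argument. First, I would use the hypothesis $P^r(s,s') \geq \sigma \omega(s')$ to write the decomposition
\[
P^r = \sigma W + \theta Q,
\]
where $Q$ is defined entrywise by $Q(s,s') = (P^r(s,s') - \sigma\omega(s'))/\theta$. The minorization makes the entries of $Q$ nonnegative, and summing over $s'$ (using that $P^r$ is stochastic and $\omega$ is a probability vector) shows that $Q$ is stochastic.

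Next I would record four commutation identities that will drive the computation: $W^2 = W$ (since all rows of $W$ equal $\omega^\transpose$ and $\omega$ is a probability vector), $PW = W$ (each row of $PW$ is a convex combination of copies of $\omega^\transpose$), and $WP = W$ (which is the stationarity $\omega^\transpose P = \omega^\transpose$). From these and the decomposition above one deduces $\omega^\transpose Q = \omega^\transpose$, hence also $WQ = W$ and $QW = W$.

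The key step, and where the algebraic miracle happens, is expanding $(P^r)^k = (\sigma W + \theta Q)^k$. Since $W$ absorbs $Q$ on both sides ($WQ = QW = W^2 = W$), every monomial in the expansion that contains at least one factor of $W$ collapses to $W$. Collecting terms and using $\sum_{j=0}^{k} \binom{k}{j}\sigma^j \theta^{k-j} = 1$ yields
\[
(P^r)^k = \theta^k Q^k + (1 - \theta^k) W,
\]
so $(P^r)^k - W = \theta^k (Q^k - W)$. Since $Q^k$ and $W$ are both stochastic, $\|Q^k - W\|_\infty \leq 2$, giving $\|P^{kr} - W\|_\infty \leq 2\theta^k$.

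Finally, for arbitrary $n \geq 1$, write $n = kr + s$ with $0 \leq s < r$. Using $P^s W = W$ (iterating $PW = W$), we obtain
\[
P^n - W = P^s (P^{kr} - W),
\]
so $\|P^n - W\|_\infty \leq \|P^s\|_\infty \cdot 2\theta^k = 2\theta^k$. The conclusion follows from $k = \lfloor n/r \rfloor \geq n/r - 1$ together with $\theta \in (0,1)$, which gives $\theta^k \leq \theta^{n/r - 1}$. The only slightly subtle point is the binomial-style expansion in the third step; everything else is bookkeeping.
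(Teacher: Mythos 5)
Your proof is correct and follows essentially the same route as the paper: the Doeblin decomposition $P^r=\sigma W+\theta Q$ with $Q$ stochastic, the absorption identities for $W$, the resulting identity $(P^r)^k=\theta^k Q^k+(1-\theta^k)W$ (which the paper obtains by induction rather than by your direct expansion), and the bound $2\theta^{\lfloor n/r\rfloor}\leq 2\theta^{n/r-1}$. The only cosmetic difference is that you handle the remainder $n=kr+s$ by left-multiplying by $P^s$ while the paper right-multiplies by $P^j$; both work equally well.
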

\begin{proof}
We write: $P^r = (1-\theta) W + \theta Q$ where $Q$ is a stochastic matrix. Note that $W P^k = W$  for all $k\geq 0$ since $\omega\transpose = \omega\transpose P$. Furthermore $M W = W$ for all stochastic matrices since all rows of $W$ are equal. Using these properties, we will show by induction that $P^rk = (1-\theta^k) W + \theta^k Q^k$: \\
For $k=1$ the result is trivial. Now suppose that $P^rk = (1-\theta^k) W + \theta^k Q^k$. Then:
\begin{align*}
    P^r(k+1) &= P^rk P^r\\
    &= [(1-\theta^k) W + \theta^k Q^k] P^r\\
    &= (1-\theta^k) W P^r + (1-\theta)\theta^k Q^k W + \theta^{k+1} Q^{k+1}\\
    &= (1 - \theta^k) W + 1-\theta)\theta^k W + \theta^{k+1} Q^{k+1}\\
    &= (1 - \theta^{k+1}) W + \theta^{k+1} Q^{k+1}.
\end{align*}
Therefore the result holds for all $k\geq 1$. Therefore $P^{rk +j} - W = \theta^k (Q^k P^j - W)$ which implies:
\begin{align*}
\forall n = rk+j \geq 1,\  \norm{P^n - W}_\infty &\leq \theta^k \norm{Q^k P^j - W}_\infty\\
&\leq 2 \theta^k = 2 \theta^{\floor{\frac{n}{r}}} \leq 2 \theta^{\frac{n}{r}-1}.
\end{align*}
\end{proof}

\subsection{Condition number of Markov Chains}
\begin{lemma}(Theorem 2 in \cite{Schweitzer68})
Let $P_1$ (resp. $P_2$) be the transition kernel of a Markov Chain with stationary distribution $\omega_1$ (resp. $\omega_2$). Define $Z_1 \triangleq (I -P_1 + \mathbbm{1}\omega_1\transpose)^{-1}$. Then: 
\begin{align*}
    \omega_2\transpose - \omega_1\transpose = \omega_2\transpose [P_2 - P_1] Z_1 \quad \textrm{and}\quad \norm{\omega_2 - \omega_1}_1 \leq \kappa_1 \norm{P_2 - P_1}_\infty.
\end{align*}
where $\kappa_1 \triangleq \norm{Z_1}_\infty$. Crucially, in our setting this implies that there exists a constant $\kappa_\mcal$ that only depends on $\mcal$ such that for all $\pi$:
\begin{align*}
    \norm{\omega_\pi - \omega^\star}_1 \leq \kappa_\mcal \norm{P_\pi - P_{\pi^o}}_\infty. 
\end{align*}
where $\kappa_\mcal \triangleq \norm{Z_{\pi^o}}_\infty = \norm{(I -P_{\pi^o} + \mathbbm{1}{\omega^\star} \transpose)^{-1}}_\infty$.
\label{lemma:Schweitzer}
\end{lemma}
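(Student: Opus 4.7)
\textbf{Proof proposal for Lemma \ref{lemma:Schweitzer}.} The plan is to establish the identity $\omega_2^\transpose - \omega_1^\transpose = \omega_2^\transpose[P_2-P_1]Z_1$ by direct algebraic manipulation, and then obtain the norm bound from the sub-multiplicativity of the operator norm.

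\emph{Step 1 (invertibility of $Z_1^{-1}$).} I would first check that $I - P_1 + \mathbbm{1}\omega_1^\transpose$ is invertible. The eigenspace of $P_1$ at eigenvalue $1$ is spanned by $\mathbbm{1}$ (uniqueness of the stationary distribution in the ergodic case), and on this eigenspace $\mathbbm{1}\omega_1^\transpose \mathbbm{1} = \mathbbm{1}$, so $I - P_1 + \mathbbm{1}\omega_1^\transpose$ sends $\mathbbm{1}$ to $\mathbbm{1}$; on the complementary invariant subspace (where $\omega_1^\transpose v = 0$), it acts as $I - P_1$, which is invertible there. Hence $Z_1$ is well-defined.

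\emph{Step 2 (the action of $Z_1$ on $\omega_1^\transpose$).} Using $\omega_1^\transpose P_1 = \omega_1^\transpose$ and $\omega_1^\transpose \mathbbm{1} = 1$, I compute
\[
\omega_1^\transpose (I - P_1 + \mathbbm{1}\omega_1^\transpose) = \omega_1^\transpose - \omega_1^\transpose + (\omega_1^\transpose \mathbbm{1})\omega_1^\transpose = \omega_1^\transpose,
\]
so $\omega_1^\transpose Z_1 = \omega_1^\transpose$.

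\emph{Step 3 (Poisson-type identity).} Writing $I - P_1 = Z_1^{-1} - \mathbbm{1}\omega_1^\transpose$, right-multiplication by $Z_1$ gives $(I - P_1)Z_1 = I - \mathbbm{1}\omega_1^\transpose Z_1 = I - \mathbbm{1}\omega_1^\transpose$, where the last equality uses Step 2. Then, using $\omega_2^\transpose P_2 = \omega_2^\transpose$ and $\omega_2^\transpose \mathbbm{1} = 1$,
\[
\omega_2^\transpose[P_2 - P_1]Z_1 = \omega_2^\transpose(I - P_1)Z_1 = \omega_2^\transpose - (\omega_2^\transpose \mathbbm{1})\omega_1^\transpose = \omega_2^\transpose - \omega_1^\transpose,
\]
which is the desired identity.

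\emph{Step 4 (norm bound).} For any row vector $v$ and matrix $M$, the inequality $\|vM\|_1 \leq \|v\|_1 \|M\|_\infty$ (exchange of sums, then maximize the row sums of $M$) and the sub-multiplicativity $\|AB\|_\infty \leq \|A\|_\infty \|B\|_\infty$ yield
\[
\|\omega_2 - \omega_1\|_1 = \|\omega_2^\transpose [P_2-P_1] Z_1\|_1 \leq \|\omega_2\|_1 \, \|P_2-P_1\|_\infty \, \|Z_1\|_\infty = \kappa_1 \, \|P_2-P_1\|_\infty,
\]
since $\|\omega_2\|_1 = 1$. Specializing to $P_1 = P_{\pi^o}$ and $P_2 = P_\pi$ (with $\omega_1 = \omega^\star$, $\omega_2 = \omega_\pi$) gives the claim with $\kappa_\mcal = \|Z_{\pi^o}\|_\infty$.

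Since the statement is classical (Schweitzer 1968), there is no genuine obstacle; the only mildly delicate point is Step 1, verifying that $Z_1$ exists, which relies on the ergodicity/uniqueness hypothesis implicit in speaking of ``the'' stationary distribution of $P_1$.
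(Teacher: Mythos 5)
Your proof is correct. The paper itself gives no argument for this lemma --- it is imported verbatim as Theorem~2 of Schweitzer (1968) --- so there is nothing to compare against; your derivation is the standard one: invertibility of $I-P_1+\mathbbm{1}\omega_1\transpose$ via the spectral decomposition into $\mathrm{span}(\mathbbm{1})$ and $\{v:\omega_1\transpose v=0\}$ (which needs simplicity of the eigenvalue $1$, guaranteed here by ergodicity of $P_{\pi^o}$), the fixed-point identity $\omega_1\transpose Z_1=\omega_1\transpose$, the resulting formula $(I-P_1)Z_1=I-\mathbbm{1}\omega_1\transpose$, and the duality inequality $\norm{vM}_1\le\norm{v}_1\norm{M}_\infty$ together with sub-multiplicativity for the norm bound. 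All four steps check out, and the specialization to $P_1=P_{\pi^o}$, $P_2=P_\pi$ is immediate.
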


\subsection{Properties of Poisson equation's solutions}
\begin{lemma}
Let $P_\pi$ be a Markov transition kernel satisfying the assumptions \hyperlink{assumption:B1}{(B1)} and \hyperlink{assumption:B3}{(B3)} and denote by $\omega_\pi$ its stationary distribution. Then for any a bounded function $f: \zcal \to \mathbb{R}^{+}$, the function defined by $\widehat{f}_\pi(.) \triangleq \sum\limits_{n\geq0} P_\pi^n[f - \omega_{\pi}(f)](.)$ is well defined and is solution to the Poisson equation $\big(\widehat{f}_\pi - P_\pi\widehat{f}_\pi\big)(.) = f(.) - \omega_{\pi}(f)$. Furthermore:
\begin{equation*}
  \norm{\widehat{f}_\pi}_{\infty} \leq L_{\pi} \norm{f}_{\infty},
\end{equation*}
and for any pair of kernels $P_\pi, P_{\pi'}$:
\begin{equation*}
   \norm{P_{\pi'}\widehat{f}_{\pi'}-P_\pi\widehat{f}_\pi}_{\infty} \leq L_{\pi'} \norm{f}_\infty \big[ \norm{\omega_{\pi'} -\omega_\pi}_1 + L_\pi D(\pi',\pi) \big].
\end{equation*}
\label{lemma:poisson}
\end{lemma}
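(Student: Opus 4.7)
The plan is to establish the norm bound and Poisson equation for $\widehat{f}_\pi$ first, then derive the perturbation bound via a decomposition of $P_{\pi'}\widehat f_{\pi'} - P_\pi \widehat f_\pi$ that isolates the contributions of the two stationary distributions and of the two kernels.

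For well-definedness and the first bound, I would write $\bar f_\pi := f - \omega_\pi(f)\mathbbm{1}$ and use the identity $P_\pi^n \bar f_\pi = (P_\pi^n - W_\pi) f$, which follows from $P_\pi^n\mathbbm{1} = \mathbbm{1}$ and $W_\pi f = \omega_\pi(f)\mathbbm{1}$. By \hyperlink{assumption:B3}{(B3)}, $\|P_\pi^n \bar f_\pi\|_\infty \leq C_\pi \rho_\pi^n \|f\|_\infty$ for $n \geq 1$, so the series $\widehat f_\pi = \sum_{n\geq 0} P_\pi^n \bar f_\pi$ converges absolutely and summing the geometric bound yields $\|\widehat f_\pi\|_\infty \leq L_\pi \|f\|_\infty$. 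Telescoping the partial sums gives $\widehat f_\pi - P_\pi \widehat f_\pi = \bar f_\pi$. Note that this canonical choice satisfies $\omega_\pi(\widehat f_\pi) = 0$.

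For the perturbation bound, decompose
\begin{equation*}
P_{\pi'}\widehat f_{\pi'} - P_\pi \widehat f_\pi = P_{\pi'}(\widehat f_{\pi'} - \widehat f_\pi) + (P_{\pi'} - P_\pi)\widehat f_\pi.
\end{equation*}
The second term is controlled by $D(\pi',\pi) \|\widehat f_\pi\|_\infty \leq L_\pi D(\pi',\pi)\|f\|_\infty$. For the first, set $h := \widehat f_{\pi'} - \widehat f_\pi$; subtracting the two Poisson equations yields $(I - P_{\pi'}) h = g$ with
\begin{equation*}
g := (\omega_\pi(f) - \omega_{\pi'}(f))\mathbbm{1} + (P_{\pi'} - P_\pi)\widehat f_\pi.
\end{equation*}
A short algebraic computation (using $\omega_{\pi'} P_{\pi'} = \omega_{\pi'}$, $P_{\pi'} W_\pi = W_\pi$, and the Poisson equation for $\widehat f_\pi$) verifies $\omega_{\pi'}(g) = 0$. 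The centered quantity $\tilde h := h - \omega_{\pi'}(h)\mathbbm{1}$ then solves the same Poisson equation with $\omega_{\pi'}(\tilde h)=0$, so one can take $\tilde h = \sum_{n\geq 0} P_{\pi'}^n g$. Since $W_{\pi'} g = 0$, every term $P_{\pi'}^n g = (P_{\pi'}^n - W_{\pi'}) g$ is geometrically small by \hyperlink{assumption:B3}{(B3)}, and a careful summation (in which the $n=0$ contribution is absorbed into $L_{\pi'}$ using $C_{\pi'} + L_{\pi'}\rho_{\pi'} = L_{\pi'}$) gives $\|\tilde h\|_\infty \leq L_{\pi'}\|g\|_\infty$. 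Combining with $\|g\|_\infty \leq \|\omega_{\pi'}-\omega_\pi\|_1\|f\|_\infty + L_\pi D(\pi',\pi)\|f\|_\infty$ and with the shift bound $|\omega_{\pi'}(h)| = |(\omega_{\pi'}-\omega_\pi)(\widehat f_\pi)| \leq L_\pi\|\omega_{\pi'}-\omega_\pi\|_1\|f\|_\infty$ (using $\omega_\pi(\widehat f_\pi)=0$ and $\omega_{\pi'}(\widehat f_{\pi'})=0$) yields the stated inequality.

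The main technical delicacy is keeping track of the constant shift $\omega_{\pi'}(h)\mathbbm{1}$ that separates $h$ from the canonical Poisson solution $\tilde h$: naively bounding $\|h\|_\infty$ directly would miss the cancellation and inflate the prefactor of $\|\omega_{\pi'}-\omega_\pi\|_1$ beyond $L_{\pi'}$. Exploiting that both $\widehat f_\pi$ and $\widehat f_{\pi'}$ have zero mean under their respective stationary measures is what makes the shift term $|\omega_{\pi'}(h)|$ already linear in $\|\omega_{\pi'}-\omega_\pi\|_1$ rather than of order one, and keeps the final prefactor in line with what is needed in the bound on $C_t$ used in the sample-complexity proof.
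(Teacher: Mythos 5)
Your argument is correct in its essentials but follows a genuinely different (and in fact more careful) route than the paper's. The paper splits $P_{\pi'}\widehat{f}_{\pi'}-P_\pi\widehat{f}_\pi$ termwise into $A=\sum_{n\geq 1}P_{\pi'}^n\big[\omega_\pi(f)-\omega_{\pi'}(f)\big]$ and $B=\sum_{n\geq 1}\big[P_{\pi'}^n-P_\pi^n\big]\big[f-\omega_\pi(f)\big]$, identifies $B$ as solving $(I-P_{\pi'})B=(P_{\pi'}-P_\pi)\widehat{f}_\pi$, and bounds the two pieces by $L_{\pi'}\norm{\omega_{\pi'}-\omega_\pi}_1\norm{f}_\infty$ and $L_{\pi'}L_\pi D(\pi',\pi)\norm{f}_\infty$ respectively. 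As written, each of $A$ and $B$ is individually a divergent series: the summand of $A$ is a fixed nonzero constant, and the source of $B$'s Poisson equation has $\omega_{\pi'}$-mean $\omega_{\pi'}(f)-\omega_\pi(f)\neq 0$, so only the sum $A+B$ is well defined. Your decomposition through $h=\widehat{f}_{\pi'}-\widehat{f}_\pi$, the explicit verification that the combined source $g$ satisfies $\omega_{\pi'}(g)=0$, and the use of the centered solution $\tilde h=\sum_{n\geq 0}P_{\pi'}^n g$ make precisely the cancellation that the paper leaves implicit; your proof can be read as a rigorous repair of the published argument. The supporting facts you invoke (the identity $P_\pi^n[f-\omega_\pi(f)]=(P_\pi^n-W_\pi)f$, the zero-mean property $\omega_\pi(\widehat{f}_\pi)=0$, and uniqueness of the mean-zero Poisson solution for an ergodic finite chain) all check out.

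The one flaw is the final claim that the assembly ``yields the stated inequality.'' Adding your three bounds, $\norm{P_{\pi'}\tilde h}_\infty\leq L_{\pi'}\norm{g}_\infty$, $|\omega_{\pi'}(h)|\leq L_\pi\norm{\omega_{\pi'}-\omega_\pi}_1\norm{f}_\infty$, and $\norm{(P_{\pi'}-P_\pi)\widehat{f}_\pi}_\infty\leq L_\pi D(\pi',\pi)\norm{f}_\infty$, gives $(L_{\pi'}+L_\pi)\norm{\omega_{\pi'}-\omega_\pi}_1\norm{f}_\infty+(1+L_{\pi'})L_\pi D(\pi',\pi)\norm{f}_\infty$, which exceeds the stated right-hand side by the additive terms $L_\pi\norm{\omega_{\pi'}-\omega_\pi}_1\norm{f}_\infty+L_\pi D(\pi',\pi)\norm{f}_\infty$; these cannot be absorbed into the claimed constants. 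The discrepancy is immaterial for every use of the lemma in the paper --- the bounds on $C_t$ and $R_t$ in the proofs of Proposition \ref{proposition:ergodic_thm} and Proposition \ref{proposition:concentration_visits_informal_1} only require an inequality of this shape with prefactors that remain bounded whenever $L_\pi$ and $L_{\pi'}$ do --- but you should either state the weaker constant you actually obtain or find a sharper assembly rather than assert the verbatim statement.
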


\begin{proof}
We will prove that $\widehat{f}_\pi$ is well defined. Checking that it satisfies the Poisson equation is straightforward. Observe that:
\begin{align*}
    \widehat{f}_\pi(.) &\triangleq \sum\limits_{n\geq0} P_\pi^n[f - \omega_{\pi}(f)](.)\\
    &= \sum\limits_{n\geq0} [P_\pi^n - W_\pi] [f - \omega_{\pi}(f)](.),\\
\end{align*}
where the second equality is because $(W_\pi f)(z) = \omega_{\pi}(f)$ for all $z\in \zcal$. From the last expression, we see that the sum defining $\widehat{f}_\pi$ converges and we have the first bound $\norm{\widehat{f}_\pi}_{\infty} \leq L_{\pi} \norm{f}_\infty $. Now for the second bound, we write:
\begin{align}
  (P_{\pi'}\widehat{f}_{\pi'}-P_\pi\widehat{f}_\pi)(.) &= \sum\limits_{n\geq 1} \bigg[P_{\pi'}^n \big[\omega_\pi(f) - \omega_{\pi'}(f)\big] + \big[P_{\pi'}^n - P_{\pi}^n \big] \big[f - \omega_\pi(f)\big] \bigg](.)\nonumber\\
  &= \underbrace{\sum\limits_{n\geq 1} P_{\pi'}^n \big[\omega_\pi(f) - \omega_{\pi'}(f)\big](.)}\limits_{A(.)} + \underbrace{\sum\limits_{n\geq 1}\big[P_{\pi'}^n - P_{\pi}^n \big] \big[f - \omega_\pi(f)\big](.)}\limits_{B(.)}.
\label{eq:A+B}
\end{align}
Using the same trick as before we obtain: 
\begin{align}
\norm{A}_\infty &\leq \norm{f}_\infty  C_{\pi'}(1-\rho_{\pi'})^{-1} \norm{\omega_\pi -\omega_{\pi'}}_1 \nonumber\\
&= \norm{f}_\infty L_{\pi'} \norm{\omega_\pi -\omega_{\pi'}}_1.
\label{ineq:A}
\end{align} 
On the other hand, a simple calculation shows that $(B - P_{\pi'}B) (.) = (P_{\pi'} - P_\pi)\widehat{f}_\pi (.)$, ie $B$ is solution to the modified Poisson equation where the right hand side is $(P_{\pi'} - P_\pi)\widehat{f}_\pi$. Therefore:
\begin{align*}
    B(.) =  \sum\limits_{n\geq0} P_{\pi'}^n \big[(P_{\pi'} - P_\pi)\widehat{f}_\pi \big](.).
\end{align*}
and
\begin{align}
   \norm{B}_\infty &\leq L_{\pi'}\norm{(P_{\pi'} - P_\pi)\widehat{f}_\pi}_\infty \nonumber\\
   &\leq  L_{\pi'} D(\pi',\pi) \norm{\widehat{f}_\pi}_\infty  \nonumber\\
   &\leq L_{\pi} L_{\pi'} D(\pi',\pi) \norm{f}_\infty.
\label{ineq:B}
\end{align}
Summing up equation (\ref{eq:A+B}) and inequalities (\ref{ineq:A}-\ref{ineq:B}) ends the proof.
\end{proof}





\end{document}